\title{Dimension Agnostic Testing of Survey Data Credibility through the Lens of Regression}
\author{%
  Debabrota Basu\\
  \'Equipe Scool, Univ. Lille, Inria,\\ 
  CNRS, Centrale Lille, UMR 9189- CRIStAL\\ 
  F-59000 Lille, France\\
  \And
  Sourav Chakraborty \\
  Indian Statistical Institute \\
  Kolkata, India\\
  \And
  Debarshi Chanda \\
  Indian Statistical Institute \\
  Kolkata, India\\
  \And
  Buddha Dev Das \\
  Indian Statistical Institute \\
  Kolkata, India\\
  \And
  Arijit Ghosh \\
  Indian Statistical Institute \\
  Kolkata, India\\
  \And
  Arnab Ray \\
  Indian Statistical Institute \\
  Kolkata, India\\
}
\pgfplotsset{compat=1.18}
\theoremstyle{plain}
\newtheorem{theorem}{Theorem}
\newtheorem{lemma}[theorem]{Lemma}
\theoremstyle{definition}
\newtheorem{definition}[theorem]{Definition}
\newtheorem{assumption}{Assumption}
\newtheorem{remark}[theorem]{Remark}
\newcommand{\field}[1]{\mathbb{#1}}
\newcommand{\R}{\field{R}}
\newcommand\inn[2]{ \left\langle {#1} \,,\, {#2} \right\rangle }
\newcommand{\norm}[1]{\left\|{#1}\right\|}
\newcommand{\func}[3]{{#1} : {#2} \rightarrow {#3}}
\DeclarePairedDelimiter{\ceil}{\lceil}{\rceil}
\newcommand{\defeq}{\stackrel{\rm def}{=}}
\newcommand{\blue}[1]{\textcolor{blue}{#1}}
\newcommand{\red}[1]{\textcolor{red}{#1}}
\newcommand{\fbrac}[1]{\left({#1}\right)}
\newcommand{\sbrac}[1]{\left\{{#1}\right\}}
\newcommand{\tbrac}[1]{\left[{#1}\right]}
\newcommand{\abs}[1]{\left|{#1}\right|}
\newcommand{\argmin}{\mathop{\mathrm{argmin}}}
\DeclareMathOperator*{\E}{\field{E}}
\DeclareMathOperator*{\Var}{\mathrm{Var}}
\newcommand{\distribution}{\sD}
\newcommand{\normal}{\sN}
\newcommand{\approxerror}{\epsilon}
\newcommand{\confidence}{\delta}
\newcommand{\bigomega}[1]{\Omega\fbrac{{#1}}}
\newcommand{\sD}{\mathcal{D}}
\newcommand{\sF}{\mathcal{F}}
\newcommand{\sG}{\mathcal{G}}
\newcommand{\sL}{\mathcal{L}}
\newcommand{\sM}{\mathcal{M}}
\newcommand{\sN}{\mathcal{N}}
\newcommand{\sR}{\mathcal{R}}
\newcommand{\sX}{\mathcal{X}}
\newcommand{\sY}{\mathcal{Y}}
\newcommand{\sZ}{\mathcal{Z}}
\renewcommand{\d}{d} 
\newcommand{\n}{m}
\newcommand{\datapoint}{z}
\newcommand{\datadomain}{\sZ}
\newcommand{\xdomain}{\sX}
\newcommand{\yrange}{\sY}
\newcommand{\distname}{\texttt{FDD}}
\newcommand{\sbound}{\sM}
\newcommand{\covariatei}{x} 
\newcommand{\covariate}{\mathbf{\covariatei}} 
\newcommand{\covariatematrix}{\MakeUppercase{\covariate}}
\newcommand{\covariatedomain}{\R^\d}    
\newcommand{\boundonx}{\zeta} 
\newcommand{\response}{y}  
\newcommand{\responserange}{\R}    
\newcommand{\coeff}{\boldsymbol{\theta}} 
\newcommand{\truecoeff}{\coeff^*}
\newcommand{\empcoeff}{\widehat{\coeff}}
\newcommand{\sparsity}{s}
\newcommand{\regnoise}{\eta}    
\newcommand{\regnoisesd}{\sigma_\regnoise}
\newcommand{\regnoisevariance}{\regnoisesd^2}
\newcommand{\kernel}{\mathbf{K}}
\newcommand{\kernelfunction}{\mathbf{\Phi}}
\newcommand{\hilbertspace}{\mathbb{H}}
\newcommand{\kernelset}{\regressionset_{\kernel}}
\newcommand{\kernelnormbound}{r}
\newcommand{\lasso}{\texttt{Lasso}}
\newcommand{\ridge}{\texttt{Ridge}}
\newcommand{\kerna}{\texttt{Kernel}}
\newcommand{\hypothesisset}{\regressionset}
\newcommand{\hypothesis}{\regression}
\newcommand{\losshypothesisset}{\sG}
\newcommand{\losshypothesis}{g}
\newcommand{\regressionset}{\sF}
\newcommand{\regression}{f}
\newcommand{\boundedlinfunc}{\regressionset}
\newcommand{\lassoset}{\regressionset_1}
\newcommand{\ridgeset}{\regressionset_2}
\newcommand{\empradcomp}[2]{\widehat{\Re}_{#1}\fbrac{#2}}
\newcommand{\radcomp}[1]{\Re\fbrac{#1}}
\newcommand{\totalsampD}{\tau}
\newcommand{\loss}{\sL} 
\newcommand{\lossbound}{M}
\newcommand{\losslipschitz}{\mu}
\newcommand{\Lspace}{\texttt{L}}
\DeclareMathAlphabet\mathbfcal{OMS}{cmsy}{b}{n}
\newcommand{\rademacher}{r}
\newcommand{\distset}{\mathbfcal{D}}
\newcommand{\sampleD}{t}
\newcommand{\linweight}{1}
\newcommand{\sampD}{\sD_{\surveyset}}
\newcommand{\D}{\sD^*}
\newcommand{\valD}{\D}
\newcommand{\tol}{\epsilon}
\newcommand{\distF}{\texttt{dist}_{\D}}
\newcommand{\distltwo}{\texttt{dist}_{\sD}}
\newcommand{\optimalf}{f^*}
\newcommand{\surveyf}{f_\surveyset}
\newcommand{\optimalsurveyf}{f^*_\surveyset}
\newcommand{\sizeS}{m}
\newcommand{\estimatevarD}{\hat{\gamma}}
\newcommand{\SurVerif}{\ensuremath{\texttt{SurVerify}}}
\newcommand{\surveyset}{S}
\newcommand{\errcoeff}{\coeff_{e}}
\newcommand{\mineigen}{\lambda_{\min}}
\newcommand{\acsincome}{\texttt{ACS\_Income}}
\newtcolorbox{note}[1][]
{
coltitle=black,
colbacktitle=green,
colback=green!10,
arc=1pt,
boxrule=1pt,
title=#1 
}
\begin{document}

\maketitle
\doparttoc
\faketableofcontents

\begin{abstract}
  Assessing whether a sample survey credibly represents the population is a critical question for ensuring the validity of downstream research. Generally, this problem reduces to estimating the distance between two high-dimensional distributions, which typically requires a number of samples that grows exponentially with the dimension. However, depending on the model used for data analysis, the conclusions drawn from the data may remain consistent across different underlying distributions. In this context, we propose a task-based approach to assess the credibility of sampled surveys. Specifically, we introduce a model-specific distance metric to quantify this notion of credibility. We also design an algorithm to verify the credibility of survey data in the context of regression models. Notably, the sample complexity of our algorithm is independent of the data’s dimension. This efficiency stems from the fact that the algorithm focuses on verifying the credibility of the survey data rather than reconstructing the underlying regression model. Furthermore, we show that if one attempts to verify credibility by reconstructing the regression model, the sample complexity scales linearly with the dimensionality of the data. 
 We prove the theoretical correctness of our algorithm and numerically demonstrate our algorithm's performance.
\end{abstract} 

\section{Introduction}\label{section:introduction}

Socio-economic surveys are conducted globally to collect data on population characteristics for a variety of purposes, including demographic and economic analyses, educational planning, poverty assessments, exit poll evaluations, and measuring progress toward national goals~\citep{Groves2011survey,Kenny2021_US-Census}.
The primary aim of many surveys is to support inference-driven analyses that uncover patterns to inform future research and policy decisions~\citep{heeringa2017applied,statscanada}, as well as to monitor and evaluate the long-term impacts of various policies~\citep{banerjee2020governance}. These survey datas serve as long-term benchmarks for validating research hypotheses~\citep{salant1994conduct,statscanada}. Therefore, verifying the credibility of such survey data is essential to ensure the validity of downstream analyses.

Ideally, properly collected data should be a faithful representation of the population, and representative data should ensure the validity of subsequent research. 
However, in practice, survey data rarely reflect the population perfectly~\citep{Maul03042017,Isakov2020Towards}. In the social sciences, it is rare to find large-scale surveys that do not employ stratified or multistage sampling techniques~\citep{Groves2011survey,Lohr2021_survey-sampling,Kalton2020_survey-sampling}.
In practice, these surveys are often carried out under logistical constraints. 

Determining whether a collected sample accurately represents the population is a longstanding challenge in both statistics and computer science—often framed in the latter as the problem of measuring the closeness between two distributions~\citep{Batu2000_equivalence,CanonneTopicsDT2022}. In other words, verifying representativeness is inherently inefficient and resource-intensive. In many cases, data collectors do not even claim that their samples are representative. Nevertheless, such data are routinely used for population-level research. Naturally, this raises the question of how much trust one can place in the resulting analyses. The answer hinges on the ``credibility'' of the data.
In this paper, we propose a principled approach to quantify the credibility of survey data, along with an efficient method for doing so.

A key observation is that if the goal is merely to ensure the validity of research conducted using the data, then verifying whether the data is fully representative of the population may be unnecessary—or even excessive. In such cases, traditional methods for assessing representativeness may be too rigid or resource-intensive to be practical.
Specifically, if the analysis relies on a well-established class of inference tools, \textit{we should be able to certify that any conclusions drawn using these tools from the given survey are valid, regardless of whether the data perfectly mirrors the population.}

 

One widely used and interpretable method for analyzing survey data is fitting a regression model. For example, \cite{Balia2008_survey} utilizes data from the British Health and Lifestyle Survey (1984-1985) and its longitudinal follow-up in May 2003 to demonstrate a strong association between mortality and socio-economic status.
Motivated by such applications, in this paper, we ask the following question:
\begin{center}
\textit{Can we verify whether the conclusions drawn from a regression model fitted on a given survey dataset would yield similar results if applied to the entire population?}
\end{center}


Conducting large-scale sample surveys is often complex and costly, which can result in compromised data quality. However, it is commonly assumed that collecting a small number of additional high-quality data points can help validate the overall dataset.
Building on this idea, our approach to the question above involves leveraging a limited amount of high-quality supplementary sample—alongside the original survey data—to assess the credibility of the survey in the context of regression models.
The central objective is to develop an efficient algorithm that minimizes both computational cost and sample complexity (i.e., the number of additional samples required).



\textbf{Problem Formulation.} Typically, once a sampling-based study is designed, survey data is collected from an underlying population.
In line with the structure of most socio-economic surveys, we assume that the survey dataset $\surveyset$ consists of tabular numeric covariates and a scalar response variable. Specifically, each data point in $\surveyset$ is of the form $(\covariate, \response)$, where the covariates $\covariate \in \R^d$ and the response variable $\response \in \R$. Most of the time the dimension, that is $d$, is quite large.

We denote by $\D$ the distribution of the $(\covariate,\response)$ tuples of the whole population. 
If the dataset $\surveyset$ was obtained after perfect sampling techniques, i.e. by drawing independent samples from an unknown distribution $\D$, then one would call the survey data $\surveyset$ to be a credible representation of the population. 
But due to various limitations, the dataset $\surveyset$ collected
might be obtained by drawing samples from some other distribution 
$\sampD$. So the question about how credible is $\surveyset$ as a 
representation of the population boils down to understanding the 
distance between the two distributions $\D$ and $\sampD$. 
We will call $\D$ to be the \textit{true distribution} and 
$\sampD$ to be the \textit{sample distribution}.
Estimating the distance between two high-dimensional distributions is very inefficient, and hence, impractical~\citep{Canonne:Survey:ToC,CanonneTopicsDT2022}. This has motivated development of distance measures between datasets, such as Optimal Transport Dataset Distance~\citep{OTDT}, which are costly to compute in high dimensions.

In particular, we list the sample complexities of some of the most well-studied distributional distances when the distributions are defined over a $d$-dimensional space:
\begin{itemize}[leftmargin=*]
    \item \textbf{TV:} The problem of testing the TV distance of two distributions over a support of size $k$ require $\Theta(k/\log k)$ samples~\citep{pmlr-v178-canonne22a}. Given the distribution is over $\{0,1\}^d$, the sample complexity is $\Theta(2^d/d)$. If we have a continuous distributions over $[0,1]^d$ discretized with bin width $\varepsilon$, the sample complexity would be $\Theta(\varepsilon^d/(d\log(1/\varepsilon)))$.
    \item \textbf{Wasserstein:} For two bounded-moment distributions over a $d$-dimensional space, the Wasserstein distance requires $\Omega(\epsilon^{-d})$ samples for the empirical measure to converge to distance $\epsilon$~\citep{LeiWassersteinConcentration}. 
    \item \textbf{KL:} For two distributions over a $d$-dimensional bounded space, the minimax-optimal estimation of the KL divergence requires $\Omega(\varepsilon^{-d})$ samples~\citep{DBLP:journals/tit/ZhaoL20a}. 
\end{itemize}
In all the cases discussed above, to test closeness of distributions given sampling access to them, requires the number of samples to grow exponentially with the number of dimensions. In contrast, the number of samples-to-test required by our method is independent of dimension.

Samples collected from a survey are typically used for various data interpretation and deduction tasks, e.g. regression, classification etc.
In all these cases, one aims to find a model from a given model class, say $\regressionset$, that minimises a task-specific loss function. For example, for regression, we aim to find the regression function that minimise the square loss over the survey data.  If $\func{\loss}{\R^2}{\R}$ is the loss function, then the model learnt from the survey set $\surveyset$ is:
\begin{align*}
    \surveyf \triangleq \argmin_{f \in \regressionset} \frac{1}{m} \sum_{(\covariate,\response) \in \surveyset}\loss (f(\covariate),\response).
\end{align*}

To validate the credibility of a survey data, we propose to test whether the model $f_{\surveyset}$ derived from the survey data $\surveyset$ matches the model $\optimalf$, that would have been odtained if the dataset $\surveyset$ been a credible representation of the population $\D$. 
 \begin{align*}
    \optimalf \triangleq \argmin_{f \in \regressionset} \E_{(\covariate,\response)\sim \D} \loss(f(\covariate), \response).
\end{align*}
We will assume that we have access to a small sample set, called the validation dataset, obtained by drawing \textit{i.i.d.} samples from the true distribution $\D$. 


Depending on the problems, different metrics have been proposed to quantify the closeness of distributions~\citep{Gibbs2002_metrics}. Our goal is to validate the quality of the survey data $\surveyset$ by estimating the distance of $\surveyf$ from $\optimalf$.
 We use the distributional $\ell_2$ distance to quantify the closeness of regression models. 

\begin{definition}[Distributional $\ell_2$-Distance between Functions]\label{Definition: Statistical distance between functions}
      Let $f$ and $g$ be real-valued functions on $\R^{d}$, and $\sD$ be a distribution on $\R^{d}$. The distributional $\ell_2$-distance between $f$ and $g$ on $\sD$ is:

    $$\distltwo(f,g) \triangleq \sqrt{\E_{\covariate \sim \sD} \tbrac{(f(\covariate)-g(\covariate))^2}}$$
\end{definition}
Thus, our problem can be formulated as follows: Given a survey set $\surveyset$ (drawn according to some unknown distribution $\sampD$) and a model class $\regressionset$, we aim to sample a small number of new data points from the true distribution $\D$ and determine whether $\distF(\surveyf, \optimalf)$ lies within a specified acceptable threshold. Ideally, the number of new samples drawn from $\D$ should be very small and independent of the dimensionality of the ambient space.

\textbf{Related Works.} 
Our work lies at the intersection of \textit{distribution testing} and \textit{model validation}. Distribution identity testing—determining whether an unknown distribution matches a known one—has been widely studied ~\cite{Batu2000_equivalence,Paninski2008_identity,Valiant2017_identity,DBLP:conf/icalp/DiakonikolasGPP18}, with comprehensive surveys summarizing key results~\cite{CanonneTopicsDT2022,Canonne:Survey:ToC}. Recent efforts have focused on high-dimensional settings, where testing structured distributions such as Ising models or Bayesian networks poses significant challenges~\cite{pmlr-v65-daskalakis17a,Daskalakis/SODA/2018/TestingIsingModels,Canonne/PMLR/2017/TestingBayesianNetworks,NEURIPS2020_a8acc287,Bhattacharyya/ALT/2021/TestingProductDistributions}. 
However, these approaches often suffer from exponential sample complexity in the dimension $\d$~\cite{JMLR.2020/LowerBoundOnTestingIsingModels,Blanca/JMLR/2021/HardnessIdentity}.
In contrast, model validation has long been studied through statistical tests for evaluating model fit, especially in regression and parametric models~\cite{Snee01111977,Picard01091984,Holger/AOS/1998/ValidatingRegression,Staffa2021,Stute/AoS/1997/NonParametricModelChecksForRegression}. These approaches often rely on strong assumptions about the model or the data. 
Our work brings these two perspectives together aiming to develop scalable and principled methods for validating the credibility of high dimensional surveys  through the lens of regression models.


\textbf{Our Results.} 
In this work, we consider the class of regression models for the model-specific testing problem. 
We consider two common assumptions of regression models for our scenario -- exogenous noise in observation~\citep{rousseeuw2003robust,mohri2018} and boundedness of involved variables and the model~\citep{mohri2018,islr}.\footnote{Note that these two assumptions are not absolutely necessary for the proposed framework to function but to provide clean and rigorous theoretical analysis. We discuss further in Section~\ref{section:future work}.}
Exogeneity of noise ensures exact identifiability of the underlying model, i.e. we do not have unidentified covariates that influence the outcome. Boundedness is usually satisfied in our setting as the survey datasets always have finite entries and can be normalized.
\begin{assumption}[\textbf{Exogenous Noise}]\label{Assumption : Assumptions on the linear regression model}
For a regression model $\response = f(\covariate)+\regnoise$, we have: \\
(a) \emph{Homoskedasticity:} The noise $\regnoise$ has constant variance, i.e. $\Var[\regnoise\mid\covariate] = \regnoisevariance$, \\
(b) \emph{Non-correlation:} The noise $\regnoise$ is uncorrelated with $\covariate \in \R^\d$ and independent across observations.
\end{assumption}
\begin{assumption}[\textbf{Boundedness}]\label{Assumption: Bounded Variables} 
     We assume that the response variable satisfy $\abs{\response}\leq 1$, the covariates satisfy $\norm{\covariate}_\infty \leq 1$, and {$f(\covariate) \leq 1$.}
\end{assumption}

Given this context, we elaborate the main contributions of this paper:

\noindent1. \textbf{Task-Specific Credibility Testing:}  We propose the framework of task-specific credibility testing of survey that checks whether it leads to valid inference while used with ML models. Specifically, we focus on regression models -- linear with $\ell_1$ and $\ell_2$ regularizers, and kernel with  $\ell_2$ regularizers. This is a deviation from the classic distribution testing frameworks that check for some divergence (e.g. TV, KL, Wasserstein) between two data distributions. But these frameworks require exponential number of samples with respect to the dimension of data. This is infeasible for a survey setting. Thus, we propose a new data-distribution specific metric, called the \textit{Functional Distance of Distributions (\distname)}, between two regression model, and leverage it to test closeness of two data distributions through the lens of regression. 

\noindent2. \textbf{Generic Algorithm for Model-Specific Testing for Regression Models:} We propose \SurVerif~to test whether a regression model learned from a given survey data $\surveyset$ is close to a model learned using independent and identically distributed (\textit{i.i.d.}) samples collected from an underlying distribution. \SurVerif~does this by checking whether the loss of the survey-based model and the \textit{i.i.d.} model match up to pre-computed threshold. We prove that \SurVerif~is correct with high probability up to a user-defined tolerance gap. 
We show that the worst-case sample complexity\footnote{Sample complexity is the number of \textit{sample-to-test} the \SurVerif~needs from true distribution $\D$.} of \SurVerif~to conduct a correct test is independent of the dimension and fixed across regression models. 
Additionally, if the model is very far in the {\distname} metric, \SurVerif~detects it earlier with less samples. 
Finally, we numerically verify the correctness and sample complexity of \SurVerif~across datasets.

To conduct our theoretical analysis, we propose a new two-sided bound on generalization error of a regression model, which is of independent interest for statistical learning.


\paragraph{Organization of the paper:} 
Section~\ref{Section : Linear Regression} introduces the preliminaries. Section~\ref{section: functional distance} discusses the new metric. Section~\ref{section: SurVerify} presents our main algorithm, \SurVerif{}, with theoretical guarantees. Proofs appear in the Appendix. Section~\ref{section:experiment} reports experimental results. 


\section{Preliminaries: Regression Models and Rademacher Complexity}\label{Section : Linear Regression}

The survey set is denoted as $\surveyset$, and its size as $\sizeS = \abs{\surveyset}$. 
We denote $\xdomain$, and $\yrange$ to be the input and output spaces, respectively. $\hypothesisset$ denotes a hypothesis sets consisting of hypothesis $\func{\hypothesis}{\xdomain}{\yrange}$. Similarly, $\regressionset$ denotes the set of regression functions $\func{\regression}{\xdomain}{\yrange}$, and the coefficient associated with the regression functions are denoted $\coeff$. $\inn{\cdot}{\cdot}$ denotes  inner product, and $\norm{\cdot}_p$ denotes the $\ell_p$ norm.

\paragraph{A Primer on Regression: Linear and Kernel.} Performing regression on survey data to fit reasonable models over the population is central to a wide variety of analysis tasks~\citep{Charvat2015_LR,Pan2017_LR,Meerwijk2017_LR}. Often, the observations collected to construct a survey dataset are the result of a complex sampling design reflecting the need to collect data as efficiently as possible within cost constraints.


Broadly, the problem of regression is as follows: given an input space $\xdomain\subseteq\R^\d$, an output range $\yrange \subseteq \R$, a distribution over $\xdomain\times\yrange$, a hypothesis set $\func{\hypothesisset}{\xdomain}{\yrange}$, and a loss function $\func{\loss}{\yrange\times\yrange}{\R}$, output a hypothesis $h \in \hypothesisset$ that minimizes loss w.r.t. the distribution over $\xdomain\times\yrange$. Here, we consider the regression model with additive noise $\regnoise$. That is,
$
\response = f(\covariate) + \regnoise
$.

In this work, we consider three widely used hypothesis classes for the regression problem. 
First, we consider  linear regression that tries to fit a linear model between the response and the covariates, i.e.
\begin{equation*}
\response = \inn{\truecoeff}{\covariate}+\regnoise, \text {where } \truecoeff,\covariate\in\R^\d,\response,\regnoise\in\R
\end{equation*}

We consider both the cases of $\ell_1$ and $\ell_2$-norm bounded coefficients for the linear regression model, known as $\lasso$ and $\ridge$ regression respectively. 
These are also called the bounded weight hypothesis classes  $\boundedlinfunc_p = \{\covariate \rightarrow \inn{\coeff}{\covariate}:\norm{\coeff}_p \leq \linweight\}.$
Henceforward, we use these two terms interchangeably. We denote the hypothesis sets containing $\ell_1$ and $\ell_2$ bounded linear regressions as $\lassoset$, and $\ridgeset$, respectively.

We also consider the $\kerna$ Regression model, where we
associate with the input space $\xdomain$ a PDS (Positive Semidefinite Symmetric) kernel $\func{\kernel}{\xdomain\times\xdomain}{\R}$ that implicitly defines an associated function $\func{\kernelfunction}{\xdomain}{\hilbertspace}$ such that:
$\kernel\fbrac{\covariate,\covariate'} = \inn{\kernelfunction(\covariate)}{\kernelfunction(\covariate')}$.
The regression model is a linear model on this Hilbert space $\hilbertspace$ with the underlying coefficients $\truecoeff \in \hilbertspace$, and the model is:
\begin{align*}
    \response = \inn{\truecoeff}{\kernelfunction(\covariate)} + \regnoise \text{ where }\truecoeff,\kernelfunction(\covariate)\in\hilbertspace,\response,\regnoise\in\R
\end{align*}
In this case, we consider the hypothesis class consisting of coefficients $\coeff$ with bounded $\hilbertspace$-norm. We denote the hypothesis classes containing the kernel as $\kernelset$. For all the regression models, we consider the loss function $\loss$ to be the squared error loss function defined as $\loss(\response,\response') \triangleq (\response - \response')^2$.





\paragraph{Rademacher Complexity.} The Rademacher complexity of a function class $\losshypothesisset$ plays a crucial role in the generalization bounds for several learning models~\cite{mohri2018}, and also in our analysis. The empirical Rademacher complexity is measured w.r.t. a particular set of samples $\surveyset$.

\begin{definition}[\textbf{Empirical Rademacher Complexity}]\label{Definition: Empirical Rademacher Complexity}
    Given a family of functions $\losshypothesisset$ containing functions $\func{\losshypothesis}{\datadomain}{\tbrac{0,\lossbound}}$ and $\surveyset = \fbrac{\datapoint_1,\ldots,\datapoint_\sizeS}$ a fixed sample with elements in $\datadomain$. Then, the empirical Rademacher complexity $\empradcomp{\surveyset}{\losshypothesisset}$ of $\losshypothesisset$ w.r.t. $\surveyset$ is
    \[
    \empradcomp{\surveyset}{\losshypothesisset} \triangleq \E_{\mathbf{r}}\tbrac{\sup_{\losshypothesis \in \losshypothesisset} \frac{1}{\sizeS}\sum_{i \in \sizeS} \rademacher_i \losshypothesis(\datapoint_i)},
    \]
    where 
    $r_{i}$'s are i.i.d Rademacher random variables taking value uniformly in $\sbrac{-1,+1}$.
\end{definition}



\section{Functional Distance of Distributions (\distname): A Novel Metric}\label{section: functional distance}



We define the model-specific distance between distributions that quantifies the distance between distributions w.r.t. a model class $\regressionset$ and a true distribution $\D$.

\begin{definition}[$\distname_{\D}^\regressionset(\distribution_1,\distribution_2)$]
      Given a true distribution $\D$, a model class $\regressionset$, and an associated loss function $\loss_\regressionset$, let $\regression_{\distribution_1}$, and $\regression_{\distribution_2}$ be the optimal models in $\regressionset$ for $\distribution_1$, and $\distribution_2$, respectively. We define the model-specific distance w.r.t. the true distribution $\D$ as:
      \begin{align*}
          \distname_{\D}^\regressionset(\distribution_1,\distribution_2) = \distF\fbrac{\regression_{\distribution_1},\regression_{\distribution_2}}\,.
      \end{align*}
\end{definition}

Given a true distribution $\D$, the model specific testing transforms the problem of testing closeness of distributions to testing closeness of functions over a given true distribution. Given a hypothesis set $\hypothesisset$, and a loss function $\loss$, it associates with each distribution $\distribution$ a function $\hypothesis_\distribution$ as
$
\hypothesis_\distribution = \argmin_{\hypothesis \in \hypothesisset} \E_{\covariate,\response\sim\distribution}\loss\fbrac{\hypothesis(\covariate),\response}
$.

Consequently, given a set of distributions $\distset$, we can define the set of hypotheses associated with them as $
\hypothesisset^\distset = \sbrac{\hypothesis_\distribution\in\hypothesisset\mid\distribution\in\distset}
$.

By a standard fact of $\Lspace^p$ spaces~\cite{SteinShakarchi+2012}, if the functions $\hypothesis \in \hypothesisset^\distset$ has bounded second moment w.r.t. $\D$, i.e. $\E_{(\covariate, \response)\sim \D}\tbrac{\hypothesis^2(\covariate)} < \infty$, then the set $\fbrac{\hypothesisset^\distset,\distF{}{}}$ constitutes a $\Lspace^2$-space. 
If we consider the equivalence relation $\hypothesis_1\sim\hypothesis_2$, i.e., if and if  $\distF(\hypothesis_1,\hypothesis_2) = 0$, $\distF{}{}$ defines a metric on the resulting partition. Correspondingly, $\distname_{\D}^\regressionset$ induces a metric on the partition of distributions $\distset$ induced by the equivalence relation $\distribution_1 \sim \distribution_2$ if and only if $\distF\fbrac{\regression_{\distribution_1},\regression_{\distribution_2}} = 0$.

It is important to note that the \distname{} metric can be zero even when the distributions $\distribution_1$ and $\distribution_2$ differ significantly. Therefore, when the goal is to assess whether two distributions are equivalent with respect to a specific task, \distname{} serves as an appropriate measure. For regression models that satisfy the exogenous noise assumption (Assumption~\ref{Assumption : Assumptions on the linear regression model}) under the squared loss, we establish the following relationship between the loss and \distname{}.




\begin{restatable}[\distname-variance Decomposition of Loss]{lemma}{distanceLemma}\label{lemma:lossvariancebreakdown}
  \label{lem:dist} If the model class $\regressionset$ satisfies Assumption~\ref{Assumption : Assumptions on the linear regression model}, then $$\E_{(\covariate, \response)\sim \D}\tbrac{\fbrac{f_\surveyset(\covariate)-\response}^2} = \red{(}\distname_{\D}^\regressionset(\sampD,\D)\red{)^2} + \regnoisevariance.$$
  
\end{restatable}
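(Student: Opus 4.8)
The plan is to prove this as a bias--variance--style decomposition of the population square loss of $\surveyf$ under $\D$. The starting point is to identify the optimal population model $\optimalf$ with the true regression function $f$ in $\response=f(\covariate)+\regnoise$. Since the loss is the squared error, the unconstrained minimizer of $\E_{(\covariate,\response)\sim\D}\fbrac{g(\covariate)-\response}^2$ is the conditional mean $\E\tbrac{\response\mid\covariate}$, and under Assumption~\ref{Assumption : Assumptions on the linear regression model} (exogeneity gives $\E\tbrac{\regnoise\mid\covariate}=0$) this conditional mean equals $f(\covariate)$. Assuming realizability, i.e. $f\in\regressionset$, the constrained minimizer $\optimalf$ coincides with $f$, so for a fresh draw $(\covariate,\response)\sim\D$ we may write $\response=\optimalf(\covariate)+\regnoise$.

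With this identification I would substitute $\response=\optimalf(\covariate)+\regnoise$ and expand the square, noting that $\surveyf$ is a fixed function when we take the expectation, since it is learned from $\surveyset$ (drawn from $\sampD$) independently of the fresh draw from $\D$:
\begin{align*}
\E_{(\covariate,\response)\sim\D}\fbrac{\surveyf(\covariate)-\response}^2 &= \E_\D\fbrac{\surveyf(\covariate)-\optimalf(\covariate)}^2 - 2\,\E_\D\tbrac{\fbrac{\surveyf(\covariate)-\optimalf(\covariate)}\regnoise} \\
&\quad + \E_\D\tbrac{\regnoise^2}.
\end{align*}
Each term is then handled by one ingredient of the assumptions: the cross term vanishes because $\surveyf-\optimalf$ is a function of $\covariate$ and the non-correlation part of Assumption~\ref{Assumption : Assumptions on the linear regression model} forces $\E_\D\tbrac{\fbrac{\surveyf(\covariate)-\optimalf(\covariate)}\regnoise}=0$; the last term equals $\regnoisevariance$ by homoskedasticity together with $\E\tbrac{\regnoise}=0$; and the first term is, by Definition~\ref{Definition: Statistical distance between functions}, exactly $\distF\fbrac{\surveyf,\optimalf}^2$. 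Identifying $\surveyf$ with the optimal model $f_{\sampD}$ for the sample distribution then gives $\distF\fbrac{\surveyf,\optimalf}=\distname_{\D}^{\regressionset}(\sampD,\D)$, which is the claimed identity.

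I expect the main obstacle to be justifying the vanishing cross term at the right level of generality. The assumption literally states that $\regnoise$ is uncorrelated with $\covariate$, i.e. $\mathrm{Cov}(\regnoise,\covariate_j)=0$ coordinatewise, which directly kills the cross term only when $\surveyf-\optimalf$ is linear in $\covariate$ (the $\lasso$/$\ridge$ case) or linear in the feature map $\kernelfunction(\covariate)$ (the kernel case); I would therefore either restrict the argument to these linear-in-features hypothesis classes or, more cleanly, invoke the stronger exogeneity condition $\E\tbrac{\regnoise\mid\covariate}=0$, under which the cross term vanishes for an arbitrary function of $\covariate$. A secondary point to make explicit is the realizability assumption used to equate $\optimalf$ with $f$, together with the identification $\surveyf=f_{\sampD}$; both are implicit in the problem setup, and stating them is what makes the decomposition an exact equality rather than an approximation.
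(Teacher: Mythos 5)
Your proposal is correct and follows essentially the same route as the paper's proof: substitute $\response=\optimalf(\covariate)+\regnoise$, expand the square, kill the cross term using the noise assumptions, and identify the remaining two terms as $\distF^2(\surveyf,\optimalf)=(\distname_{\D}^{\regressionset}(\sampD,\D))^2$ and $\regnoisevariance$. Your added care---making realizability of $\optimalf$ explicit and observing that coordinatewise uncorrelatedness alone only kills the cross term for linear-in-features classes, so one should invoke $\E\tbrac{\regnoise\mid\covariate}=0$---is if anything more rigorous than the paper's own argument, which silently factors $\E_{\D,\regnoise}$ into $\E_{\D}\E_{\regnoise}$ as though $\regnoise$ were independent of $\covariate$.
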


The above lemma can be intuitively viewed as a decomposition result, akin to the classic bias-variance breakdown of estimation error \cite{wasserman2004all}. It states that the expected loss of a model learned from the survey, evaluated with respect to the true distribution, can be decomposed into two components: the approximation error (i.e., how far the learned model is from the optimal one) and the intrinsic noise (i.e., the error incurred even by the best possible model).





\section{\SurVerif: Testing Credibility with Regression and Fixed Confidence}\label{section: SurVerify}
We first describe the algorithm design and then establish its efficiency in terms of sample complexity. 
In order to prove this result, we propose a two-sided generalization bound for regression and also a lower bound on methods reconstructing complete model to test dataset distances.

\subsection{Dimension Agnostic Algorithm Design with Early Stopping}

We now present our algorithmic framework, \SurVerif{}, which verifies whether a regression model learned from a survey sample $\surveyset$ is close to the true optimal model in $\ell_2$-distance (Definition~\ref{Definition: Statistical distance between functions}). The algorithm performs this testing using a small number of samples drawn from the true distribution $\D$. We refer to them as \textit{sample-to-test}.

\begin{restatable}{algorithm}{surverify}
\caption{\SurVerif($\surveyset \subset \mathbb{R}^{(d+1)},\D, \epsilon, \confidence, \regressionset $)}\label{alg:SurVerif}
\begin{algorithmic}[1]
\Require $\abs{\surveyset} \geq \sbound(\regressionset)$

\State Initialize $\sizeS \gets |\surveyset|$, $\surveyset_{\D} \gets \emptyset, \totalsampD \gets \ceil{\frac{2}{(1.9\tol)^2}\log\fbrac{\frac{3}{\confidence}}}$ \label{SurVerif Line 1}

\State $f_{\surveyset} \gets \argmin_{f \in \regressionset} \frac{1}{\sizeS} \sum_{(\covariate,\response) \in \surveyset} (f(\covariate) - \response)^2$ \label{SurVerif Line 2}

\State $\hat{L}_S \gets \frac{1}{\sizeS}\sum_{(\covariate,\response) \in \surveyset} (f_{\surveyset}(\covariate) - \response)^2$ \label{SurVerif Line 3}

\State $\estimatevarD \gets 0, \sampleD \gets 0$ \label{SurVerif Line 4}
\While{$\sampleD < \totalsampD$} \label{SurVerif Line 5}
    \State $\surveyset_{\D} \gets \surveyset_{\D} \cup \{(\covariate_i, \response_i)\}$, where $(\covariate_i, \response_i) \sim \D$ \label{SurVerif Line 6}
    \State $\estimatevarD \gets \estimatevarD + (f_\surveyset(\covariate_i) - \response_i)^2$ \label{SurVerif Line 7}
    \If{$\estimatevarD - t\hat{L}_S > 1.1 \sampleD\tol + \sqrt{2\sampleD\log\fbrac{\frac{3\totalsampD}{\confidence}}}$} \label{SurVerif Line 9}
    \State \Return REJECT \label{SurVerif Line 10}
    \EndIf \label{SurVerif Line 11}
    \State $\sampleD \gets \sampleD + 1$ \label{SurVerif Line 8}
\EndWhile \label{SurVerif Line 12}
\If{$\estimatevarD - \totalsampD\hat{L}_S \leq 3\totalsampD \tol$} \label{SurVerif Line 13}
\State \Return ACCEPT \label{SurVerif Line 14}
\Else \label{SurVerif Line 15}
\State \Return REJECT \label{SurVerif Line 16}
\EndIf \label{SurVerif Line 17}
\end{algorithmic}
\end{restatable}

We begin with an overview of our algorithmic framework, \SurVerif{}, before presenting its formal correctness guarantee. The core idea behind \SurVerif{} is to assess the credibility of a survey sample $\surveyset$ through a two-phase procedure. In the first phase (Lines~\ref{SurVerif Line 2} and \ref{SurVerif Line 3}), the algorithm fits a regression model $f_{\surveyset}$ using the survey data. In the second phase (Lines~\ref{SurVerif Line 5} to \ref{SurVerif Line 17}), it evaluates the reliability of $f_{\surveyset}$ by estimating its expected loss under the true distribution $\D$, using a small number of i.i.d. samples-to-test. Specifically, it computes an additive estimate $\estimatevarD$ of the expected loss of $f_{\surveyset}$ on data from $\D$. The algorithm then compares $\estimatevarD$ against a fixed threshold: if the estimated loss is low enough (Line~\ref{SurVerif Line 13} and onward), it outputs ACCEPT; otherwise, it outputs REJECT.

To be more sample-efficient, \SurVerif{} also incorporates an early rejection criterion (Line~\ref{SurVerif Line 9}) to terminate the evaluation of $f_{\surveyset}$ quickly when it incurs a large loss on the sample-to-test, i.e., when the loss is deviating enough to be detected with only a few samples. Notably, the total number of samples-to-test required from $\D$, denoted $\totalsampD$, is $O\left(\frac{1}{\tol^2} \log\left(\frac{1}{\confidence}\right)\right)$, and is independent of the data dimension. This sample efficiency makes \SurVerif{} well-suited for high-dimensional settings where direct access to the true distribution is limited, and also in the settings where collecting samples is costly (e.g. medical data). 

\subsection{Theoretical Analysis: Correctness, Sample Complexity, and Sufficient Size of Survey}

The following theorem is the main structural result of this work. It shows that the validity of a model learned from survey data can be efficiently certified using only a small number of i.i.d. samples-to-test from the true distribution. This is especially useful when survey data is abundant but access to the true distribution is limited (e.g. medical data, socioeconomic data). By leveraging the framework of functional distance of distributions (defined in Section~\ref{section: functional distance}), \SurVerif{} reliably distinguishes between two datasets with high confidence and low sample complexity.

\begin{restatable}[\textbf{Correctness of \SurVerif{} and Sample Complexity}]{theorem}{surverifyCorrectness}
    \label{Theorem: Tester Accepts or Rejects}
     Given a survey sample $\surveyset$ (drawn from an unknown distribution $\sampD$), a model class $\regressionset$ and i.i.d.~sampling access to the true distribution $\D$ then for any  $\tol$ and $\confidence \in (0,1)$, if the size of $\surveyset$ is large enough (Table~\ref{table: sample complexity}) then
\begin{itemize}
    \item[1.] If $\red{(}\distname_{\D}^{\regressionset}(\sampD, \D)\red{)^2} \leq \tol$, then \SurVerif{} outputs ACCEPT with probability $1 -\confidence$. \label{subTheorem: close statement}   
    \item[2.] If $\red{(}\distname_{\D}^{\regressionset}(\sampD, \D)\red{)^2} > 5\tol$, then \SurVerif{} outputs REJECT with probability $1 -\confidence$.\label{subTheorem: far statement}
\end{itemize}
Also, \SurVerif{} requires at most $\ceil{\frac{2}{(1.9\tol)^2}\log\fbrac{\frac{3}{\confidence}}}$ samples from $\D$ for validation.
\end{restatable}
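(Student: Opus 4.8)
The plan is to show that the test statistic $\estimatevarD - \totalsampD\,\hat{L}_{\surveyset}$ computed by \SurVerif{} concentrates around $\totalsampD\,\fbrac{\distname_{\D}^{\regressionset}(\sampD,\D)}^2$, so that the fixed threshold $3\totalsampD\tol$ cleanly separates the two regimes $\fbrac{\distname_{\D}^{\regressionset}}^2 \le \tol$ and $\fbrac{\distname_{\D}^{\regressionset}}^2 > 5\tol$. Write $\mu \defeq \E_{\D}[(\surveyf(\covariate)-\response)^2]$ for the true loss of the survey model on $\D$. Lemma~\ref{lemma:lossvariancebreakdown} gives $\mu = \fbrac{\distname_{\D}^{\regressionset}(\sampD,\D)}^2 + \regnoisevariance$, so $\estimatevarD/\totalsampD$ (an empirical mean over the i.i.d.\ sample-to-test) estimates $\fbrac{\distname_{\D}^{\regressionset}}^2 + \regnoisevariance$, while $\hat{L}_{\surveyset}$ should estimate the irreducible noise $\regnoisevariance$; their scaled difference then tracks $\fbrac{\distname_{\D}^{\regressionset}}^2$. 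The identity I would use throughout is
$$\estimatevarD - \totalsampD\,\hat{L}_{\surveyset} = \totalsampD\,\fbrac{\distname_{\D}^{\regressionset}(\sampD,\D)}^2 + \fbrac{\estimatevarD - \totalsampD\,\mu} + \totalsampD\,\fbrac{\regnoisevariance - \hat{L}_{\surveyset}}.$$

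I would then condition on three high-probability events, each allotted failure probability $\confidence/3$ (the source of the factor $3$ in every threshold). \textbf{(i) Survey generalization:} $\abs{\hat{L}_{\surveyset} - \regnoisevariance} \le 0.1\tol$. Applying Lemma~\ref{lemma:lossvariancebreakdown} with $\sampD$ in the role of the evaluation distribution shows the population-optimal survey loss equals $\regnoisevariance$; the proposed two-sided generalization bound then transfers this to the empirical quantity $\hat{L}_{\surveyset}$, which is exactly where the requirement $\abs{\surveyset} \ge \sbound(\regressionset)$ (Table~\ref{table: sample complexity}) enters, forcing the Rademacher complexity of $\regressionset$ below an $O(\tol)$ budget. \textbf{(ii) Uniform upper deviation:} conditioned on $\surveyf$ (fixed before any sample-to-test is drawn), the summands $(\surveyf(\covariate_i)-\response_i)^2$ are i.i.d.\ and bounded (Assumption~\ref{Assumption: Bounded Variables}), so Hoeffding's inequality with a union bound over the at most $\totalsampD$ loop iterations gives $\estimatevarD_{\sampleD} - \sampleD\,\mu \le \sqrt{2\sampleD\log(3\totalsampD/\confidence)}$ simultaneously for all $\sampleD$. \textbf{(iii) Two-sided deviation at termination:} a single Hoeffding bound gives $\abs{\estimatevarD - \totalsampD\,\mu} \le 1.9\,\totalsampD\tol$, using that $\totalsampD = \ceil{\tfrac{2}{(1.9\tol)^2}\log(3/\confidence)}$ is calibrated so the termination-time deviation is at most $1.9\,\totalsampD\tol$.

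On these events the remaining argument is arithmetic. \emph{Completeness} ($\fbrac{\distname_{\D}^{\regressionset}}^2 \le \tol$): at every loop step (i) and (ii) give $\estimatevarD_{\sampleD} - \sampleD\hat{L}_{\surveyset} \le \sampleD\tol + 0.1\sampleD\tol + \sqrt{2\sampleD\log(3\totalsampD/\confidence)} = 1.1\sampleD\tol + \sqrt{2\sampleD\log(3\totalsampD/\confidence)}$, which meets the early-stopping threshold of Line~\ref{SurVerif Line 9} with equality, so no spurious early REJECT fires; at termination (i) and (iii) give $\estimatevarD - \totalsampD\hat{L}_{\surveyset} \le \totalsampD\tol + 0.1\totalsampD\tol + 1.9\totalsampD\tol = 3\totalsampD\tol$, so Line~\ref{SurVerif Line 13} outputs ACCEPT. \emph{Soundness} ($\fbrac{\distname_{\D}^{\regressionset}}^2 > 5\tol$): (i) and (iii) give $\estimatevarD - \totalsampD\hat{L}_{\surveyset} > 5\totalsampD\tol - 0.1\totalsampD\tol - 1.9\totalsampD\tol = 3\totalsampD\tol$, so the algorithm outputs REJECT either early or at Line~\ref{SurVerif Line 16}. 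A union bound over (i)--(iii) yields both guarantees with probability $1-\confidence$.

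The sample-complexity claim is then immediate: the loop draws one fresh point from $\D$ per iteration and runs at most $\totalsampD = \ceil{\tfrac{2}{(1.9\tol)^2}\log(3/\confidence)}$ times, a bound independent of the dimension $\d$. The main obstacle I anticipate is event (i): pinning $\hat{L}_{\surveyset}$ to $\regnoisevariance$ requires a \emph{two-sided} control of the empirical risk of the ERM solution $\surveyf$ — the textbook one-sided uniform-convergence bound does not suffice, because I need simultaneously that $\surveyf$ does not underfit (its loss is not far below $\regnoisevariance$) and does not overfit the survey. This is precisely the role of the paper's new two-sided generalization bound, and making its Rademacher term fit the $0.1\tol$ budget separately for the \lasso, \ridge, and \kerna\ classes is what fixes the survey-size table. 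A secondary nuisance is the off-by-one bookkeeping between the running index $\sampleD$ and the number of terms accumulated in $\estimatevarD$, together with the maximal-inequality union bound in (ii), which must hold simultaneously across all iterations to license the early-stopping rule.
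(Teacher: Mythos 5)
Your proposal follows the paper's own proof essentially step for step: the same decomposition of the test statistic via Lemma~\ref{lem:dist}, the same three $\confidence/3$-events (concentration of $\hat{L}_{\surveyset}$ around $\regnoisevariance$ via the two-sided generalization bound and the survey-size requirement of Table~\ref{table: sample complexity}, a Hoeffding-plus-union bound over the loop iterations to protect the early-stopping rule, and a single Hoeffding bound at termination calibrated by the choice of $\totalsampD$), and the same $0.1\tol$, $1.1\tol$, $1.9\tol$, $3\tol$ arithmetic for completeness and soundness. This is correct and is the paper's argument; no gaps.
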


    
\textbf{Discussions:} 1. \textit{Dimension Agnostic Tester:} One of the interesting aspect of the above theorem is the fact that the sample complexity of \SurVerif{} is independent of dimension. This efficiency stems from the fact that the algorithm focuses on verifying the credibility of the survey data rather than reconstructing the underlying regression model.

2. \textit{Relaxing Purity of Samples:}  At first glance, Theorem~\ref{Theorem: Tester Accepts or Rejects} may seem limited in practical applicability, as it assumes access to the true distribution $\D$. However, in real-world settings, we typically have access only to a distribution $\distribution'$ that is close to $\D$, for instance in total variation distance. Fortunately, since the sample complexity of \SurVerif{} is $O(1)$ for fixed $\tol$ and $\confidence$, the algorithm remains effective in this approximate setting. By appropriately adjusting the tolerance and confidence parameters to account for the discrepancy between $\D$ and $\distribution'$, we can still guarantee the correctness of the testing procedure. This robustness follows directly from the Data Processing Inequality~\cite{Polyanskiy_Wu_2025}.

3. \textit{Dealing with Regression Models on a Subset of Dimensions:} Oftentimes, broad survey data is used for various downstream tasks involving projections onto a small number of dimensions. However, the \distname{} metric is not robust to arbitrary projections—closeness between entire datasets does not necessarily imply closeness under such projections. In these cases, the only reliable approach is to run \SurVerif{} on the projected dimensions. Fortunately, the same sample from $\D$ can be reused across multiple projection-based checks.

4. \textit{Fixing $\approxerror$, and $\confidence$ in practice:} The choice of $\delta$ is generally taken within the range of $[0.01,0.1]$ in practice. Although due to the fact that the dependence of sample complexity on $\delta$ is logarithmic, choosing a lower value does not impact the sample complexity much. The tolerance parameter $\varepsilon$ should be chosen according to the confidence required w.r.t. the underlying noise $\regnoise$. Given the fact that testing w.r.t. a lower $\varepsilon$ does not cause an increase in sample complexities in practice, one strategy may be to test it with lower value of $\varepsilon$ and obtain a (constant factor) estimate to the $\distname{}$ using \SurVerif{}. If there is a fixed number of samples to test with, the strategy should be to fix the $\varepsilon$ level theoretically attainable according to Theorem~\ref{Theorem: Tester Accepts or Rejects}.



\paragraph{Requirement: Sufficient Size of the Survey Data.}
We show the following two-sided generalization bound of a general hypothesis class using the empirical Rademacher complexity.
\begin{restatable}[\textbf{Generic Two-sided Generalization Bound}]{theorem}{twoSidedGeneralizationBound}
   \label{theorem: Regression Generalization Bound}
   Given a hypothesis set $\hypothesisset$ containing functions $\func{\hypothesis}{\xdomain}{\yrange}$, and a $\losslipschitz$-lipschitz\footnote{As per the standard nomenclature, a 
    loss function $\sL: \responserange \times \responserange \rightarrow \R$ is called $\losslipschitz$-Lipschitz if for any fixed $y \in \responserange$ and $\response_1,  \response_2 \in \responserange$, we have $|\sL(\response_1,y) - \sL(\response_2,y)| \leq \losslipschitz|\response_1 - \response_2|$.} loss function $\func{\loss}{\yrange\times\yrange}{\tbrac{0,\lossbound}}$. Let $\surveyset$ be a sample set of size $\sizeS \geq 1$ drawn as i.i.d. samples from the distribution $\distribution$, then we have with probability at least $1 - \confidence$:
    \begin{align*}
\abs{\E_{\fbrac{\covariate,\response}\sim\distribution}\tbrac{\loss\fbrac{\hypothesis(\covariate),\response}} - \frac{1}{\n}\sum_{(\covariate, \response)\in \surveyset} \loss\fbrac{\hypothesis(\covariate),\response}} \leq 2\losslipschitz\empradcomp{\surveyset}{\hypothesisset} + 3\lossbound\sqrt{\frac{\log\frac{4}{\confidence}}{2\n}}
    \end{align*}
\end{restatable}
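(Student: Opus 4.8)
The plan is to recognize this as the standard symmetrization-based uniform-convergence bound, specialized to a $\losslipschitz$-Lipschitz loss, and then to upgrade the usual one-sided statement to the two-sided (absolute-value) form by a union bound. First I would pass from the hypothesis class to the induced loss class $\losshypothesisset \triangleq \{(\covariate,\response)\mapsto \loss(\hypothesis(\covariate),\response) : \hypothesis\in\hypothesisset\}$, whose members take values in $[0,\lossbound]$. The quantity to control is then the one-sided uniform deviation $\sup_{\losshypothesis\in\losshypothesisset}\bigl(\E_{\distribution}[\losshypothesis] - \tfrac{1}{\n}\sum_{\datapoint\in\surveyset}\losshypothesis(\datapoint)\bigr)$; the theorem's displayed inequality is exactly this supremum (and its negation) pushed through to the empirical Rademacher complexity of $\hypothesisset$, and then required to hold for the particular data-dependent $\hypothesis$ of interest, so the bound must be uniform over $\hypothesisset$.

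Second, I would bound this supremum by the classical three-step argument (see \cite{mohri2018}). Step (i), bounded differences: since each $\losshypothesis$ is bounded in $[0,\lossbound]$, replacing one of the $\n$ samples perturbs the supremum by at most $\lossbound/\n$, so McDiarmid's inequality concentrates it around its expectation up to an additive $\lossbound\sqrt{\log(1/\confidence')/(2\n)}$ with probability $1-\confidence'$. Step (ii), symmetrization: introducing a ghost sample and Rademacher signs $\rademacher_i$ bounds the expected supremum by $2\,\radcomp{\losshypothesisset}$, the expected Rademacher complexity. Step (iii): a second bounded-differences application replaces $\radcomp{\losshypothesisset}$ by its empirical counterpart $\empradcomp{\surveyset}{\losshypothesisset}$ at the cost of another $\lossbound\sqrt{\log(1/\confidence')/(2\n)}$ term. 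Combining these events at $\confidence'=\confidence/2$ yields the one-sided empirical bound with deviation term $3\lossbound\sqrt{\log(4/\confidence)/(2\n)}$.

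Third, I would strip the loss off the Rademacher term and symmetrize the two sides. Because $\loss(\cdot,\response)$ is $\losslipschitz$-Lipschitz in its first argument and the responses $\response_i$ are fixed coordinatewise, the Ledoux--Talagrand contraction lemma gives $\empradcomp{\surveyset}{\losshypothesisset}\leq \losslipschitz\,\empradcomp{\surveyset}{\hypothesisset}$, converting $2\,\empradcomp{\surveyset}{\losshypothesisset}$ into $2\losslipschitz\,\empradcomp{\surveyset}{\hypothesisset}$. To obtain the absolute value, I would apply the one-sided bound both to $\losshypothesisset$ and to the reflected class $-\losshypothesisset$ (whose empirical Rademacher complexity coincides with that of $\losshypothesisset$ by the sign-symmetry of the $\rademacher_i$), each at confidence $1-\confidence/2$, and union-bound; this is precisely what turns $\log(2/\confidence)$ into $\log(4/\confidence)$ and delivers the stated two-sided inequality.

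The main obstacle I anticipate is not any single step but the careful bookkeeping of failure probabilities and constants, so that the several concentration events (the McDiarmid applications for the deviation suprema together with the empirical-Rademacher concentration, on both the class and its reflection) collapse to exactly the coefficients $2\losslipschitz$, $3\lossbound$, and $\log(4/\confidence)$ rather than looser ones. The one genuinely delicate technical point is the contraction step: one must ensure the Lipschitz property is invoked only in the prediction argument, with each label $\response_i$ held fixed for its own coordinate, so that the per-coordinate contraction lemma applies to the two-argument loss $\loss(\hypothesis(\covariate),\response)$ without incurring an extra factor.
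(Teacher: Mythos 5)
Your proposal is correct and follows essentially the same route as the paper's proof: symmetrization plus two applications of McDiarmid's inequality (each at level $\confidence/4$, which is where $\log(4/\confidence)$ and the coefficient $3\lossbound$ arise), Talagrand's contraction lemma to replace $\empradcomp{\surveyset}{\loss\circ\hypothesisset}$ by $\losslipschitz\empradcomp{\surveyset}{\hypothesisset}$, and a final union bound over the two tails. The only cosmetic difference is that you obtain the lower tail by applying the one-sided bound to the reflected class $-\losshypothesisset$, whereas the paper simply reruns the same argument with the roles of empirical mean and expectation swapped; these are equivalent.
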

Note an upper bound in the generalization bound can be found in the following textbook~\cite{mohri2018}. 
We extend this to a two-sided bound controlling both under and overestimation. This is particularly important 
since we aim to design a tolerant tester.

Note that computuing empirical Rademacher complexity $\empradcomp{\surveyset}{\hypothesisset}$ is 
known to be computationally hard for general hypothesis classes~\cite{froese2024trainingneuralnetworksnphard, manurangsi2018computationalcomplexitytrainingrelus}. However, for a bounded weight linear and kernel basel class, the $\empradcomp{\surveyset}{\hypothesisset}$ admits tight analytical bounds (see \cite{awasthi2020rademacher, mohri2018}). We use this fact together with Theorem~\ref{theorem: Regression Generalization Bound} to bound the sample size needed for estimating the noise variance.




The following result gives the size of the survey data needed for 
estimating $f_\surveyset$ for $\lasso$, $\ridge$ and $\kerna$ hypothesis classes.

\begin{restatable}[\textbf{Minimum Survey Size for Learning Noise Variance}]{lemma}{sampleComplexityNoiseVariance}\label{lemma: sample complexity noise variance}
    Given a survey $\surveyset$ of size $\sizeS$ which is sufficiently large for their respective linear hypothesis classes (see Table \ref{table: sample complexity}). If Assumptions~\ref{Assumption : Assumptions on the linear regression model} and \ref{Assumption: Bounded Variables} hold, then 
    with probability at least $1 - {\confidence}/{3}$ we have 
    \[
        \abs{\regnoisevariance - \frac{1}{\sizeS}\sum_{(\covariate,\response) \in \surveyset} (f_{\surveyset}(\covariate) - \response)^2} \leq \frac{\tol}{10},
    \]
    where $f_{\surveyset} \triangleq \argmin_{f \in \boundedlinfunc} \frac{1}{\sizeS} \sum_{(\covariate,\response) \in \surveyset} (f(\covariate) - \response)^2$. Note that 
    Table~\ref{table: sample complexity} gives the sufficient survey data size  
    from distribution for $\sD_\surveyset$ for $\lasso$, $\ridge$ and $\kerna$
    hypothesis classes.  \vspace*{-.6em}
\end{restatable}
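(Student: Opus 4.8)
The plan is to show that the empirical loss $\hat{L}_\surveyset(\surveyf) = \frac{1}{\sizeS}\sum_{(\covariate,\response)\in\surveyset}(\surveyf(\covariate)-\response)^2$ incurred by the empirical risk minimizer on the survey concentrates around the noise variance $\regnoisevariance$. The analytic backbone I would establish first is that, under Assumption~\ref{Assumption : Assumptions on the linear regression model}, the \emph{population} loss over the sample distribution $\sampD$ bottoms out exactly at $\regnoisevariance$. Writing $L_\sampD(f) \triangleq \E_{(\covariate,\response)\sim\sampD}\tbrac{(f(\covariate)-\response)^2}$ and substituting the data-generating relationship $\response = \optimalf(\covariate)+\regnoise$, the same cross-term cancellation used in Lemma~\ref{lem:dist} applies: the term $\E\tbrac{(f(\covariate)-\optimalf(\covariate))\regnoise}$ vanishes by the non-correlation of $\regnoise$ with $\covariate$ together with the linearity of the class, which yields the decomposition
\[
L_\sampD(f) \;=\; \E_{\covariate\sim\sampD}\tbrac{(f(\covariate)-\optimalf(\covariate))^2} + \regnoisevariance \;\ge\; \regnoisevariance .
\]
Since the true $\optimalf$ lies in the (realizable) class $\boundedlinfunc$, the population minimizer over $\sampD$ is $\optimalf$ and attains $L_\sampD(\optimalf)=\regnoisevariance$, while every other $f$ satisfies $L_\sampD(f)\ge\regnoisevariance$.

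Next I would invoke the two-sided generalization bound (Theorem~\ref{theorem: Regression Generalization Bound}) on the class $\boundedlinfunc$ with the squared loss, run at confidence level $\confidence/3$. On the bounded domain of Assumption~\ref{Assumption: Bounded Variables} the predictions and responses lie in $[-1,1]$, so the squared loss maps into $[0,4]$ and is $4$-Lipschitz in its first argument; hence I take $\lossbound=\losslipschitz=4$. This gives, with probability at least $1-\confidence/3$, the uniform estimate
\[
\sup_{f\in\boundedlinfunc}\abs{L_\sampD(f)-\hat{L}_\surveyset(f)} \;\le\; B,
\qquad
B \;\triangleq\; 2\losslipschitz\,\empradcomp{\surveyset}{\boundedlinfunc} + 3\lossbound\sqrt{\frac{\log(12/\confidence)}{2\sizeS}} .
\]
The uniformity over the class is essential here, precisely because $\surveyf$ is data-dependent and cannot be treated as a fixed hypothesis.

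I would then sandwich $\hat{L}_\surveyset(\surveyf)$ by a standard empirical-risk-minimization argument on the good event. For the upper side, optimality of $\surveyf$ on $\surveyset$ and the uniform bound evaluated at the fixed point $\optimalf$ give $\hat{L}_\surveyset(\surveyf) \le \hat{L}_\surveyset(\optimalf) \le L_\sampD(\optimalf)+B = \regnoisevariance + B$. For the lower side, the uniform bound evaluated at $\surveyf$ together with $L_\sampD(\surveyf)\ge\regnoisevariance$ gives $\hat{L}_\surveyset(\surveyf) \ge L_\sampD(\surveyf)-B \ge \regnoisevariance - B$. Combining, $\abs{\regnoisevariance - \hat{L}_\surveyset(\surveyf)} \le B$ holds with probability at least $1-\confidence/3$.

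Finally, I would force $B \le \tol/10$ by taking $\sizeS$ large enough, which is exactly what Table~\ref{table: sample complexity} tabulates. This is where the per-class computation enters: I would substitute the known analytic bounds on $\empradcomp{\surveyset}{\boundedlinfunc}$ — of order $\sqrt{\log d/\sizeS}$ for $\lasso$ (the $\ell_1$ ball), $O(1/\sqrt{\sizeS})$ for $\ridge$ (the $\ell_2$ ball), and the trace-of-kernel bound for $\kerna$ — into the definition of $B$ and solve each inequality $B\le\tol/10$ for $\sizeS$. I expect the main obstacle to be conceptual rather than computational: pinning down that the population risk truly bottoms out at exactly $\regnoisevariance$, which relies on both the non-correlation of the noise (Assumption~\ref{Assumption : Assumptions on the linear regression model}(b)) to kill the cross term and on realizability (the true $\optimalf$ belonging to $\boundedlinfunc$), so that the decomposition is an equality at the optimum and a clean one-sided inequality elsewhere. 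The subsequent uniform-convergence sandwich and the Rademacher substitutions for the three classes are routine once this is in place.
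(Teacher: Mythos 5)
Your overall architecture matches the paper's proof: apply the two-sided uniform generalization bound (Theorem~\ref{theorem: Regression Generalization Bound}) to the squared loss with $\lossbound=\losslipschitz=4$, run an ERM sandwich that combines empirical optimality of $\surveyf$ with uniform convergence and a fixed anchor hypothesis whose population risk under $\sampD$ equals $\regnoisevariance$, then substitute the per-class Rademacher bounds and solve for $\sizeS$. However, there is a genuine gap in your anchoring step. You take the anchor to be the true-distribution model $\optimalf$ and assume that under $\sampD$ the data still obeys $\response=\optimalf(\covariate)+\regnoise$, so that $\optimalf$ is the population minimizer over $\sampD$ and attains risk exactly $\regnoisevariance$ there. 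That assumption forces $\regression_{\sampD}=\regression_{\D}$, hence $\distname_{\D}^{\regressionset}(\sampD,\D)=0$ for \emph{every} survey distribution --- precisely the degenerate situation in which the tester has nothing to detect. The lemma is invoked in Theorem~\ref{Theorem: Tester Accepts or Rejects} exactly when $\sampD$ may induce a different optimal model (case 2, where $(\distname_{\D}^{\regressionset}(\sampD,\D))^2>5\tol$), and your proof does not cover that case. The paper instead anchors at $\optimalsurveyf$, the population minimizer of the squared loss under $\sampD$, and reads Assumption~\ref{Assumption : Assumptions on the linear regression model} as asserting that the residuals of $\optimalsurveyf$ under $\sampD$ are exogenous with the same variance, i.e.\ $\E_{(\covariate,\response)\sim\sampD}\tbrac{(\optimalsurveyf(\covariate)-\response)^2}=\regnoisevariance$, while allowing $\optimalsurveyf\neq\optimalf$. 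With that substitution your sandwich goes through verbatim: $\regnoisevariance=\E_{\sampD}\tbrac{(\optimalsurveyf(\covariate)-\response)^2}\le\E_{\sampD}\tbrac{(\surveyf(\covariate)-\response)^2}$ by population optimality of $\optimalsurveyf$, which the uniform bound converts to the empirical loss of $\surveyf$ plus $B$; and $\regnoisevariance\ge\frac{1}{\sizeS}\sum_{(\covariate,\response)\in\surveyset}(\optimalsurveyf(\covariate)-\response)^2-B\ge\frac{1}{\sizeS}\sum_{(\covariate,\response)\in\surveyset}(\surveyf(\covariate)-\response)^2-B$ by the bound at the fixed hypothesis $\optimalsurveyf$ and empirical optimality of $\surveyf$.

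A second, smaller slip: for $\ridge$ you quote the Rademacher complexity of the $\ell_2$ ball as $O(1/\sqrt{\sizeS})$ with a dimension-free constant. Under Assumption~\ref{Assumption: Bounded Variables} the covariates are bounded only in $\ell_\infty$, so $\norm{\covariate}_2$ can be as large as $\sqrt{\d}$, and the correct bound (Lemma~\ref{lemma:Rademacher Complexity of linear hypothesis} with $\norm{\mathbf{X}^T}_{2,2}\le\sqrt{\d\sizeS}$) is $\sqrt{\d/\sizeS}$. This dimension dependence is exactly why Table~\ref{table: sample complexity} requires $\sizeS=\Omega(\d/\tol^2)$ for $\ridge$, in contrast to $\Omega(\log(\d)/\tol^2)$ for $\lasso$; a dimension-free Ridge bound would contradict that entry.
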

    \begin{table}[ht!]
      \caption{Sufficient Size of Survey Data 
      }
      \label{table: sample complexity}
      \centering
      \begin{tabular}{llll}
        \toprule
        Hypothesis Class($\boundedlinfunc$) & $\boundedlinfunc_1$(\lasso)     & $\boundedlinfunc_2$(\ridge)     & $ \kernelset
        $(\kerna) \\
        \midrule
        Size of $\surveyset$ ($\sbound(\regressionset)$) & $\Omega\fbrac{\frac{\log(d)} {\tol^2}}$ & $\Omega\fbrac{\frac{d}{\tol^2}}$  & $\Omega\fbrac{\frac{\kernelnormbound^2}{\tol^2}}$\tablefootnote{$\kernelnormbound^2$ is  the upper bound of        $|\kernel\fbrac{\covariate,\covariate'}|$.}    \\
        \bottomrule
      \end{tabular}
    \end{table}

\begin{remark}[\textbf{$\sparsity$-Sparse Linear Regression}]\label{remark: s-sparse regression}
    For the hypothesis class $\boundedlinfunc_1$($\lasso$), one might be interested in $\sparsity$-sparse linear regression, In that case we consider the coefficient vector $\coeff$ to be
    $\sparsity$-sparse  and the hypothesis class is defined by  $\{\covariate \rightarrow \inn{\coeff}{\covariate}:\norm{\coeff}_1 \leq \linweight, \norm{\coeff}_0 \leq \sparsity\}$. Given survey data of size $\sizeS = \Omega\fbrac{\frac{\log(\sparsity)}{\tol^2}}$ from the distribution. If Assumptions \ref{Assumption : Assumptions on the linear regression model} and \ref{Assumption: Bounded Variables} hold, then with probability at least $1 - \confidence$, we have $\abs{\regnoisevariance - \frac{1}{\sizeS}\sum_{(\covariate, \response) \in \surveyset} (f_\surveyset(\covariate)-\response)^2} \leq \frac{\tol}{10}$. \color{red}
    \color{black}
\end{remark}

\textbf{Discussion: Relation to Out-Of-Distribution (OOD) Generalization.} The OOD generalization literature assumes an intrinsic model can be learned across distributions, i.e. the performance of the learned hypothesis generalizes well to OOD data (Assumptions A--D in~\citep{liu2023OODgeneralizationsurvey}).  Our mechanism, on the other hand, works on the case where sampling from a different distribution results in a different model being learned. In other words, if there is an intrinsic model that can be learned across distributions, the distance $\distname_{\D}^{\regressionset}(\sampD, \D)$ for the model class $\regressionset$ would be $0$ for all distributions $\sampD$ and $\D$. However, if that is not the case, we would efficiently detect whether the model learned from the survey distribution $\sampD$ generalizes well to the true distribution $\D$.

\subsection{Lower Bound on Sample Complexity: Advantage of Not Reconstructing the Model}

{\SurVerif} tests the model-specific credibility of a given sample survey without reconstructing the model itself. The fact that we don't reconstruct the model helps us to ensure that the sample complexity is independent of the dimension.
The following lemma proves that the number of samples that any algorithm that reconstructs the model to estimate model-specific distance needs grows linearly with dimension.

\begin{restatable}[\textbf{Lower Bound on Testing with Model Reconstruction}]{lemma}{LBReconst}\label{Lemma: Lower Bound for Reconstruction}
  Under Assumption~\ref{Assumption: Bounded Variables}, and when $\mineigen\fbrac{\mathrm{Cov}\fbrac{\covariate}} \geq \mineigen{}$, any algorithm that reconstructs the model to estimate the distance $\distname_{\D}^{\regressionset}(\sampD, \D))$ within $\approxerror$ additive error  must make $\bigomega{\frac{\d\mineigen\regnoisevariance}{\approxerror^2}}$ queries.
\end{restatable}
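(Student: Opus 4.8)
\emph{Overall strategy.} The plan is to turn the estimation of $\distname$ into a parameter-reconstruction problem and then invoke a classical minimax lower bound for linear regression. Under Assumption~\ref{Assumption : Assumptions on the linear regression model} the optimal model for $\D$ is the linear map $f_\D(\covariate)=\inn{\truecoeff}{\covariate}$ with true coefficient $\truecoeff$, and the survey-optimal model is $f_{\sampD}(\covariate)=\inn{\coeff_\surveyset}{\covariate}$ for a fixed $\coeff_\surveyset$. Writing $\bSigma=\mathrm{Cov}(\covariate)$, Definition~\ref{Definition: Statistical distance between functions} yields the closed form
\[
\distname_\D^\regressionset(\sampD,\D)=\distF(f_{\sampD},f_\D)=\sqrt{(\coeff_\surveyset-\truecoeff)^\top\bSigma\,(\coeff_\surveyset-\truecoeff)}=\norm{\coeff_\surveyset-\truecoeff}_{\bSigma}.
\]
A procedure that ``reconstructs the model'' outputs an estimate $\empcoeff$ of $\truecoeff$ and reports $\norm{\coeff_\surveyset-\empcoeff}_{\bSigma}$ as its estimate of $\distname$. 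By the reverse triangle inequality the reported value differs from the truth by at most $\norm{\empcoeff-\truecoeff}_{\bSigma}$, and since the survey coefficient $\coeff_\surveyset$ may point along any axis, the functional probes $\empcoeff-\truecoeff$ in an arbitrary direction; hence guaranteeing $\approxerror$-accuracy uniformly over survey models forces the stronger requirement that $\empcoeff$ reconstruct $\truecoeff$ to $\bSigma$-accuracy $\approxerror$ in \emph{every} coordinate. This is exactly the step that separates reconstruction-based testers from \SurVerif{}, and it is where the dimension dependence is injected.

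\emph{Hard instance and information bound.} I would then exhibit a family of true models that is hard to reconstruct and apply Assouad's lemma. Fix a covariate law whose covariance sits at the admissible spectral floor, $\bSigma=\mineigen\,\bI$ (so that $\mineigen(\mathrm{Cov}(\covariate))\ge\mineigen$ holds while Assumption~\ref{Assumption: Bounded Variables} can be respected), and let the true coefficient range over the hypercube $\truecoeff_\omega=\delta\sum_{j=1}^{\d}\omega_j\be_j$ for $\omega\in\sbrac{\pm1}^{\d}$, with step $\delta$ to be tuned against $\approxerror$. Each instance generates observations $\response=\inn{\truecoeff_\omega}{\covariate}+\regnoise$ with $\regnoise\sim\sN(0,\regnoisevariance)$, so two instances differing only in coordinate $j$ induce, conditionally on $\covariate$, shifted Gaussians whose per-sample Kullback--Leibler divergence is $\tfrac{1}{2\regnoisevariance}\E_{\covariate}\tbrac{\inn{\truecoeff_\omega-\truecoeff_{\omega'}}{\covariate}^2}=\tfrac{2\delta^2\mineigen}{\regnoisevariance}$, while the neighbours are separated in the $\bSigma$-metric by $\norm{\truecoeff_\omega-\truecoeff_{\omega'}}_{\bSigma}=2\delta\sqrt{\mineigen}$.

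\emph{Balancing and the main obstacle.} Assouad's lemma then lower bounds the minimax reconstruction error by a sum over the $\d$ coordinates of the per-coordinate separations, valid as long as each neighbouring divergence is a constant, i.e. as long as $n\,\delta^2\mineigen/\regnoisevariance=O(1)$; choosing $\delta$ at this information threshold and demanding accuracy $\approxerror$ forces $n=\bigomega{\d\,\mineigen\,\regnoisevariance/\approxerror^{2}}$. An equivalent route is the van Trees / Cram\'er--Rao inequality: the Fisher information of $\truecoeff$ from $n$ i.i.d.\ samples is $\tfrac{n}{\regnoisevariance}\bSigma$, giving $\mathrm{Cov}(\empcoeff)\succeq\tfrac{\regnoisevariance}{n}\bSigma^{-1}$ and hence $\E\,\norm{\empcoeff-\truecoeff}_{\bSigma}^2=\Tr\fbrac{\bSigma\,\mathrm{Cov}(\empcoeff)}\ge\tfrac{\regnoisevariance\,\d}{n}$, which collapses to the same scaling after the eigenvalue normalization of the hard instance. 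I expect the delicate point to be precisely this spectral bookkeeping: $\distname$ is measured in the $\bSigma$-geometry whereas the Gaussian noise and the packing live in the Euclidean geometry, so one must pin the covariance spectrum at the scale $\mineigen$ throughout and track carefully which quantity --- the information budget, the packing separation, or the target accuracy $\approxerror$ --- carries which power of $\mineigen$, while simultaneously verifying that every $\truecoeff_\omega$ in the family, together with the chosen covariate law, continues to satisfy the boundedness constraints $\abs{\response}\le1$ and $\norm{\covariate}_\infty\le1$ of Assumption~\ref{Assumption: Bounded Variables}.
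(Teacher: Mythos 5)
Your proposal is correct in outline but follows a genuinely different route from the paper. The paper constructs no hard instance at all: it cites a continuum-Fano minimax bound of Duchi and Wainwright, which under Assumption~\ref{Assumption: Bounded Variables} gives the Euclidean guarantee $\norm{\truecoeff-\empcoeff}_2^2 \geq \d\regnoisevariance/(32\n)$ for any estimator $\empcoeff$ built from $\n$ samples, and then transfers this to the functional metric in one line via $\E_{\covariate\sim\D}\tbrac{\inn{\truecoeff-\empcoeff}{\covariate}^2} = (\truecoeff-\empcoeff)^\top\E\tbrac{\covariate\covariate^\top}(\truecoeff-\empcoeff) \geq \mineigen\norm{\truecoeff-\empcoeff}_2^2$; setting the result below $\approxerror^2$ yields the stated $\bigomega{\d\mineigen\regnoisevariance/\approxerror^2}$. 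That eigenvalue conversion is exactly where the factor $\mineigen$ in the statement originates. You instead make the reduction from distance estimation to reconstruction explicit (reverse triangle inequality plus an adversarial survey, e.g.\ $\coeff_\surveyset=\truecoeff$), and run Assouad/van Trees natively in the $\bSigma$-weighted geometry. Your route is self-contained where the paper's is a citation, and it avoids the lossy passage through the Euclidean norm; the paper's is shorter.

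However, your bookkeeping of $\mineigen$ is internally inconsistent, and closing the argument needs one extra line. Because both the per-sample KL divergence ($2\delta^2\mineigen/\regnoisevariance$) and the packing separation ($2\delta\sqrt{\mineigen}$) are measured through $\bSigma$, the eigenvalue cancels: at the information threshold $\delta^2\asymp\regnoisevariance/(\n\mineigen)$ the Assouad risk is $\asymp \d\delta^2\mineigen = \d\regnoisevariance/\n$, and likewise your van Trees computation gives $\Tr\fbrac{\bSigma\cdot\frac{\regnoisevariance}{\n}\bSigma^{-1}} = \d\regnoisevariance/\n$ with no $\mineigen$ anywhere. So your construction proves $\n = \bigomega{\d\regnoisevariance/\approxerror^2}$, not the claimed $\bigomega{\d\mineigen\regnoisevariance/\approxerror^2}$; the assertion that the two ``collapse to the same scaling after the eigenvalue normalization'' is wrong --- they differ by exactly a factor of $\mineigen$. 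The proof is nonetheless salvaged, and in fact strengthened, by observing that Assumption~\ref{Assumption: Bounded Variables} ($\norm{\covariate}_\infty\leq 1$) forces $\mineigen\leq 1$, so your bound implies the stated one a fortiori; you must say this explicitly. Finally, the boundedness tension you flag (Gaussian noise versus $\abs{\response}\leq 1$) is real but equally present in the paper, whose cited lower bound also assumes Gaussian noise under the same assumption, so it does not disadvantage your argument specifically.
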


Furthermore, if $\sampD$ and $\D$ are two distributions such that their respective loss distributions are subgaussian distributions with same variance but the means differ by $\approxerror$, then $\distname_{\D}(\sampD, \D) = \approxerror$ (by Lemma~\ref{lem:dist}). Since distinguishing between the two such subgaussian distributions 
requires $\Omega(1/\approxerror^2)$ we observe that the 
sample complexity of {\SurVerif} is tight in terms of dependence on $\approxerror$.

\section{Experimental Analysis}\label{section:experiment}
In this section, we empirically verify whether our tester  \SurVerif{} performs  as per the theoretical analysis. 
In particular, we are interested in the following research questions:

\textbf{RQ1.} Does \SurVerif{} yield \textit{accept} when the survey data $\surveyset$ is close to being a credible dataset with respect to the model class, and likewise, does \SurVerif{} indeed reject when $\surveyset$ is far from being credible? 
Specifically, how does the acceptance rate of  \SurVerif{} change as the 
the distance between the survey set $\surveyset$ and the true distribution $\valD$, and the tolerance parameter change?

\textbf{RQ2.} How many \textit{i.i.d.} samples-to-test from the true distribution $\valD$ does \SurVerif{} require to certify if the survey data $\surveyset$ is credible? While the theoretical guarantee is for the worst-case runtime of \SurVerif{}, we would like to check if \SurVerif{} can reject a far from credible survey data $\surveyset$ with much less number of \textit{sample-to-test}.





\begin{figure*}[t!]
\centering%
\begin{minipage}{0.48\textwidth}
\centering
\includegraphics[scale=.5, width=.8\linewidth]{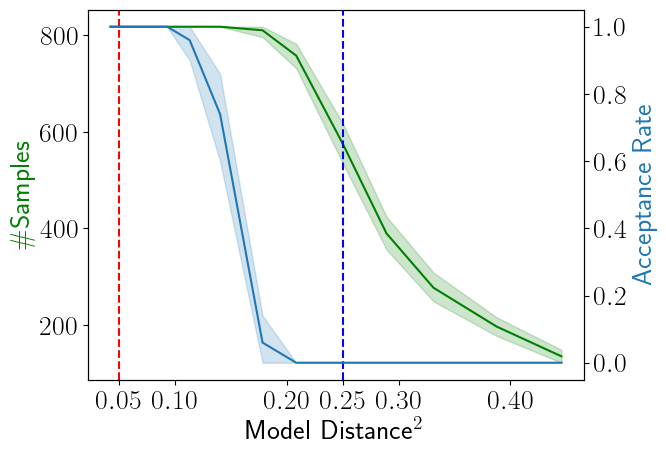}
\caption{\small{Acceptance rate of \SurVerif{} w.r.t. model class $\ridgeset$ on Synthetic Data vs. change in $\mu$ (over 50 runs) for $\delta = 0.1$ and $\epsilon = 0.05$.}}\label{Fig: SynthRidge}
\end{minipage}\hspace*{1em}
\begin{minipage}{0.48\textwidth}
\centering
\includegraphics[scale=.5, width=.8\linewidth]{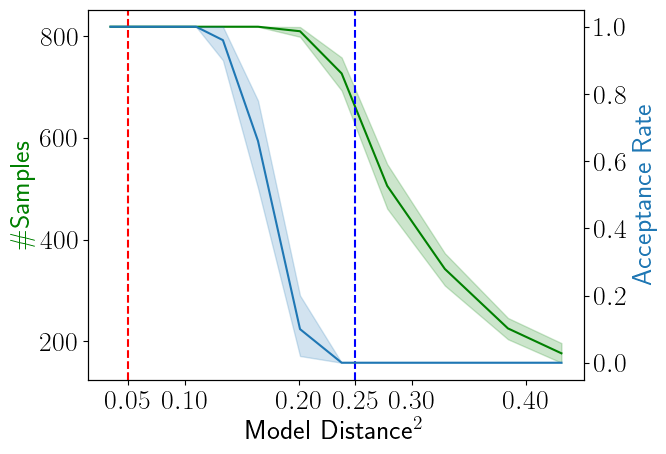}
\caption{\small{Acceptance rate of \SurVerif{} w.r.t. model class $\lassoset$ on Synthetic Data vs. change in $\mu$ (over 50 runs) for $\delta = 0.1$ and $\epsilon = 0.05$.}}\label{Fig: SynthLASSO}
\end{minipage}
\\\vspace*{1em}
\begin{minipage}{0.48\textwidth}\centering
\includegraphics[scale=.5, width=.8\linewidth]
{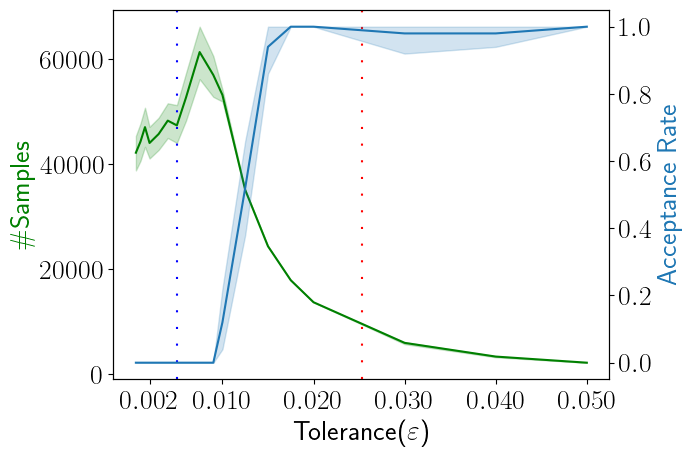}
\caption{\small{Acceptance rate of \SurVerif{} w.r.t. model class $\ridgeset$ on \acsincome~ (over 50 runs) for $\delta = 0.1$ and varying range of $\epsilon$.}}\label{Fig: ACSRidge}
\end{minipage}\hfill
\begin{minipage}{0.48\textwidth}
\centering
\includegraphics[scale=.5, width=.8\linewidth]{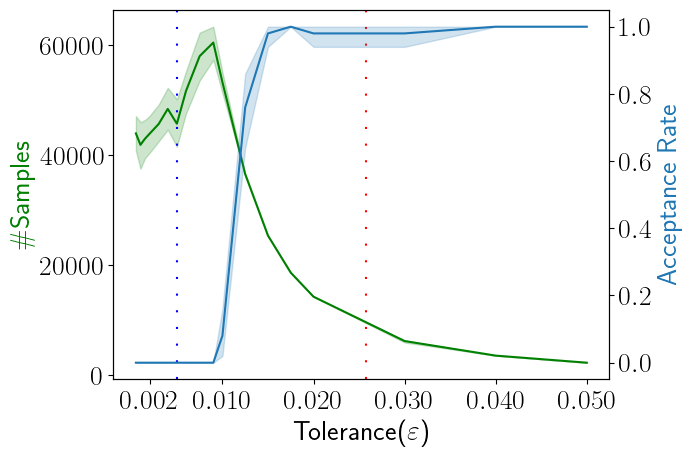}
\caption{\small{Acceptance rate of \SurVerif{} w.r.t. model class $\lassoset$ on \acsincome~ (over 50 runs) for $\delta = 0.1$ and varying range of $\epsilon$.}}\label{Fig: ACSLasso}
\end{minipage}
\end{figure*}

\textbf{Experimental Setup.} We implement all the algorithms in Python 3.10 and use \texttt{LinearRegression} from \texttt{scikit-learn} to learn $\surveyf$. We run our simulations on Google Collaboratory with 2 Intel(R) Xeon(R) CPU @ 2.20GHz, 12.7GB RAM, and 107.7GB Disk Space. 

\textbf{Setup 1: Synthetic.} We generate a synthetic dataset, where each coordinate of each $\covariate$ is generated from $\normal(0,1)$, and $\regnoise$ is generated from $\normal(0,0.1)$. For $\sampD$, we generate $\coeff_{\surveyset} \in \R^{50}$ such that each coordinate is from $\normal(0,0.01)$. The size of our set $\surveyset$ thus obtained is 100,000. For $\valD$, we generate the coefficients $\coeff^*$ with each coordinate being generated from $\normal(\mu,0.01)$ with $\mu$ taking values from $0$ to $3$ at intervals of $0.1$. As the value of $\mu$ increases the model distance between $f_\surveyset$ and $\optimalf$ increases. 

\textbf{Setup 2: \acsincome.} As a real-world dataset, we consider the normalized \acsincome{} dataset, which exhibits well-known fairness issues between Gender and Racial groups~\citep{DBLP:conf/nips/DingHMS21}.
We chose $\surveyset$ to be generated through sampling from the subpopulation with the parameter Sex set to $2$ (\texttt{Female}), and the distribution  $\valD$ to be the subpopulation with Sex set to $1$ (\texttt{Male}). An important observation regarding this dataset is that the dataset does not satisfy the homoskedastcity assumption (Assumption~\ref{Assumption : Assumptions on the linear regression model}). In particular, over 50 trials, the correlation coefficient between the response variable $\response$, and the residuals $\response - f(\covariate)$ w.r.t. the model $f$ obtained is $0.88$.


\textbf{Results and Observations.} The findings from the experimental results on both the synthetic and the real-world data corroborate our theoretical results. The details are as follows:

\textbf{Findings related to RQ1:}  We run $\SurVerif{}$ on each of the synthetic datasets and the \acsincome{} dataset 50 times and record the average performance and $95$ percentile around it. 

\noindent1. \textbf{Acceptance Rate on Synthetic.}
In Figure~\ref{Fig: SynthRidge}~and~\ref{Fig: SynthLASSO}, the {\color{Cerulean} \textbf{BLUE}} curve indicates the acceptance rate of \SurVerif{} on synthetic datasets described above w.r.t. $\ridgeset$ ($\ridge$) and $\lassoset$ ($\lasso$), respectively. For both the model classes of $\lassoset$ and $\ridgeset$,
\SurVerif{} exhibits similar behavior. It \textit{starts with accepting all models} when the difference
of the coefficients, and correspondingly, \textit{the model distance is small}. As the difference between the 
coefficients, and correspondingly \textit{the model distance increase, \SurVerif{} starts rejecting with increasing probability}, and rejects all the models generated with $\mu \geq 0.9, \distname \geq 0.20$ (resp. $\mu \geq 1, \distname \geq 0.21$) for model class $\ridgeset$ (resp. $\lassoset$).  The {\color{red} red} and {\color{blue} blue} dashed vertical lines indicate the value of $\tol$ and $5\tol$ respectively. Hence, when the model-distance lies to the right of the {\color{blue} blue} line, $\SurVerif{}$ is expected to reject, whereas values to the left of the {\color{red} red} line are expected to be accepted validating our theoretical results.

\noindent2. \textbf{Acceptance Rate on \acsincome{}:}
In Figure~\ref{Fig: ACSRidge} and \ref{Fig: ACSLasso}, the {\color{Cerulean} \textbf{BLUE}} line indicates the acceptance rate of \SurVerif{}
on \acsincome{} w.r.t. $\ridgeset$ and $\lassoset$, respectively. 
We run \SurVerif{} with varying tolerance parameter $\approxerror$. 
\SurVerif{} always rejects for $\approxerror$ less than $0.01$, and accepts for higher values.
The {\color{red} red} and {\color{blue} blue} dotted vertical lines indicate the value of $\distname\fbrac{\sampD,\D} \approx 0.02$ and $\distname\fbrac{\sampD,\D}/5$ respectively. Hence, as expected, we observe that for values of $\tol$ to the right of the {\color{red} red} line, $\SurVerif{}$ is accepts more often, while for values to the left of the {\color{blue} blue} line, $\SurVerif{}$ rejects. This further indicates that the $\distname\fbrac{\sampD,\D}$ between male and female subpopulations of \acsincome{} is at least $0.02$ with probability $0.9$.

\textbf{Findings related to RQ2: Sample Complexity.}
In Figure~\ref{Fig: SynthRidge}, \ref{Fig: SynthLASSO}, \ref{Fig: ACSRidge} and \ref{Fig: ACSLasso}, the {\color{Green} \textbf{GREEN}} curve demonstrates \#samples-to-set from $\sampD$ that $\SurVerif{}$ needed. As expected, in Figure~\ref{Fig: SynthRidge} and \ref{Fig: SynthLASSO}, i.e., while running on the synthetic dataset, as long as \SurVerif{} accepts \#samples-to-set are as per the worst-case complexity. But as the distance increases and the acceptance rate of \SurVerif{} decreases, the number of \#samples-to-set needed to reject also decreases.  
For both the model classes $\ridgeset$, and $\lassoset$, the algorithm starts rejecting significantly faster once it reaches the $5\approxerror$ threshold. 
 
For \acsincome{}, since the FDD distance  
between the two distributions is $0.02$, \SurVerif{} accepts ({\color{Cerulean} \textbf{BLUE}} line) 
when $\approxerror$ increases. In this regime, we observe the predicted $1/\approxerror^2$ decay in the sample complexity ({\color{Green} \textbf{GREEN}} line). 
But when $\approxerror$ goes smaller, \SurVerif{} tends to reject. Specially, when $\approxerror \leq \distname\fbrac{\sampD,\D}/5 = 0.004$, the FDD distance being too far w.r.t. $\approxerror$, the early stopping kicks in and the sample complexity hits a plateau.


In conclusion, \textit{we observe that the effective sample complexity of test decreases as the distance of $\sampD$ from $\D$ increases, and the effective sample complexity of \SurVerif{} is much lower than that of the worst case complexity.} Extended experimental results are presented in the Appendix. 

\section{Discussions, Limitations, and Future Works}\label{section:future work} 
We consider the problem of testing the credibility of survey data when used to develop a regression model. We propose an algorithm, \SurVerif{}, that certifies the data quality by evaluating and testing the \distname{} metric between survey and the true distribution without explicitly reconstructing the models—an approach that, to the best of our knowledge, is novel in the testing literature. 
Notably, \#samples-to-test required by \SurVerif{} is independent of the data dimension, thereby overcoming the curse of dimensionality in this context. 

In this paper, though we provide a general framework for testing credibility, our theoretical analysis focuses exclusively on linear and kernel regression models with bounded response, and homoskedastic, and non-correlated noise, which may limit its applicability. In future, it would be interesting to extend the model-specific credibility testing to regressions with heteroskedastic and correlated noise. Furthermore, it would be interesting to extending the testing framework of our algorithm beyond the regression models with bounded response, i.e. where closed-form Rademacher complexity based generalization bounds are not known. Furthermore, as indicated by the experiments, the proposed framework works for unbounded data coming from tail-bounded distributions. Thus, it will be interesting to extend the theoretical analysis to such settings.

\section*{Acknowledgement}
This work has been supported by the Inria-ISI, Kolkata associate team “SeRAI”. We also
acknowledge the French National Research Agency (ANR) in the framework of the PEPR AI project FOUNDRY (ANR-23-PEIA-0003), and the ANR JCJC for the
REPUBLIC project (ANR-22-CE23-0003-01) for partially supporting this work.


\bibliographystyle{alpha}
\bibliography{refs_NeurIPS.bib}

\newpage
\appendix
\part{Appendix}
\parttoc


\newpage
\section{\distname-variance Decomposition of Loss: Proof of Lemma~\ref{lem:dist}}

\distanceLemma*

\begin{proof}
Observe that 
\begin{align*}
    &~~\E_{(\covariate, \response) \sim \valD}\tbrac{\fbrac{f_\surveyset(\covariate)-\response}^2}\\
    &= \E_{\D, \regnoise}\tbrac{\fbrac{f_\surveyset(\covariate)-\optimalf(\covariate) - \regnoise}^2}\\
    &= \E_{\D, \regnoise}\tbrac{\fbrac{f_\surveyset(\covariate)-\optimalf(\covariate)}^2 + \regnoise^2 -2\fbrac{f_\surveyset(\covariate)- \optimalf(\covariate)}\regnoise}\\
    &= \E_{\D}\tbrac{\fbrac{f_\surveyset(\covariate)- \optimalf(\covariate)}^2} + \E_{\regnoise}\tbrac{\regnoise^2} - 2\E_{\D}\E_{\regnoise}\tbrac{\fbrac{f_\surveyset(\covariate)- \optimalf(\covariate)}\regnoise}\\
    &= \E_{\D}\tbrac{\fbrac{f_\surveyset(\covariate)- \optimalf(\covariate)}^2} + \regnoisevariance + 2\E_{\D}\tbrac{(f_\surveyset(\covariate)- \optimalf(\covariate))\E_{\regnoise}[\regnoise]} &&\text{From Assumption \ref{Assumption : Assumptions on the linear regression model}}\\
    &= \E_{\D}\tbrac{\fbrac{f_\surveyset(\covariate)- \optimalf(\covariate)}^2} + \regnoisevariance\\
    &= \distF^2(f_{\surveyset}, \optimalf) + \regnoisevariance
\end{align*}

\end{proof}
\section{Generic Two-sided Generalization Bounds: Proof of Thoerem~\ref{theorem: Regression Generalization Bound}}

Before proving the theorem, we state the following results that are relevant to our proof:

\begin{lemma}[\textbf{Talagrand's Contraction Lemma}~\cite{LedouxTalagrand/Book/1991/ProbabilityInBanachSpaces}]\label{Lemma: Talagrand's Contraction Lemma}
    Given a real-valued $\losslipschitz$-lipschitz loss function $\loss$, a sample set $\surveyset$ and a hypothesis class $\hypothesisset$ of real valued function, the following inequality holds:
    \begin{align*}
        \empradcomp{\surveyset}{\loss\circ\hypothesisset} \leq \losslipschitz\empradcomp{\surveyset}{\hypothesisset}
    \end{align*}
\end{lemma}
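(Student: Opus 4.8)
The plan is to prove the inequality by the classical \emph{coordinate peeling} argument, replacing the composed functions $\loss(\hypothesis(\datapoint_i),\cdot)$ by the bare hypothesis values one index at a time while paying a factor $\losslipschitz$ per coordinate. Since each sample carries a fixed label, it is cleanest to write $\phi_i(t) \triangleq \loss(t,\response_i)$ for the $i$-th labelled point $(\datapoint_i,\response_i)$; by the hypothesis that $\loss$ is $\losslipschitz$-Lipschitz in its first argument, every $\phi_i$ is $\losslipschitz$-Lipschitz on $\R$. Dropping the harmless normalisation $1/\sizeS$, the goal becomes
$$\E_{\mathbf{r}}\Big[\sup_{\hypothesis\in\hypothesisset}\sum_{i=1}^{\sizeS}\rademacher_i\,\phi_i(\hypothesis(\datapoint_i))\Big]\;\le\;\losslipschitz\,\E_{\mathbf{r}}\Big[\sup_{\hypothesis\in\hypothesisset}\sum_{i=1}^{\sizeS}\rademacher_i\,\hypothesis(\datapoint_i)\Big].$$

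First I would isolate a single coordinate, say $i=\sizeS$: condition on $\rademacher_1,\dots,\rademacher_{\sizeS-1}$, write $u(\hypothesis)\triangleq\sum_{i<\sizeS}\rademacher_i\,\phi_i(\hypothesis(\datapoint_i))$, and expand the expectation over the last $\pm1$ variable as
$$\E_{\rademacher_\sizeS}\sup_{\hypothesis}\big[u(\hypothesis)+\rademacher_\sizeS\phi_\sizeS(\hypothesis(\datapoint_\sizeS))\big]=\tfrac12\sup_{\hypothesis}\big[u(\hypothesis)+\phi_\sizeS(\hypothesis(\datapoint_\sizeS))\big]+\tfrac12\sup_{\hypothesis'}\big[u(\hypothesis')-\phi_\sizeS(\hypothesis'(\datapoint_\sizeS))\big].$$
Merging the two independent suprema into one supremum over the pair $(\hypothesis,\hypothesis')$ and applying $\phi_\sizeS(\hypothesis(\datapoint_\sizeS))-\phi_\sizeS(\hypothesis'(\datapoint_\sizeS))\le \losslipschitz\,|\hypothesis(\datapoint_\sizeS)-\hypothesis'(\datapoint_\sizeS)|$ bounds this by $\tfrac12\sup_{\hypothesis,\hypothesis'}\big[u(\hypothesis)+u(\hypothesis')+\losslipschitz|\hypothesis(\datapoint_\sizeS)-\hypothesis'(\datapoint_\sizeS)|\big]$. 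The crucial observation is that the bracketed expression is symmetric in $\hypothesis,\hypothesis'$, so the supremum can be taken over the region where $\hypothesis(\datapoint_\sizeS)\ge\hypothesis'(\datapoint_\sizeS)$; there the absolute value becomes the signed difference $\losslipschitz(\hypothesis(\datapoint_\sizeS)-\hypothesis'(\datapoint_\sizeS))$, and dropping the region constraint only enlarges the supremum. This lets me split back into $\tfrac12\sup_{\hypothesis}[u(\hypothesis)+\losslipschitz\hypothesis(\datapoint_\sizeS)]+\tfrac12\sup_{\hypothesis'}[u(\hypothesis')-\losslipschitz\hypothesis'(\datapoint_\sizeS)]=\E_{\rademacher_\sizeS}\sup_{\hypothesis}[u(\hypothesis)+\rademacher_\sizeS\losslipschitz\,\hypothesis(\datapoint_\sizeS)]$, i.e.\ coordinate $\sizeS$ now appears as the bare value $\hypothesis(\datapoint_\sizeS)$ scaled by $\losslipschitz$.

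Finally I would iterate this reduction over $i=\sizeS-1,\dots,1$ (formally an induction on the number of un-peeled coordinates, taking the outer expectation over the remaining Rademacher variables at each stage, which is legitimate because the single-coordinate inequality holds pointwise in the conditioned variables). After all $\sizeS$ steps every $\phi_i$ has been replaced by $\losslipschitz\,\hypothesis(\datapoint_i)$, and reinstating the $1/\sizeS$ normalisation yields exactly $\empradcomp{\surveyset}{\loss\circ\hypothesisset}\le\losslipschitz\,\empradcomp{\surveyset}{\hypothesisset}$. The main obstacle — and the only genuinely non-mechanical step — is the symmetry argument used to discard the absolute value: one must verify that replacing $|\hypothesis(\datapoint_\sizeS)-\hypothesis'(\datapoint_\sizeS)|$ by the signed difference preserves the correct inequality direction. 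This is handled by noting that the double supremum is invariant under the swap $\hypothesis\leftrightarrow\hypothesis'$, so the sign of the difference may be taken non-negative without loss, after which relaxing back to an unconstrained supremum gives the required upper bound; a little extra care is needed when $\hypothesisset$ is not symmetric or the suprema are not attained, in which case the argument is run with $\sup$-approximating sequences.
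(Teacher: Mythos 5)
Your argument is correct, but note that the paper itself contains no proof of this lemma to compare against: it is imported verbatim, with a citation to Ledoux and Talagrand's book, and used as a black box in the proof of Theorem~\ref{theorem: Regression Generalization Bound}. What you have reconstructed is the standard coordinate-peeling proof of the scalar contraction lemma (the version appearing, e.g., as Lemma~5.7 in the Mohri et al.\ textbook the paper cites elsewhere), and every step is sound: the conditional expansion over the last Rademacher sign, the merge of the two suprema into a double supremum over $(\hypothesis,\hypothesis')$, the Lipschitz bound $\phi_\sizeS(\hypothesis(\datapoint_\sizeS))-\phi_\sizeS(\hypothesis'(\datapoint_\sizeS))\leq\losslipschitz\abs{\hypothesis(\datapoint_\sizeS)-\hypothesis'(\datapoint_\sizeS)}$, the symmetry argument discarding the absolute value, and the induction that peels the remaining coordinates pointwise in the conditioned signs. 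One small remark: your closing caveat about $\hypothesisset$ failing to be symmetric is unnecessary, since both arguments of the double supremum range over the same class, so invariance under the swap $\hypothesis\leftrightarrow\hypothesis'$ is automatic; only the possible non-attainment of the suprema genuinely requires the $\sup$-approximating sequences you mention. Incidentally, your version also quietly sharpens the original Ledoux--Talagrand comparison theorem (which carries an extra factor of $2$ and a $\phi_i(0)=0$ normalization) to the factor-free form stated in the paper, which is exactly the form the machine-learning literature uses.
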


\begin{lemma}[\textbf{McDiarmid's Inequality}~\citep{mohri2018}]\label{Lemma: McDiarmid's Inequality}
    Let $X_1,X_2,\ldots,X_m \in \sX^m$ be iid random variables and there exists a constant $c$ such that $\func{f}{\sX^m}{\R}$ satisfies:
    \begin{align*}
        \abs{f\fbrac{x_1,\ldots,x_i,\ldots,x_m} - f\fbrac{x_1,\ldots,x'_i,\ldots,x_m}} \leq c,\;\;\forall x_1, \ldots, x_m,x'_i \in \covariatedomain 
    \end{align*}
    Then, for any $\approxerror > 0$ the following inequality hold:
    \begin{align*}
        \Pr\tbrac{\abs{f\fbrac{X_1,\ldots,X_m} - \E\tbrac{f\fbrac{X_1,\ldots,X_m}}} \geq \approxerror} \leq 2\exp\fbrac{\frac{-2\approxerror^2}{mc^2}}
    \end{align*}
\end{lemma}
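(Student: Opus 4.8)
The plan is to prove this as a two-sided uniform-convergence bound over $\hypothesisset$ via the classical symmetrization-and-concentration route, carried out in both deviation directions and combined with a union bound. Let $\losshypothesisset \triangleq \loss\circ\hypothesisset = \sbrac{(\covariate,\response)\mapsto\loss\fbrac{\hypothesis(\covariate),\response} : \hypothesis\in\hypothesisset}$ denote the induced loss class, every member of which maps into $\tbrac{0,\lossbound}$, and define the one-sided uniform-deviation functional $\Phi(\surveyset) \triangleq \sup_{\hypothesis\in\hypothesisset}\fbrac{\E_{(\covariate,\response)\sim\distribution}\tbrac{\loss\fbrac{\hypothesis(\covariate),\response}} - \frac{1}{\sizeS}\sum_{(\covariate,\response)\in\surveyset}\loss\fbrac{\hypothesis(\covariate),\response}}$.

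First I would establish the bounded-differences property of $\Phi$: since every $g\in\losshypothesisset$ takes values in $\tbrac{0,\lossbound}$, replacing a single sample point alters the empirical average uniformly in $\hypothesis$ by at most $\lossbound/\sizeS$, so $\Phi$ changes by at most $c = \lossbound/\sizeS$. Invoking the one-sided form of McDiarmid's inequality (Lemma~\ref{Lemma: McDiarmid's Inequality}) with this $c$ and a failure budget of $\confidence/4$ yields, with probability at least $1-\confidence/4$, the concentration $\Phi(\surveyset) \leq \E_{\surveyset}\tbrac{\Phi(\surveyset)} + \lossbound\sqrt{\frac{\log(4/\confidence)}{2\sizeS}}$.

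Next I would control $\E_{\surveyset}\tbrac{\Phi(\surveyset)}$ by symmetrization: introducing an independent ghost sample and Rademacher variables gives $\E_{\surveyset}\tbrac{\Phi(\surveyset)} \leq 2\radcomp{\losshypothesisset}$, the expected Rademacher complexity of the loss class. Because the theorem is stated in terms of the \emph{empirical} Rademacher complexity, I would pass from $\radcomp{\losshypothesisset}$ to $\empradcomp{\surveyset}{\losshypothesisset}$ with a second application of McDiarmid's inequality — the empirical complexity also has bounded differences $\lossbound/\sizeS$ — giving $\radcomp{\losshypothesisset}\leq\empradcomp{\surveyset}{\losshypothesisset}+\lossbound\sqrt{\frac{\log(4/\confidence)}{2\sizeS}}$ with probability at least $1-\confidence/4$. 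Chaining these inequalities, the two square-root terms accumulate (with the extra factor $2$ multiplying the Rademacher gap) to $3\lossbound\sqrt{\frac{\log(4/\confidence)}{2\sizeS}}$, and a union bound over the two McDiarmid events bounds the failure of this one-sided statement by $\confidence/2$. Finally, Talagrand's contraction lemma (Lemma~\ref{Lemma: Talagrand's Contraction Lemma}) replaces $\empradcomp{\surveyset}{\losshypothesisset}$ by $\losslipschitz\empradcomp{\surveyset}{\hypothesisset}$, turning the leading $2\empradcomp{\surveyset}{\losshypothesisset}$ term into $2\losslipschitz\empradcomp{\surveyset}{\hypothesisset}$.

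To obtain the absolute value, I would rerun the identical argument on the reverse functional $\Phi'(\surveyset) \triangleq \sup_{\hypothesis\in\hypothesisset}\fbrac{\frac{1}{\sizeS}\sum_{(\covariate,\response)\in\surveyset}\loss\fbrac{\hypothesis(\covariate),\response} - \E_{(\covariate,\response)\sim\distribution}\tbrac{\loss\fbrac{\hypothesis(\covariate),\response}}}$; the symmetry of the Rademacher variables makes the symmetrization step yield the \emph{same} bound $2\radcomp{\losshypothesisset}$, so $\Phi'$ obeys an identical high-probability inequality with its own $\confidence/2$ budget, and a final union bound over the two directions delivers the stated two-sided inequality with total failure probability at most $\confidence$. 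The main obstacle is not any single step but the confidence-budget bookkeeping: the two-sided requirement forces a $\confidence/2$ split per direction, the two nested McDiarmid applications inside each direction force a further $\confidence/4$ split (which is precisely what produces $\log(4/\confidence)$ under the square root), and one must use the \emph{one-sided} McDiarmid bound so that no spurious $\log 2$ creeps in. Tracking how the factor-$2$ Rademacher term interacts with the two concentration terms to yield exactly the constant $3$ rather than $2$ is the delicate part of the accounting.
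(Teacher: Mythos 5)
You have proved the wrong statement. The lemma in question is McDiarmid's bounded-differences inequality itself, but your entire argument is a proof of the two-sided uniform-convergence bound (Theorem~\ref{theorem: Regression Generalization Bound}, obtained in the paper via Lemma~\ref{Lemma: Two side rademacher bound} plus Talagrand's contraction), and it invokes McDiarmid's inequality twice as a black box --- once to concentrate $\Phi(\surveyset)$ around its mean and once to relate $\radcomp{\losshypothesisset}$ to $\empradcomp{\surveyset}{\losshypothesisset}$. As a proof of the lemma you were asked about, this is circular: you assume exactly the concentration inequality you are supposed to establish. (For the record, the paper offers no proof of this lemma either; it is imported directly from \cite{mohri2018}. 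Your symmetrization-plus-contraction pipeline, including the $\confidence/4$ budget splits and the constant $3$, is essentially the paper's proof of that \emph{other} result.)

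If a self-contained proof of McDiarmid is wanted, the standard route is the Doob martingale together with the Azuma--Hoeffding argument. Set $V_i \triangleq \E\tbrac{f(X_1,\ldots,X_m)\mid X_1,\ldots,X_i}-\E\tbrac{f(X_1,\ldots,X_m)\mid X_1,\ldots,X_{i-1}}$, so that $f(X_1,\ldots,X_m)-\E\tbrac{f(X_1,\ldots,X_m)}=\sum_{i=1}^{m}V_i$ and each $V_i$ has zero conditional mean. Independence of the $X_i$ together with the bounded-differences hypothesis confines $V_i$, conditionally on $X_1,\ldots,X_{i-1}$, to an interval of length at most $c$, so Hoeffding's lemma gives $\E\tbrac{e^{\lambda V_i}\mid X_1,\ldots,X_{i-1}}\leq e^{\lambda^2 c^2/8}$ for every $\lambda>0$. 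Telescoping these conditional moment generating functions and applying Markov's inequality yields $\Pr\tbrac{f-\E\tbrac{f}\geq\approxerror}\leq \exp\fbrac{\lambda^2 m c^2/8-\lambda\approxerror}$, and optimizing at $\lambda=4\approxerror/(mc^2)$ gives the one-sided bound $\exp\fbrac{-2\approxerror^2/(mc^2)}$; applying the same bound to $-f$ and taking a union bound produces the stated two-sided inequality with the factor $2$. Note that only independence (not identical distribution) is actually needed, and the one place where your accounting instinct transfers correctly is here: the factor $2$ in McDiarmid comes precisely from this final union bound over the two deviation directions.
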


We also introduce the definition of Rademacher Complexity that only depends on the class of functions under consideration

\begin{definition}[\textbf{Rademacher Complexity}~\citep{mohri2018}]\label{Definition: Rademacher Complexity}
    Let $\surveyset$ be a sample set of size $\sizeS \geq 1$ drawn as i.i.d. samples from the distribution $\distribution$. Then, the Rademacher Complexity of $\losshypothesisset$ is the expectation of the empirical Rademacher complexity over all samples of size $\sizeS$ drawn from $\distribution$:
    \begin{align*}
        \empradcomp{\sizeS}{\losshypothesisset} = \E_{\surveyset \sim \distribution^\sizeS}\tbrac{\empradcomp{\surveyset}{\losshypothesisset}}
    \end{align*}
\end{definition}

The next result is the intermediate lemma required, which quantifies how well the empirical mean estimates the true expectation over a bounded function class, in terms of its empirical Rademacher complexity (Definition~\ref{Definition: Empirical Rademacher Complexity}).

\begin{lemma}[\textbf{Two-sided Rademacher Bound for Bounded Functions}]\label{Lemma: Two side rademacher bound}
     Given a family of functions $\losshypothesisset$ containing functions  $\func{\losshypothesis}{\datadomain}{\tbrac{0,\lossbound}}$. Let $\surveyset$ be a sample set of size $\sizeS \geq 1$ drawn as i.i.d. samples from the distribution $\distribution$. Then with probability at least $1 - \confidence$ for all $\losshypothesis \in \losshypothesisset$:
    \begin{align*}
        \abs{\frac{1}{\n}\sum_{i \in [\n]} \losshypothesis(\datapoint) - \E\tbrac{\losshypothesis(\datapoint)}} \leq 2\empradcomp{\surveyset}{\losshypothesisset}+3\lossbound\sqrt{\frac{\log\frac{4}{\confidence}}{2\n}}
    \end{align*}
\end{lemma}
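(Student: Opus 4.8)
The plan is to establish the two one-sided uniform-deviation bounds separately and then combine them via a union bound to obtain the two-sided (absolute value) statement. Define the one-sided supremum deviation $\Phi(\surveyset) \triangleq \sup_{\losshypothesis \in \losshypothesisset}\fbrac{\E\tbrac{\losshypothesis(\datapoint)} - \frac{1}{\n}\sum_{i\in[\n]}\losshypothesis(\datapoint_i)}$. Since every $\losshypothesis \in \losshypothesisset$ takes values in $\tbrac{0,\lossbound}$, replacing a single sample $\datapoint_i$ by another point perturbs the empirical average, and hence $\Phi(\surveyset)$, by at most $\lossbound/\n$. Applying McDiarmid's inequality (Lemma~\ref{Lemma: McDiarmid's Inequality}) with $c = \lossbound/\n$ therefore gives, with probability at least $1 - \confidence'$, that $\Phi(\surveyset) \leq \E_\surveyset\tbrac{\Phi(\surveyset)} + \lossbound\sqrt{\frac{\log(1/\confidence')}{2\n}}$.

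The next step is to control the expectation $\E_\surveyset\tbrac{\Phi(\surveyset)}$ by the Rademacher complexity. I would use the standard symmetrization (ghost-sample) argument: introduce an independent copy $\surveyset'$ of the sample together with i.i.d.\ Rademacher signs $\rademacher_i$, and show that $\E_\surveyset\tbrac{\Phi(\surveyset)} \leq 2\empradcomp{\sizeS}{\losshypothesisset}$, where $\empradcomp{\sizeS}{\losshypothesisset}$ is the (true) Rademacher complexity of Definition~\ref{Definition: Rademacher Complexity}. To pass from the true to the empirical Rademacher complexity, I would invoke McDiarmid once more, now applied to the map $\surveyset \mapsto \empradcomp{\surveyset}{\losshypothesisset}$, which again has bounded differences $\lossbound/\n$; this yields $\empradcomp{\sizeS}{\losshypothesisset} \leq \empradcomp{\surveyset}{\losshypothesisset} + \lossbound\sqrt{\frac{\log(1/\confidence')}{2\n}}$ with probability at least $1-\confidence'$. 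Chaining these three inequalities bounds the over-estimation direction by $2\empradcomp{\surveyset}{\losshypothesisset} + 3\lossbound\sqrt{\frac{\log(1/\confidence')}{2\n}}$, holding with probability at least $1 - 2\confidence'$.

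The under-estimation direction $\sup_{\losshypothesis}\fbrac{\frac{1}{\n}\sum_i \losshypothesis(\datapoint_i) - \E\tbrac{\losshypothesis(\datapoint)}}$ is handled identically, using that the empirical Rademacher complexity is invariant under the sign flip $\losshypothesisset \mapsto -\losshypothesisset$ because the Rademacher variables are symmetric. Finally I would set $\confidence' = \confidence/4$ and union-bound over the (at most four) McDiarmid events used across the two directions; since $\log(1/\confidence') = \log(4/\confidence)$, this produces exactly the claimed absolute-value bound holding with probability at least $1-\confidence$. The main obstacle is the symmetrization step $\E_\surveyset\tbrac{\Phi(\surveyset)} \leq 2\empradcomp{\sizeS}{\losshypothesisset}$: it is the one place where a genuinely probabilistic idea (coupling with a ghost sample and inserting sign-flipping Rademacher variables) is required, whereas everything else reduces to the bounded-differences property together with careful tracking of constants and failure probabilities.
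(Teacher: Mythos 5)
Your proposal is correct and follows essentially the same route as the paper's proof: define the supremum deviation $\Phi$, apply McDiarmid's inequality via the $\lossbound/\n$ bounded-differences property, bound $\E_\surveyset[\Phi(\surveyset)]$ by $2\radcomp{\losshypothesisset}$ through ghost-sample symmetrization, pass from the true to the empirical Rademacher complexity with a second application of McDiarmid, and union-bound the four events at level $\confidence/4$ each to obtain the $3\lossbound\sqrt{\log(4/\confidence)/(2\n)}$ term. The only cosmetic difference is that you justify the reverse direction via sign-flip invariance of the Rademacher complexity, whereas the paper simply repeats the argument symmetrically.
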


\begin{proof}
    For a given sample set $\surveyset$ of size $\n$, let us denote by $\hat{\E}_\surveyset\tbrac{g}$ the empirical loss $\frac{1}{\n}\sum_{i \in [\n]} \losshypothesis(\datapoint_i)$. Consequently, we define a function $\Phi$ corresponding of a sample set $\surveyset$ as:
    \begin{align*}
        \Phi(\surveyset) = \sup_{\losshypothesis \in \losshypothesisset}  \hat{\E}_\surveyset\tbrac{\losshypothesis} - \E\tbrac{\losshypothesis} 
    \end{align*}
    We first upper bound the expectation of this function $\Phi(\surveyset)$ over $\surveyset \in \distribution^\n$.
    \begin{align}
        \nonumber\E_\surveyset\tbrac{\Phi(\surveyset)}
        =&\E_\surveyset\tbrac{\sup_{\losshypothesis \in \losshypothesisset} \hat{\E}_\surveyset\tbrac{\losshypothesis} - \E\tbrac{\losshypothesis} }\\\nonumber
        =&\E_\surveyset\tbrac{\sup_{\losshypothesis \in \losshypothesisset} \E_{\surveyset'}\tbrac{\hat{\E}_{\surveyset}\tbrac{\losshypothesis} - \hat{\E}_{\surveyset'}\tbrac{\losshypothesis}}}\\\nonumber
        \leq&\E_{\surveyset,\surveyset'}\tbrac{\sup_{\losshypothesis \in \losshypothesisset} \hat{\E}_{\surveyset}\tbrac{\losshypothesis} - \hat{\E}_{\surveyset'}\tbrac{\losshypothesis}} &\text{By Jensen's Inequality}\\\nonumber
        =&\E_{\surveyset,\surveyset'} \tbrac{\sup_{\losshypothesis \in \losshypothesisset} \frac{1}{\n}\sum_{i \in [\n]} \fbrac{\losshypothesis(\datapoint_i) - \losshypothesis(\datapoint'_i)}}\\\nonumber
        =&\E_{\rademacher,\surveyset,\surveyset'} \tbrac{\sup_{\losshypothesis \in \losshypothesisset} \frac{1}{\n}\sum_{i \in [\n]} \rademacher_i\fbrac{\losshypothesis(\datapoint_i) - \losshypothesis(\datapoint'_i)}}&\text{Expectation over $\surveyset,\surveyset'$}\\\nonumber
        \leq&\E_{\rademacher,\surveyset,\surveyset'} \tbrac{\sup_{\losshypothesis \in \losshypothesisset} \frac{1}{\n}\sum_{i \in [\n]} \rademacher_i\losshypothesis(\datapoint_i) + \sup_{\losshypothesis \in \losshypothesisset}  \frac{1}{\n}\sum_{i \in [\n]} -\rademacher_i\losshypothesis(\datapoint'_i)}&\sup(a+b) \leq \sup(a)+\sup(b)\\\nonumber
        =&\E_{\rademacher,\surveyset'} \tbrac{\sup_{\losshypothesis \in \losshypothesisset} \frac{1}{\n}\sum_{i \in [\n]} \rademacher_i\losshypothesis(\datapoint_i)} + \E_{\rademacher,\surveyset} \tbrac{ \sup_{\losshypothesis \in \losshypothesisset}  \frac{1}{\n}\sum_{i \in [\n]} -\rademacher_i\losshypothesis(\datapoint'_i)}&\text{Linearity of expectations}\\
        =&2\E_{\rademacher,\surveyset} \tbrac{\sup_{\losshypothesis \in \losshypothesisset} \frac{1}{\n}\sum_{i \in [\n]} \rademacher_i\losshypothesis(\datapoint_i)} = 2\radcomp{\losshypothesisset}~\label{Equation: Expectation is RadComp}
    \end{align}
    Now, we will use the McDiarmid's inequality(Lemma~\ref{Lemma: McDiarmid's Inequality}) on this function. For that purpose, observe that each coordinate of the input essentially corresponds to one of the data points in the sample. We use this fact and the boundedness of $\losshypothesis$ to obtain our prerequisite bound to apply McDiarmid's inequality. Let us consider two sample sets $\surveyset$ and $\surveyset'$ that differs at exactly one sample point, say the $i$-th location. Then, we have:
    \begin{align*}
        \Phi\fbrac{\surveyset} - \Phi\fbrac{\surveyset'}
        \leq \sup_{\losshypothesis \in \losshypothesisset} \fbrac{\hat{\E}_\surveyset\tbrac{\losshypothesis} - \hat{\E}_{\surveyset'}\tbrac{\losshypothesis}}
        = \sup_{\losshypothesis \in \losshypothesisset} \frac{\losshypothesis(\datapoint_i) - \losshypothesis(\datapoint'_i)}{\n}
        \leq\frac{\lossbound}{\n}
    \end{align*}
    
    Here, the first inequality follows from the fact that $\sup_x\fbrac{f(x)-g(x)} \geq\sup_x f(x) - \sup_x g(x)$, and
    Now, by McDiarmid's inequality, we have,
    \begin{align}
        \Pr\tbrac{\Phi(\surveyset)-\E\tbrac{\Phi(\surveyset)} \geq \lossbound\sqrt{\frac{\log\frac{4}{\confidence}}{2\n}}} \leq \exp\fbrac{\frac{-2\lossbound^2\n^2\log\frac{4}{\confidence}}{2\n^2\lossbound^2}} = \frac{\confidence}{4}\label{Eq: Loss Deviation Bound}
    \end{align}
    Combining equations~\eqref{Equation: Expectation is RadComp} and~\eqref{Eq: Loss Deviation Bound}, we have for all $\losshypothesis \in \losshypothesisset$:
    \begin{align}
        \Pr\tbrac{\frac{1}{\n}\sum_{i \in [\n]} \losshypothesis(\datapoint) - \E\tbrac{\losshypothesis(\datapoint)} \geq 2\radcomp{\losshypothesisset}+\lossbound\sqrt{\frac{\log\frac{4}{\confidence}}{2\n}}} \leq \frac{\confidence}{4}\label{Equation: Rademacher Generalization Bound}
    \end{align}
    Now, we bound the empirical Rademacher sample complexity in terms of the Rademacher complexity. We again consider two sample sets $\surveyset$, and $\surveyset'$ that differs at exactly one point, say $\datapoint_i$. Then, using the fact that $\sup_x\fbrac{f(x)-g(x)} \geq\sup_x f(x) - \sup_x g(x)$, we get 
    \begin{align*}
        \empradcomp{\surveyset}{\losshypothesisset} - \empradcomp{\surveyset'}{\losshypothesisset}
        \leq&\frac{1}{\n}\E\tbrac{\sup_{\losshypothesis \in \losshypothesisset} \fbrac{\rademacher_i\fbrac{\losshypothesis(\datapoint_i)-\losshypothesis(\datapoint'_i)}}}
        \leq\frac{\lossbound}{\n}
    \end{align*}
    Now, by McDiarmid's Ineqality(Lemma~\ref{Lemma: McDiarmid's Inequality}), we have:
    \begin{align}
        \Pr\tbrac{\radcomp{\losshypothesisset}-\empradcomp{\surveyset}{\losshypothesisset} \geq \lossbound\sqrt{\frac{\log\frac{4}{\confidence}}{2\n}}} \leq \frac{\confidence}{4}\label{Eq: Empirical and True Rademacher Complexity}
    \end{align}
    Now, we combine equations~\eqref{Equation: Rademacher Generalization Bound} and~\eqref{Eq: Empirical and True Rademacher Complexity} through an union bound to obtain:
    \begin{align*}
        \Pr\tbrac{\frac{1}{\n}\sum_{i \in [\n]} \losshypothesis(\datapoint) - \E\tbrac{\losshypothesis(\datapoint)} \geq 2\empradcomp{\surveyset}{\losshypothesisset}+3\lossbound\sqrt{\frac{\log\frac{4}{\confidence}}{2\n}}} \leq \frac{\confidence}{2}
    \end{align*}
    Similarly, we can show:
    \begin{align*}
        \Pr\tbrac{\E\tbrac{\losshypothesis(\datapoint)} - \frac{1}{\n}\sum_{i \in [\n]} \losshypothesis(\datapoint) \geq 2\empradcomp{\surveyset}{\losshypothesisset}+3\lossbound\sqrt{\frac{\log\frac{4}{\confidence}}{2\n}}} \leq \frac{\confidence}{2}
    \end{align*}
    Combining through a union bound, we get the desired result.
\end{proof}

\noindent\textbf{Proof of Theorem~\ref{theorem: Regression Generalization Bound}.} Now, we are ready to give the proof of Theorem~\ref{theorem: Regression Generalization Bound}. A restatement of the theorem is given below.

\twoSidedGeneralizationBound*
\begin{proof}From Lemma~\ref{Lemma: Two side rademacher bound}, we know the two-sided deviation on the empirical mean w.r.t true expection over a bounded function class $\losshypothesisset$ containing functions  $\func{\losshypothesis}{\datadomain}{\tbrac{0,\lossbound}}$,
\begin{equation}
    \Pr\tbrac{\abs{\frac{1}{\n}\sum_{i \in [\n]} \losshypothesis(\datapoint) - \E\tbrac{\losshypothesis(\datapoint)}} \leq 2\empradcomp{\surveyset}{\losshypothesisset}+3\lossbound\sqrt{\frac{\log\frac{4}{\confidence}}{2\n}}} \geq 1 - \confidence
    \label{ineq: general error}
\end{equation}
Take $\losshypothesisset$ to be the set of loss functions $\func{\loss}{\yrange\times\yrange}{\tbrac{0,\lossbound}}$, then for any $f \in \hypothesisset$ we can write inequality~\eqref{ineq: general loss} as,
\begin{equation}
    \Pr\tbrac{\abs{\E_{\fbrac{\covariate,\response}\sim\distribution}\tbrac{\loss\fbrac{\hypothesis(\covariate),\response}} - \frac{1}{\n}\sum_{(\covariate, \response) \in \surveyset} \loss\fbrac{\hypothesis(\covariate),\response}} \leq 2\empradcomp{\surveyset}{\loss\circ\hypothesisset} + 3\lossbound\sqrt{\frac{\log\frac{4}{\confidence}}{2\n}}  } \geq 1 - \confidence
    \label{ineq: general loss}
\end{equation}
    From Talagrand's Contraction Lemma~\ref{Lemma: Talagrand's Contraction Lemma}, we have
    \begin{equation}\label{ineq: Talagrand}
        \empradcomp{\surveyset}{\loss\circ\hypothesisset} \leq \losslipschitz\empradcomp{\surveyset}{\hypothesisset}
    \end{equation}
     
    Plugging back inequality \eqref{ineq: Talagrand} in~\eqref{ineq: general loss} we get the following with probability at least $1 - \confidence$:
    \begin{equation*}
    \Pr\tbrac{\abs{\E_{\fbrac{\covariate,\response}\sim\distribution}\tbrac{\loss\fbrac{\hypothesis(\covariate),\response}} - \frac{1}{\n}\sum_{(\covariate, \response) \in \surveyset} \loss\fbrac{\hypothesis(\covariate),\response}} \leq 2\losslipschitz\empradcomp{\surveyset}{\hypothesisset} + 3\lossbound\sqrt{\frac{\log\frac{4}{\confidence}}{2\n}}} \geq 1 - \confidence
    \end{equation*}
    this completes the proof.

\end{proof}


\section{Minimum Survey Size for Learning Noise Variance: Proof of Lemma~\ref{lemma: sample complexity noise variance}}
This section is organized into three parts. In subsection~\ref{subsection: Generalization Bound for Noise Variance}, we establish a two-sided generalization bound for the estimation of noise variance in the regression model, in terms of empirical rademacher complexity.  In subsection~\ref{subsection: Generalization Error Bound for Bounded Hypothesis Class} we extends this result to both bounded linear and kernel hypothesis classes using their corresponding rademacher bounds. Finally, In subsection~\ref{subsection: final proof} presents the proof of Lemma~\ref{lemma: sample complexity noise variance}, which formalizes the minimum survey size required for estimating noise variance.

\subsection{From Two-sided Generalization Bound to Estimating Noise Variance}\label{subsection: Generalization Bound for Noise Variance}
The next result uses Theorem~\ref{theorem: Regression Generalization Bound} applied to the squared loss setting, and combines it with the assumptions specific to linear regression over bounded domains to estimate the noise variance. 

\begin{lemma}[\textbf{Concentration of Empirical Squared Loss around Noise Variance}]\label{Lemma: Two-sided Generalization Bound for Noise Variance}
   Given a linear regression model $\optimalf: \response = \optimalf(\covariate) + \regnoise$, where $\regnoise$ is the zero-mean additive noise term with variance $\Var_{(\covariate,\response) \sim \valD} (\regnoise) = \regnoisevariance$. Let $\surveyset$ be a sample set of size $\sizeS \geq 1$ drawn as i.i.d. samples from the distribution $\sampD$. If Assumptions \ref{Assumption : Assumptions on the linear regression model} and \ref{Assumption: Bounded Variables} holds, then the regression model $f_{\surveyset} \gets \argmin_{f \in \boundedlinfunc} \frac{1}{\sizeS} \sum_{(\covariate,\response) \in \surveyset} (f(\covariate) - \response)^2$ satisfies, with probability at least $1 - \confidence$: 
    \begin{align*}
        \abs{\regnoisevariance - \frac{1}{\sizeS}\sum_{(\covariate, \response) \in \surveyset} (f_\surveyset(\covariate)-\response)^2} \leq  8\widehat{\sR}_S(\boundedlinfunc) + 12\sqrt{\frac{\log\frac{4}{\confidence}}{2\sizeS}}
    \end{align*}
\end{lemma}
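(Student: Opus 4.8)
The plan is to derive the statement from the two-sided generalization bound of Theorem~\ref{theorem: Regression Generalization Bound}, and then to sandwich the empirical squared loss $\hat L_S = \frac{1}{\sizeS}\sum_{(\covariate,\response)\in\surveyset}(f_\surveyset(\covariate)-\response)^2$ against $\regnoisevariance$ using the population-level optimality of $\optimalf$ and the empirical optimality of $f_\surveyset$. First I would instantiate Theorem~\ref{theorem: Regression Generalization Bound} with $\distribution=\sampD$, squared loss $\loss(\response',\response)=(\response'-\response)^2$, and hypothesis class $\boundedlinfunc$. Under Assumption~\ref{Assumption: Bounded Variables} both $f(\covariate)$ and $\response$ lie in $[-1,1]$, so on this range the squared loss is bounded by $\lossbound=4$ and is $\losslipschitz=4$-Lipschitz in its first argument, since $|(\response_1-y)^2-(\response_2-y)^2|=|\response_1-\response_2|\,|\response_1+\response_2-2y|\le 4|\response_1-\response_2|$. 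Substituting $\losslipschitz=\lossbound=4$ yields, on a single event $E$ of probability at least $1-\confidence$, the uniform bound that every $f\in\boundedlinfunc$ satisfies $|\E_{\sampD}[(f(\covariate)-\response)^2]-\frac{1}{\sizeS}\sum_{\surveyset}(f(\covariate)-\response)^2|\le 8\widehat{\sR}_S(\boundedlinfunc)+12\sqrt{\frac{\log(4/\confidence)}{2\sizeS}}$; call this right-hand side $B$. Because the underlying two-sided Rademacher bound holds uniformly over the class (its proof passes through a supremum over $\losshypothesisset$), the same event $E$ controls the deviations of both $f_\surveyset$ and $\optimalf$ simultaneously, so no union bound is spent.

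Next I would invoke two optimality facts. Under Assumption~\ref{Assumption : Assumptions on the linear regression model} the population-optimal model in the class coincides with the data-generating function $\optimalf$, whose residual is exactly $\optimalf(\covariate)-\response=-\regnoise$; and because $\regnoise$ is zero-mean with constant variance, its second moment is invariant to the covariate marginal, so $\E_{\sampD}[(\optimalf(\covariate)-\response)^2]=\E_{\sampD}[\regnoise^2]=\regnoisevariance$. Since $\optimalf$ minimizes the population risk over $\boundedlinfunc$ and $f_\surveyset\in\boundedlinfunc$, on $E$ we obtain the lower estimate $\regnoisevariance=\E_{\sampD}[(\optimalf-\response)^2]\le\E_{\sampD}[(f_\surveyset-\response)^2]\le\hat L_S+B$, i.e.\ $\regnoisevariance-\hat L_S\le B$. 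Symmetrically, $f_\surveyset$ minimizes the empirical risk over $\boundedlinfunc$, so $\hat L_S\le\frac{1}{\sizeS}\sum_{\surveyset}(\optimalf(\covariate)-\response)^2$; applying $E$ to the fixed function $\optimalf$ gives $\frac{1}{\sizeS}\sum_{\surveyset}(\optimalf-\response)^2\le\E_{\sampD}[(\optimalf-\response)^2]+B=\regnoisevariance+B$, hence $\hat L_S-\regnoisevariance\le B$. Combining the two one-sided estimates yields $|\regnoisevariance-\hat L_S|\le B$ on $E$, which is exactly the claim.

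The step needing the most care is the identity $\E_{\sampD}[(\optimalf(\covariate)-\response)^2]=\regnoisevariance$, which is where both parts of Assumption~\ref{Assumption : Assumptions on the linear regression model} enter: I must argue (i) that $\optimalf$, the population optimum, is also the conditional mean of $\response$ given $\covariate$ and lies in $\boundedlinfunc$, so that its residual is precisely $-\regnoise$ (exogeneity/identifiability), and (ii) that the noise second moment equals $\regnoisevariance$ under $\sampD$ as well as under $\D$, which holds because homoskedasticity makes the conditional noise variance a constant independent of the covariate distribution. Once these identities are secured, the generalization bound together with the population and empirical optimality comparisons closes the argument; the only remaining work is the routine verification of the constants $\losslipschitz=\lossbound=4$ so that the bound matches the stated $8\widehat{\sR}_S(\boundedlinfunc)+12\sqrt{\frac{\log(4/\confidence)}{2\sizeS}}$.
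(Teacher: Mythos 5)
Your overall architecture --- instantiating Theorem~\ref{theorem: Regression Generalization Bound} for the squared loss on $\sampD$ with constants $\losslipschitz = \lossbound = 4$, then sandwiching the empirical loss $\hat{L}_S$ against $\regnoisevariance$ via one population-optimality comparison and one empirical-optimality comparison --- is exactly the paper's, and your use of a single uniform two-sided event $E$ of probability $1-\confidence$ is a fine (indeed slightly cleaner) substitute for the paper's splitting of $\confidence/2$ per side. The gap is in the step you yourself flagged as delicate: you identify the $\sampD$-population optimum with the $\D$-optimal data-generating function $\optimalf$ and claim $\E_{\sampD}\tbrac{(\optimalf(\covariate)-\response)^2} = \regnoisevariance$. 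This holds only under pure covariate shift, i.e., when the conditional mean of $\response$ given $\covariate$ is the same $\optimalf$ under both $\sampD$ and $\D$ --- but that is precisely the degenerate case in which $\distname_{\D}^{\regressionset}(\sampD,\D) = 0$. The lemma must hold in the general case where the survey distribution has a \emph{different} regression function: this is what the REJECT branch of Theorem~\ref{Theorem: Tester Accepts or Rejects} requires, since there $\fbrac{\distname_{\D}^{\regressionset}(\sampD,\D)}^2 > 5\tol$ yet the algorithm still relies on $\abs{\regnoisevariance - \hat{L}_S}\leq 0.1\tol$. In that general case, writing $g$ for the conditional mean under $\sampD$, one has $\E_{\sampD}\tbrac{(\optimalf(\covariate)-\response)^2} = \E_{\sampD}\tbrac{(\optimalf(\covariate)-g(\covariate))^2} + \regnoisevariance > \regnoisevariance$, so your identity fails; moreover $\optimalf$ no longer minimizes the $\sampD$-population risk, so both inequalities in your sandwich lose their anchor.

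The paper's proof avoids this by introducing $\optimalsurveyf$, the risk minimizer under $\sampD$, and applying Assumption~\ref{Assumption : Assumptions on the linear regression model} to the survey distribution itself, asserting $\regnoisevariance = \E_{(\covariate,\response)\sim\D}\tbrac{(\optimalf(\covariate)-\response)^2} = \E_{(\covariate,\response)\sim\sampD}\tbrac{(\optimalsurveyf(\covariate)-\response)^2}$; that is, the exogenous noise has the same variance under both distributions even though the regression functions may differ. The sandwich then runs with $\optimalsurveyf$ in place of $\optimalf$: population optimality of $\optimalsurveyf$ on $\sampD$ gives $\regnoisevariance \leq \E_{\sampD}\tbrac{(\surveyf(\covariate)-\response)^2} \leq \hat{L}_S + B$, while empirical optimality of $\surveyf$ plus the lower-sided bound applied to $\optimalsurveyf$ gives $\hat{L}_S \leq \frac{1}{\sizeS}\sum_{(\covariate,\response)\in\surveyset}(\optimalsurveyf(\covariate)-\response)^2 \leq \regnoisevariance + B$. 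Your argument becomes correct essentially verbatim once every occurrence of $\optimalf$ in your second step is replaced by $\optimalsurveyf$, and the ``invariance of the noise second moment to the covariate marginal'' is restated as an assumption that the exogenous-noise regression model (with the same $\regnoisevariance$) also governs $\sampD$, rather than derived from a shared regression function.
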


\begin{proof} We split the proof into two steps,

\noindent\textbf{Step 1: Generalization bound for squared loss.} In this step, we show the two-sided generalization bound for squared loss. From Theorem~\ref{theorem: Regression Generalization Bound}, We consider the squared loss $\loss(f(\covariate), \response) = (f(\covariate) - y)^2$ for all $(\covariate, \response) \in \xdomain \times \yrange $ and $f \in \boundedlinfunc$. 

From assumption~\ref{Assumption: Bounded Variables}, we bound the maximum value of the loss function:
\begin{align}\label{ineq: max loss 1}
        \lossbound = \max_{\response,\response'}\loss(\response,\response') = \max_{\response,\response'}\fbrac{\response - \response'}^2 \leq 4
    \end{align}
Similarly, for the Lipschitzness of the loss function, we have:
    \begin{align}\label{ineq: max loss 2}
        \losslipschitz \leq \max_{\response,\response'} \nabla_\response \loss\fbrac{\response,\response'} = \max_{\response,\response'} \nabla_\response \fbrac{\response-\response'}^2 = \max_{\response,\response'} 2\fbrac{\response-\response'} \leq 4
    \end{align}
Now, applying Theorem~\ref{theorem: Regression Generalization Bound} to the squared loss and function class $\boundedlinfunc$, we get,
\begin{equation}
    \Pr \tbrac{\abs{\E_{(\covariate,\response) \sim \sD} \left[ ( f(\covariate)-\response)^2 \right] - \frac{1}{\sizeS}\sum_{(\covariate, \response) \in \surveyset} (f(\covariate)-\response)^2}
        \leq  8\widehat{\sR}_S(\boundedlinfunc) + 12\sqrt{\frac{\log\frac{4}{\confidence}}{2\sizeS}}} \geq 1 - \confidence
    \label{ineq: squared loss concentration}
\end{equation}

\noindent\textbf{Step 2: Concentration of noise variance.} We now show that the empirical squared loss of the estimator $f_\surveyset$ concentrates around the true noise variance $\regnoisevariance$, using the generalization bound from Step 1 and Assumption~\ref{Assumption : Assumptions on the linear regression model}. We prove the upper and lower bounds separately.

\noindent\textbf{Upper Bound:} Let $\optimalf$ is the optimal linear regression model on the true distribution $\D$. Let $\optimalf_\surveyset$ is the optimal linear regression model on the survey distribution $\sampD$. 

    From Assumption \ref{Assumption : Assumptions on the linear regression model}, we have: 
     \begin{align}\label{Eq: Exp of optimal validation function is noise}
        \regnoisevariance = \E_{(\covariate,\response) \sim \D} [(\optimalf(\covariate) - \response)^2] =  \E_{(\covariate,\response) \sim \sampD} [(\optimalf_\surveyset(\covariate) - \response)^2]
    \end{align}

Therefore, 
        \begin{align}
            \regnoisevariance = \E_{(\covariate,\response) \sim \sampD}[(\optimalsurveyf(\covariate)-\response)^2] \leq \E_{(\covariate,\response) \sim \sampD}[(\surveyf(\covariate)-\response)^2]          
            \label{Eq: Exp of optimal survey function is noise}
        \end{align}

The inequality~\eqref{Eq: Exp of optimal survey function is noise} comes from the optimality of $\optimalf_\surveyset$ on $\sampD$. 

Now, applying the upper-sided generalization bound in~\eqref{ineq: squared loss concentration} with $f = f_\surveyset$, we have
\begin{equation}
    \Pr\tbrac{\E_{(\covariate,\response) \sim \sampD}[(\surveyf(\covariate)-\response)^2]  \leq \frac{1}{\sizeS}\sum_{(\covariate, \response) \in \surveyset} (f_\surveyset(\covariate)-\response)^2 + 8\widehat{\sR}_S(\boundedlinfunc) + 12\sqrt{\frac{\log\frac{4}{\confidence}}{2\sizeS}}} \geq 1 - \frac{\confidence}{2}
    \label{ineq: upper side bound w.r.t f_S}
\end{equation}

Combining \eqref{Eq: Exp of optimal survey function is noise} and \eqref{ineq: upper side bound w.r.t f_S} we get,
\begin{equation}\label{ineq: upper bound on noise}
    \Pr \tbrac{\regnoisevariance  \leq \frac{1}{\sizeS}\sum_{(\covariate, \response) \in \surveyset} (f_\surveyset(\covariate)-\response)^2 + 8\widehat{\sR}_S(\boundedlinfunc) + 12\sqrt{\frac{\log\frac{4}{\confidence}}{2\sizeS}}} \geq 1 - \frac{\confidence}{2}
\end{equation}

\noindent\textbf{Lower Bound:} We now show the lower bound, by applying lower-sided generalization bound in~\eqref{ineq: squared loss concentration} with $f = \optimalf_\surveyset$ we get,
\begin{equation}\label{ineq: temp lower bound on noise variance}
   \Pr \tbrac{ \E_{(\covariate,\response) \sim \sampD} \left[ ( {\optimalf_\surveyset(\covariate)-\response})^2 \right] \geq \frac{1}{\sizeS}\sum_{(\covariate, \response) \in \surveyset} (\optimalf_\surveyset(\covariate)-\response)^2 - 8\widehat{\sR}_S(\boundedlinfunc) - 12\sqrt{\frac{\log\frac{4}{\confidence}}{2\sizeS}} } \geq \ 1- \frac{\confidence}{2}
\end{equation}
Since $f_\surveyset$ is the optimal regression model over the empirical loss. Therefore,
\begin{equation}\label{ineq: optimalility of f_S}
   \sum_{(\covariate, \response) \in \surveyset} (\optimalf_\surveyset(\covariate)-\response)^2 \geq \sum_{(\covariate, \response) \in \surveyset} (f_\surveyset(\covariate)-\response)^2
\end{equation}

Using inequality \eqref{ineq: optimalility of f_S} in \eqref{ineq: temp lower bound on noise variance}, we get 
\begin{equation}
    \label{ineq: xyz}
    \Pr \tbrac{\E_{(\covariate,\response) \sim \sD} \left[ ( {\optimalf_\surveyset(\covariate)-\response})^2 \right] \geq \frac{1}{\sizeS}\sum_{i \in [\sizeS]} (f_\surveyset(\covariate)-\response)^2 - 8\widehat{\sR}_S(\boundedlinfunc) - 12\sqrt{\frac{\log\frac{4}{\confidence}}{2\sizeS}}} \geq 1 - \frac{\confidence}{2}
\end{equation}
Now, Combining \eqref{Eq: Exp of optimal validation function is noise} and \eqref{ineq: xyz} we get,
\begin{equation}\label{ineq: lower bound on noise}
    \Pr \tbrac{\regnoisevariance \geq \frac{1}{\sizeS}\sum_{(\covariate, \response) \in \surveyset} (f_\surveyset(\covariate)-\response)^2 - 8\widehat{\sR}_S(\boundedlinfunc) - 12\sqrt{\frac{\log\frac{4}{\confidence}}{2\sizeS}}} \geq 1 - \frac{\confidence}{2}
\end{equation}

Combining the upper bound~\eqref{ineq: upper bound on noise} and lower bound~\eqref{ineq: lower bound on noise} using the union bound we get, with probability at least $1 - \confidence$:
\[
\left| \regnoisevariance - \frac{1}{\sizeS} \sum_{(\covariate,\response) \in \surveyset} (f_\surveyset(\covariate) - \response)^2 \right| \leq 8\widehat{\sR}_S(\boundedlinfunc) + 12 \sqrt{\frac{\log\frac{4}{\confidence}}{2\sizeS}}.
\]
This completes the proof.
\end{proof}

\subsection{From Generalization Bound to Noise variance for Linear and Kernel Classes}\label{subsection: Generalization Error Bound for Bounded Hypothesis Class}
In this section, We show the general two-sided generalization bound for the empirical squared loss from Lemma~\ref{Lemma: Two-sided Generalization Bound for Noise Variance} for specific families of hypothesis classes. In particular, we consider:

\begin{itemize}
    \item Linear function classes with bounded $\ell_1$ and $\ell_2$ norms, corresponding to $\lasso$ and $\ridge$ regression respectively.
    \item Kernel-based functions classes with bounded RKHS norm, corresponding to $\kerna$.
\end{itemize}

In each case, we use upper bounds on the Rademacher complexity for the corresponding class, and then apply Lemma~\ref{Lemma: Two-sided Generalization Bound for Noise Variance} to obtain corresponding generalization guarantees.

\noindent\textbf{Case: $\boundedlinfunc_1(\lasso)$ and $\boundedlinfunc_2(\ridge)$}

\cite{awasthi2020rademacher} has proved the following upper bound of the empirical Rademacher complexity for bounded linear hypothesis classes.

\begin{lemma}[\textbf{Empirical Rademacher Complexity of Bounded Linear Hypothesis }~\cite{awasthi2020rademacher}]\label{lemma:Rademacher Complexity of linear hypothesis}
Let $\boundedlinfunc_p = \{\covariate \rightarrow \inn{\coeff}{\covariate}:\norm{\coeff}_p \leq \linweight\}$ be a family of linear functions defined over $\R^d$ with bounded weight in $\ell_p$-norm where $p \in \{1,2\}$. Let $\surveyset = \fbrac{\covariate_1,\covariate_2,\ldots,\covariate_{\sizeS}}$ be a sample of size $\sizeS$. Then, the empirical Rademacher complexity of $\boundedlinfunc_p$ is upper bounded by:
    \begin{equation*}
        \widehat{\sR}_S(\boundedlinfunc_p) \leq \begin{cases}
            \frac{\linweight}{\sizeS} \sqrt{2 \log(2d)}\norm{\mathbf{X}^T}_{2,\infty} &\text{if $p = 1$}\\
            \frac{\linweight}{\sizeS} \norm{\mathbf{X}^T}_{2,2} &\text{if $p = 2$}\\
        \end{cases}
    \end{equation*}
where $\mathbf{X}$ is a $d \times \sizeS$ matrix with $\covariatei_i$'s as columns: $\mathbf{X} = \tbrac{\covariate_1 \ldots \covariate_{\sizeS}}$
\end{lemma}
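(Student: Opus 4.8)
The plan is to reduce the supremum over the weight ball to a dual-norm computation, and then dispatch the two cases with different concentration tools. First I would use linearity of $f(\covariate)=\inn{\coeff}{\covariate}$ to pull the Rademacher sum inside the inner product, $\sum_i \rademacher_i\inn{\coeff}{\covariate_i} = \inn{\coeff}{\sum_i \rademacher_i\covariate_i}$, so that
$$\widehat{\sR}_S(\boundedlinfunc_p) = \frac{1}{\sizeS}\E_{\mathbf{r}}\left[\sup_{\norm{\coeff}_p \leq \linweight} \inn{\coeff}{\textstyle\sum_i \rademacher_i\covariate_i}\right] = \frac{\linweight}{\sizeS}\E_{\mathbf{r}}\left[\norm{\textstyle\sum_i \rademacher_i\covariate_i}_{p^*}\right],$$
where the second equality is Hölder's inequality (the support function of an $\ell_p$-ball of radius $\linweight$ is $\linweight$ times the dual norm), and $p^*$ is the conjugate exponent. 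This isolates the single quantity I must control: the expected dual norm of the random vector $\mathbf{v}\defeq\sum_i \rademacher_i\covariate_i$.

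For the $\ridge$ case ($p=2$, $p^*=2$) I would apply Jensen's inequality to move the expectation inside the square root, $\E_{\mathbf{r}}\norm{\mathbf{v}}_2 \leq (\E_{\mathbf{r}}\norm{\mathbf{v}}_2^2)^{1/2}$, and then use orthogonality of the Rademacher variables, $\E[\rademacher_i\rademacher_j]=\indicator\{i=j\}$, to compute $\E_{\mathbf{r}}\norm{\mathbf{v}}_2^2 = \sum_i \norm{\covariate_i}_2^2$. Recognising the right-hand side as the squared Frobenius norm $\norm{\mathbf{X}^T}_{2,2}^2$ (the rows of $\mathbf{X}^T$ are exactly the $\covariate_i$) and dividing by $\sizeS$ yields the stated bound. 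This case is routine.

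The $\lasso$ case ($p=1$, $p^*=\infty$) is the main obstacle, since $\E_{\mathbf{r}}\norm{\mathbf{v}}_\infty$ is an expected maximum rather than a second moment, so the clean orthogonality argument no longer applies. Here I would write $\norm{\mathbf{v}}_\infty = \max_{j\in[d]}|v_j| = \max_{\mathbf{a}\in A}\inn{\mathbf{r}}{\mathbf{a}}$, where $A = \{\pm\covariate^{(j)} : j\in[d]\}$ and $\covariate^{(j)}=(\covariate_{1,j},\dots,\covariate_{\sizeS,j})$ is the $j$-th coordinate-slice across the sample, i.e. the $j$-th column of $\mathbf{X}^T$. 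Since $|A|=2d$ and each $\inn{\mathbf{r}}{\mathbf{a}}$ is a Rademacher sum, hence sub-Gaussian with parameter $\norm{\mathbf{a}}_2$, I would invoke Massart's finite-class maximal inequality to obtain $\E_{\mathbf{r}}\max_{\mathbf{a}\in A}\inn{\mathbf{r}}{\mathbf{a}} \leq \sqrt{2\log(2d)}\,\max_{\mathbf{a}\in A}\norm{\mathbf{a}}_2 = \sqrt{2\log(2d)}\,\max_j\norm{\covariate^{(j)}}_2$, and then identify $\max_j\norm{\covariate^{(j)}}_2$ with the group norm $\norm{\mathbf{X}^T}_{2,\infty}$. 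The delicate points are the symmetric choice of $A$ (so both signs of every coordinate are present and the absolute value is captured), feeding the correct sub-Gaussian parameter into Massart's lemma, and the bookkeeping of which aggregation over rows versus columns each matrix group norm encodes; with the reduction in hand, everything else is mechanical.
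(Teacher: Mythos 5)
The paper itself gives no proof of this lemma: it is imported as a black-box citation from \cite{awasthi2020rademacher}, so there is no internal argument to compare against. Your proposal is correct and is essentially the standard proof underlying that cited result: the duality step $\sup_{\norm{\coeff}_p\leq\linweight}\inn{\coeff}{\sum_i \rademacher_i\covariate_i}=\linweight\norm{\sum_i \rademacher_i\covariate_i}_{p^*}$, then Jensen plus Rademacher orthogonality giving the Frobenius norm $\norm{\mathbf{X}^T}_{2,2}$ for $p=2$, and Massart's finite-class lemma over the $2d$ signed coordinate slices giving $\sqrt{2\log(2d)}\,\norm{\mathbf{X}^T}_{2,\infty}$ for $p=1$; the symmetrization of the class to $\pm\covariate^{(j)}$ and the identification of the group norms are handled correctly, so no gap remains.
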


\begin{lemma}[\textbf{Two-sided Generalization Bound of $\ell_1$ and $\ell_2$ bounded linear hypothesis class}]\label{lemma: two-sided generalization error l_1 and l_2}
    Let $\boundedlinfunc_p = \{\covariate \rightarrow \inn{\coeff}{\covariate}:\norm{\coeff}_p \leq \linweight\}$. Given a linear regression model $\optimalf: \response = \optimalf(\covariate) + \regnoise$, where $\regnoise$ is the zero-mean additive noise term with variance $\Var_{(\covariate,\response) \sim \D} (\regnoise) = \regnoisevariance$.
    Given a sample $\surveyset$ of size $\sizeS$ sampled i.i.d from a distribution $\sampD$. If Assumption \ref{Assumption : Assumptions on the linear regression model} and \ref{Assumption: Bounded Variables} holds, then the regression model
    \[
    f_{\surveyset} \gets \argmin_{f \in \boundedlinfunc_p} \frac{1}{\sizeS} \sum_{(\covariate,\response) \in \surveyset} (f(\covariate) - \response)^2
    \]
    satisfies, with probability at least $1 - \confidence$:
    \begin{equation*}
        \abs{\regnoisevariance - \frac{1}{\sizeS}\sum_{(\covariate, \response) \in \surveyset} (f_\surveyset(\covariate)-\response)^2}\leq 
    \begin{cases}
         8 \sqrt{\frac{2 \log(2 d)}{\sizeS}}+ 12\sqrt{\frac{\log\frac{4}{\confidence}}{2\sizeS}} &\text{If $p =1 \; (\lasso)$}\\

        8 \sqrt{\frac{d}{\sizeS}} + 12\sqrt{\frac{\log\frac{4}{\confidence}}{2\sizeS}} &\text{If $p = 2\;(\ridge)$}\\
    \end{cases}
    \end{equation*}
\end{lemma}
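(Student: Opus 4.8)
The plan is to obtain this statement as a direct corollary of the noise-variance concentration bound (Lemma~\ref{Lemma: Two-sided Generalization Bound for Noise Variance}), by substituting the explicit empirical Rademacher-complexity bounds for $\ell_1$- and $\ell_2$-bounded linear classes (Lemma~\ref{lemma:Rademacher Complexity of linear hypothesis}) and then controlling the resulting mixed matrix norms using the boundedness assumption. No new probabilistic machinery is needed; the argument is essentially a substitution followed by two elementary norm estimates.

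First I would invoke Lemma~\ref{Lemma: Two-sided Generalization Bound for Noise Variance}, which already establishes that for $\boundedlinfunc\in\{\boundedlinfunc_1,\boundedlinfunc_2\}$, with probability at least $1-\confidence$,
\[
\abs{\regnoisevariance - \frac{1}{\sizeS}\sum_{(\covariate,\response)\in\surveyset}(\surveyf(\covariate)-\response)^2} \leq 8\widehat{\sR}_S(\boundedlinfunc) + 12\sqrt{\frac{\log\frac{4}{\confidence}}{2\sizeS}}.
\]
The failure probability and the additive $12\sqrt{\log(4/\confidence)/(2\sizeS)}$ term are inherited verbatim, so the only remaining task is to bound $\widehat{\sR}_S(\boundedlinfunc_p)$ for $p\in\{1,2\}$ under Assumption~\ref{Assumption: Bounded Variables}.

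For $p=1$ I would plug in $\widehat{\sR}_S(\boundedlinfunc_1) \leq \frac{\linweight}{\sizeS}\sqrt{2\log(2d)}\,\norm{\mathbf{X}^T}_{2,\infty}$ from Lemma~\ref{lemma:Rademacher Complexity of linear hypothesis}. The columns of $\mathbf{X}^T$ are the coordinate-wise feature vectors across the $\sizeS$ samples, so $\norm{\mathbf{X}^T}_{2,\infty} = \max_{j\in[d]}\sqrt{\sum_{i=1}^{\sizeS}(\covariate_i)_j^2}$; Assumption~\ref{Assumption: Bounded Variables} gives $(\covariate_i)_j^2 \leq \norm{\covariate_i}_\infty^2 \leq 1$ for every coordinate, whence $\norm{\mathbf{X}^T}_{2,\infty} \leq \sqrt{\sizeS}$. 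Substituting yields $\widehat{\sR}_S(\boundedlinfunc_1) \leq \frac{\linweight\sqrt{2\log(2d)}}{\sqrt{\sizeS}} = \sqrt{\frac{2\log(2d)}{\sizeS}}$, which produces the first branch. For $p=2$ I would instead use $\widehat{\sR}_S(\boundedlinfunc_2) \leq \frac{\linweight}{\sizeS}\norm{\mathbf{X}^T}_{2,2}$, where $\norm{\mathbf{X}^T}_{2,2}$ is the Frobenius norm $\sqrt{\sum_{i=1}^{\sizeS}\norm{\covariate_i}_2^2}$; since each covariate lies in $\R^d$ with entries bounded by one, $\norm{\covariate_i}_2^2 \leq d$, so $\norm{\mathbf{X}^T}_{2,2}\leq\sqrt{\sizeS d}$ and $\widehat{\sR}_S(\boundedlinfunc_2)\leq \frac{\linweight\sqrt{\sizeS d}}{\sizeS}=\sqrt{\frac{d}{\sizeS}}$, giving the second branch.

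There is no genuine analytical obstacle here; the step requiring the most care is reading off the mixed matrix norms according to the convention used in Lemma~\ref{lemma:Rademacher Complexity of linear hypothesis} and bounding them under $\ell_\infty$-boundedness. The only conceptually salient point is that the $\ell_2$-norm of a $d$-dimensional covariate can be as large as $\sqrt{d}$, which is precisely what converts the benign $\log d$ dependence of the $\lasso$ bound into the linear-in-$d$ dependence of the $\ridge$ bound; this is the structural distinction that later drives the contrasting survey-size requirements in Table~\ref{table: sample complexity}. Finally, since both branches hold with the same failure probability $\confidence$ directly inherited from Lemma~\ref{Lemma: Two-sided Generalization Bound for Noise Variance}, no additional union bound over the two cases is required.
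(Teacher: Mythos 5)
Your proposal is correct and follows essentially the same route as the paper's own proof: invoke Lemma~\ref{Lemma: Two-sided Generalization Bound for Noise Variance}, bound $\norm{\mathbf{X}^T}_{2,\infty} \leq \sqrt{\sizeS}$ and $\norm{\mathbf{X}^T}_{2,2} \leq \sqrt{d\sizeS}$ via Assumption~\ref{Assumption: Bounded Variables}, and substitute into the Rademacher bounds of Lemma~\ref{lemma:Rademacher Complexity of linear hypothesis}. Your explicit unpacking of the mixed-norm conventions is slightly more detailed than the paper's, but the argument is the same.
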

\begin{proof} From Assumption~\ref{Assumption: Bounded Variables}, we have $\norm{\covariate}_\infty \leq 1$ for all $\covariate \in \xdomain$. Therefore, each column $\covariate_i$ of the matrix $\mathbf{X} \in \R^{d \times \sizeS}$ satisfies $\norm{\covariate_i}_\infty \leq 1$. then $\norm{\mathbf{X}^T}_{2,\infty} \leq \sqrt{\sizeS}$ and $\norm{\mathbf{X}^T}_{2,2} \leq \sqrt{d\sizeS}$.

    \textbf{For $p = 1$ ($\lasso$):} From Lemma~\ref{lemma:Rademacher Complexity of linear hypothesis}, we get:
    \begin{align*}
        \widehat{\sR}_S(\boundedlinfunc_1) 
        &\leq \frac{1}{\sizeS} \sqrt{2 \log(2d)} \norm{\mathbf{X}^T}_{2,\infty} \\
        &\leq \frac{1}{\sizeS} \sqrt{2 \log(2d)} \cdot \sqrt{\sizeS} =  \sqrt{\frac{2 \log(2d)}{\sizeS}}
    \end{align*}

    \textbf{For $p = 2$ ($\ridge$):} Again, from Lemma~\ref{lemma:Rademacher Complexity of linear hypothesis},
    \begin{align*}
        \widehat{\sR}_S(\boundedlinfunc_2)
        &\leq \frac{1}{\sizeS} \norm{\mathbf{X}^T}_{2,2} \leq \frac{\linweight}{\sizeS} \sqrt{d \sizeS} =  \sqrt{\frac{d}{\sizeS}}
    \end{align*}

    Now, plugging the above bounds on $\widehat{\sR}_S(\boundedlinfunc_p)$ into Lemma~\ref{Lemma: Two-sided Generalization Bound for Noise Variance} yields:

    \[
    \abs{\regnoisevariance - \frac{1}{\sizeS} \sum_{(\covariate, \response) \in \surveyset} (f_\surveyset(\covariate) - \response)^2}
    \leq 8 \widehat{\sR}_S(\boundedlinfunc_p) + 12\sqrt{\frac{\log\frac{4}{\confidence}}{2\sizeS}}
    \]

    which gives the desired bounds:
    \begin{align*}
        \text{$\lasso$:}\quad &8  \sqrt{\frac{2 \log(2d)}{\sizeS}} + 12\sqrt{\frac{\log\frac{4}{\confidence}}{2\sizeS}} \\
        \text{$\ridge$:}\quad &8  \sqrt{\frac{d}{\sizeS}} + 12\sqrt{\frac{\log\frac{4}{\confidence}}{2\sizeS}}
    \end{align*}

\end{proof}

\noindent\textbf{Case: $\kernelset(\kerna)$}

We define the hypothesis class as:
\[
\kernelset \defeq \left\{ \covariate \mapsto \inn{\coeff}{\kernelfunction(\covariate)}_{\hilbertspace} : \norm{\coeff}_{\hilbertspace} \leq 1 \right\}
\]
where $\kernelfunction : \xdomain \to \hilbertspace$ is the feature map associated with a positive definite symmetric (PDS) kernel $\kernel : \xdomain \times \xdomain \to \R$.

We first recall the following Rademacher complexity bound for kernel regression from~\cite[Theorem 6.12]{mohri2018}:

\begin{lemma}[\textbf{PDS Kernel Rademacher Complexity Bound}~\cite{mohri2018}]\label{Lemma: PDS Kernel Rademacher Complexity Bound}
    Let $\kernel$ be a PDS kernel with associated feature map $\kernelfunction$ satisfying $\kernel(\covariate, \covariate) = \norm{\kernelfunction(\covariate)}^2_{\hilbertspace} \leq \kernelnormbound^2$ for all $\covariate \in \xdomain$. Then, for any i.i.d. sample $\surveyset$ of size $\n$, the empirical Rademacher complexity of $\kernelset$ satisfies:
    \[
    \empradcomp{\surveyset}{\kernelset} \leq \frac{\kernelnormbound}{\sqrt{\n}}
    \]
\end{lemma}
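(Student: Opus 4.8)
The plan is to evaluate the supremum in Definition~\ref{Definition: Empirical Rademacher Complexity} in closed form using the Hilbert-space geometry, and then control the resulting Rademacher average by passing to its second moment. Writing the sample as $\surveyset = (\covariate_1, \ldots, \covariate_{\n})$ and unfolding the empirical Rademacher complexity for $\kernelset = \{\covariate \mapsto \inn{\coeff}{\kernelfunction(\covariate)}_{\hilbertspace} : \norm{\coeff}_{\hilbertspace} \leq 1\}$, I would first use linearity of the inner product to pull the Rademacher sum into the second argument,
\[
\sum_{i=1}^{\n} \rademacher_i \inn{\coeff}{\kernelfunction(\covariate_i)}_{\hilbertspace} = \inn{\coeff}{\sum_{i=1}^{\n} \rademacher_i \kernelfunction(\covariate_i)}_{\hilbertspace}.
\]
Because the supremum ranges over the entire unit ball $\norm{\coeff}_{\hilbertspace} \leq 1$, Cauchy--Schwarz gives the \emph{exact} value of the inner supremum as $\norm{\sum_{i} \rademacher_i \kernelfunction(\covariate_i)}_{\hilbertspace}$ (attained by aligning $\coeff$ with that vector), so that
\[
\empradcomp{\surveyset}{\kernelset} = \frac{1}{\n}\,\E_{\mathbf{r}}\norm{\sum_{i=1}^{\n} \rademacher_i \kernelfunction(\covariate_i)}_{\hilbertspace}.
\]

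Next I would move to the second moment. By Jensen's inequality applied to the concave square-root function, $\E_{\mathbf{r}}\norm{\sum_{i} \rademacher_i \kernelfunction(\covariate_i)}_{\hilbertspace} \leq \sqrt{\E_{\mathbf{r}}\norm{\sum_{i} \rademacher_i \kernelfunction(\covariate_i)}_{\hilbertspace}^2}$. Expanding the squared norm and using that the $\rademacher_i$ are independent and mean-zero, so $\E[\rademacher_i \rademacher_j] = \indicator\{i = j\}$, annihilates all off-diagonal terms:
\[
\E_{\mathbf{r}}\norm{\sum_{i=1}^{\n} \rademacher_i \kernelfunction(\covariate_i)}_{\hilbertspace}^2 = \sum_{i,j} \E[\rademacher_i \rademacher_j]\,\inn{\kernelfunction(\covariate_i)}{\kernelfunction(\covariate_j)}_{\hilbertspace} = \sum_{i=1}^{\n} \kernel(\covariate_i, \covariate_i) \leq \n\kernelnormbound^2,
\]
where the last step invokes the reproducing property $\kernel(\covariate,\covariate) = \norm{\kernelfunction(\covariate)}_{\hilbertspace}^2 \leq \kernelnormbound^2$. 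Chaining these estimates yields $\empradcomp{\surveyset}{\kernelset} \leq \frac{1}{\n}\sqrt{\n\kernelnormbound^2} = \kernelnormbound/\sqrt{\n}$, which is the claimed bound.

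I do not expect a genuine obstacle here, since this is the standard RKHS Rademacher bound; the steps are all routine. The only points that warrant a little care are that Cauchy--Schwarz produces an \emph{equality} (not merely an upper bound) for the inner supremum precisely because we optimize over a full unit ball, and that the entire computation is carried out through the inner products $\inn{\kernelfunction(\covariate_i)}{\kernelfunction(\covariate_j)}_{\hilbertspace} = \kernel(\covariate_i,\covariate_j)$, so the possibly infinite-dimensional feature map $\kernelfunction$ never has to be manipulated explicitly. Notably, the final bound depends only on $\kernelnormbound$ and $\n$ and is therefore dimension-independent, which is exactly what lets the kernel case slot into the same $\Omega(1/\tol^2)$ survey-size template as the $\lasso$ and $\ridge$ rows of Table~\ref{table: sample complexity} once it is fed into Lemma~\ref{Lemma: Two-sided Generalization Bound for Noise Variance}.
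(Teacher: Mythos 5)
Your proof is correct, and it is precisely the standard argument for this result: the paper itself does not prove this lemma but imports it verbatim from \cite[Theorem~6.12]{mohri2018}, whose textbook proof is exactly your chain of linearity, Cauchy--Schwarz over the unit ball (with equality), Jensen's inequality, and cancellation of off-diagonal terms via $\E[\rademacher_i\rademacher_j]=\indicator\{i=j\}$. The only cosmetic slip is calling $\kernel(\covariate,\covariate)=\norm{\kernelfunction(\covariate)}_{\hilbertspace}^2$ the ``reproducing property''; it is simply the defining relation of the feature map, but this does not affect the argument.
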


\begin{lemma}[\textbf{Two-sided Generalization Error of Bounded Kernel Hypothesis}]\label{lemma: Kernel Noise Variance}
    Let $\kernelset$ be defined as above and suppose $\kernel(\covariate, \covariate) \leq \kernelnormbound^2$ for all $\covariate \in \xdomain$. Given a linear regression model $\optimalf: \response = \optimalf(\covariate) + \regnoise$, where $\regnoise$ is the zero-mean additive noise term with variance $\Var_{(\covariate,\response) \sim \D} (\regnoise) = \regnoisevariance$ and sample $\surveyset$ of size $\sizeS$ sampled i.i.d from a distribution $\sampD$. If Assumptions \ref{Assumption : Assumptions on the linear regression model} and \ref{Assumption: Bounded Variables} holds, then the regression model 
    \[f_{\surveyset} \gets \argmin_{f \in \kernelset} \frac{1}{\sizeS} \sum_{(\covariate,\response) \in \surveyset} (f(\covariate) - \response)^2\]
    satisfies, with probability at least $1 - \confidence$:
    \[
    \abs{\regnoisevariance - \frac{1}{\n}\sum_{(\covariate, \response) \in \surveyset} (f_\surveyset(\covariate)-\response)^2}
    \leq 8  \frac{\kernelnormbound}{\sqrt{\n}} + 12 \sqrt{\frac{\log\frac{4}{\confidence}}{2\n}}.
    \]
\end{lemma}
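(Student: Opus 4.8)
The plan is to reduce this statement to the generic concentration result already established in Lemma~\ref{Lemma: Two-sided Generalization Bound for Noise Variance}, specialized now to the kernel hypothesis class $\kernelset$, and then to substitute the kernel-specific Rademacher bound of Lemma~\ref{Lemma: PDS Kernel Rademacher Complexity Bound}. The crucial observation is that, although Lemma~\ref{Lemma: Two-sided Generalization Bound for Noise Variance} is phrased for the bounded-weight linear class $\boundedlinfunc$, its proof invokes only the generic two-sided generalization bound (Theorem~\ref{theorem: Regression Generalization Bound}), the optimality of $f_\surveyset$ for the empirical squared loss, the optimality of $\optimalf_\surveyset$ for the population loss under $\sampD$, and the exogenous-noise Assumption~\ref{Assumption : Assumptions on the linear regression model}. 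None of these ingredients is particular to linear functions, so the same argument transfers verbatim to $\kernelset$, replacing $\boundedlinfunc$ by $\kernelset$ throughout.

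First I would verify the boundedness preconditions required to apply Theorem~\ref{theorem: Regression Generalization Bound} to $\kernelset$ under the squared loss. By Assumption~\ref{Assumption: Bounded Variables} we have $\abs{\response}\leq 1$ and $f(\covariate)\leq 1$ for every $f\in\kernelset$, so the squared loss $\loss(f(\covariate),\response)=(f(\covariate)-\response)^2$ is bounded with $\lossbound = 4$ and is $\losslipschitz$-Lipschitz with $\losslipschitz\leq 4$, exactly as in the $\lasso$/$\ridge$ derivation of Lemma~\ref{lemma: two-sided generalization error l_1 and l_2}. This places $\kernelset$ squarely within the hypotheses of the concentration lemma. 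Applying that lemma then yields, with probability at least $1-\confidence$,
\[
\abs{\regnoisevariance - \frac{1}{\n}\sum_{(\covariate,\response)\in\surveyset}(f_\surveyset(\covariate)-\response)^2} \leq 8\,\widehat{\sR}_S(\kernelset) + 12\sqrt{\frac{\log\frac{4}{\confidence}}{2\n}}.
\]
Finally I would invoke Lemma~\ref{Lemma: PDS Kernel Rademacher Complexity Bound}, which under $\kernel(\covariate,\covariate)\leq\kernelnormbound^2$ gives $\widehat{\sR}_S(\kernelset)\leq \kernelnormbound/\sqrt{\n}$, and substitute to obtain the claimed bound $8\kernelnormbound/\sqrt{\n} + 12\sqrt{\log(4/\confidence)/(2\n)}$.

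The step requiring the most care is the transfer of Lemma~\ref{Lemma: Two-sided Generalization Bound for Noise Variance} to the reproducing-kernel Hilbert space setting. One must confirm that the two optimality arguments remain valid, namely that $f_\surveyset$ genuinely minimizes the empirical squared loss over $\kernelset$ and $\optimalf_\surveyset$ the population loss over $\sampD$ within $\kernelset$, and that the exogeneity Assumption~\ref{Assumption : Assumptions on the linear regression model} still forces $\E_{(\covariate,\response)\sim\D}[(\optimalf(\covariate)-\response)^2]=\regnoisevariance$ when the regressor lives in $\hilbertspace$. Both hold without modification: $\kernelset$ is a convex, norm-bounded subset of a Hilbert space so the minimizers are well defined, and the exogeneity of $\regnoise$ is a property of the data-generating process, independent of the functional form of the regressor. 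Once this is confirmed, the remainder of the proof is a direct substitution with no further estimation, so I do not anticipate any genuinely new technical difficulty beyond checking these realizability and optimality hypotheses in the RKHS.
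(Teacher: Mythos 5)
Your proposal is correct and follows essentially the same route as the paper: specialize the noise-variance concentration result of Lemma~\ref{Lemma: Two-sided Generalization Bound for Noise Variance} to $\kernelset$ and substitute the kernel Rademacher bound $\empradcomp{\surveyset}{\kernelset} \leq \kernelnormbound/\sqrt{\n}$ from Lemma~\ref{Lemma: PDS Kernel Rademacher Complexity Bound}. The only difference is one of care, not substance: you explicitly verify that the transfer from the bounded linear class to the RKHS setting is legitimate (boundedness and Lipschitzness of the squared loss, well-posedness of the two minimizers, and exogeneity being a property of the data-generating process), checks the paper's two-line proof leaves implicit.
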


\begin{proof}
    From Lemma~\ref{Lemma: PDS Kernel Rademacher Complexity Bound}, we have:
    \[
    \empradcomp{\surveyset}{\kernelset} \leq \frac{\kernelnormbound}{\sqrt{\n}}.
    \]
    Plugging this into the general bound from Lemma~\ref{Lemma: Two-sided Generalization Bound for Noise Variance} obtains the stated result.
\end{proof}

\subsection{From Noise Variance Bounds to Minimum Survey Size: Proof of Lemma~\ref{lemma: sample complexity noise variance}}\label{subsection: final proof}

We now translate the generalization error bounds derived for $\lasso, \ridge$ and $\kerna$ into sample size guarantees for estimating the noise variance, which leads to the proof of Lemma~\ref{lemma: sample complexity noise variance}.

\sampleComplexityNoiseVariance*
\begin{table}[ht!]
      \caption{Sufficient Size of Survey Data 
      }
      \centering
      \begin{tabular}{llll}
        \toprule
        Hypothesis Class($\boundedlinfunc$) & $\boundedlinfunc_1$(\lasso)     & $\boundedlinfunc_2$(\ridge)     & $ \kernelset
        $(\kerna) \\
        \midrule
        Size of $\surveyset$ ($\sbound(\regressionset)$) & $\Omega\fbrac{\frac{\log(d)} {\tol^2}}$ & $\Omega\fbrac{\frac{d}{\tol^2}}$  & $\Omega\fbrac{\frac{\kernelnormbound^2}{\tol^2}}$    \\
        \bottomrule
      \end{tabular}
    \end{table}

\begin{proof}
    From Lemmas~\ref{lemma: two-sided generalization error l_1 and l_2} and~\ref{lemma: Kernel Noise Variance}, we have that with probability at least $1 - \confidence$,
    \begin{equation}\label{ineq: all error}
       \abs{\regnoisevariance - \frac{1}{\sizeS}\sum_{(\covariate, \response) \in \surveyset} (f_\surveyset(\covariate)-\response)^2} \leq 
    \begin{cases}
         8  \sqrt{\frac{2 \log(2 d)}{\sizeS}}+ 12\sqrt{\frac{\log\frac{4}{\confidence}}{2\sizeS}} &\text{if $p = 1\;(\lasso)$}\\
        8  \sqrt{\frac{d}{\sizeS}} + 12\sqrt{\frac{\log\frac{4}{\confidence}}{2\sizeS}} &\text{if $p = 2\;(\ridge)$}\\
        8\frac{\kernelnormbound}{\sqrt{\sizeS}} + 12\sqrt{\frac{\log\frac{4}{\confidence}}{2\sizeS}} &\text{$\kerna$}
    \end{cases} 
    \end{equation}

    We now choose $\sizeS$ large enough so that each term on the right-hand side of~\eqref{ineq: all error} is at most $\tol/20$, ensuring the total bound is at most $\tol/10$.

    \noindent\textbf{For $p = 1$ ($\lasso$):} Set
    \[
    \sizeS \geq \max\left\{51200 \frac{\log(2d)}{\tol^2},\; 28800  \frac{\log(4/\confidence)}{\tol^2} \right\}
    \]
    Then,
    \begin{align*}
        \sizeS \geq 51200  \frac{\log(2d)}{\tol^2} &\implies 8 \sqrt{\frac{2 \log(2 d)}{\sizeS}} \leq \frac{\tol}{20}\\
        \sizeS \geq 28800 \frac{\log(4/\confidence)}{\tol^2} &\implies 12\sqrt{\frac{\log\frac{4}{\confidence}}{2\sizeS}} \leq \frac{\tol}{20}
    \end{align*}
    Summing the two terms in~\eqref{ineq: all error} gives a bound of at most $\tol/10$.

    \noindent\textbf{For $p = 2\; (\ridge)$:} Similarly, taking
    \[
    \sizeS \geq \max\left\{51200  \frac{d}{\tol^2},\; 28800 \frac{\log(4/\confidence)}{\tol^2} \right\}
    \]
    yields the desired bound.

    \noindent\textbf{For $\kerna$:} Taking
    \[
    \sizeS \geq \max\left\{25600 \frac{\kernelnormbound^2}{\tol^2},\; 28800 \frac{\log(4/\confidence)}{\tol^2} \right\}
    \]
    ensures that each term on the right-hand side of the kernel bound in~\eqref{ineq: all error} is at most $\tol/20$, completing the proof.
\end{proof}
\section{Correctness of \SurVerif{} and Sample Complexity: Proof of Theorem~\ref{Theorem: Tester Accepts or Rejects}}
Now, we present the proof of the correctness of our algorithm. The restatement of the theorem is given below.
\surverifyCorrectness*

\begin{proof}
The sample complexity of the algorithm can be easily seen from the algorithm. The main thing to prove is the correctness of the algorithm. We will prove the two parts separately. To start, we observe that from Lemma~\ref{lemma: sample complexity noise variance}, we have
\begin{equation}\label{ineq: variance error}
    \Pr \left[ \left| \regnoisevariance - \hat{L}_{S} \right| > 0.1\tol \right] \leq \frac{\confidence}{3}
\end{equation}

Let $\estimatevarD_t$ be the value of $\estimatevarD$ are $t$ rounds of the \textbf{while} loop.  From the Linearity of Expectation, we have
    \begin{equation}\label{eq: unbiased estimator of loss}
    \E\tbrac{\estimatevarD_t} = \sum_{(\covariate,\response) \in \surveyset_{\D}} \E_{(\covariate, \response) \sim \D}\tbrac{(f_\surveyset(\covariate) - \response)^2} = \sampleD \E_{(\covariate, \response) \sim \D}\tbrac{(f_\surveyset(\covariate) - \response)^2}
    \end{equation}

    From Assumption \ref{Assumption: Bounded Variables}, we have $(f_\surveyset(\covariatei_i) - \response)^2 \in [0, 4] $ for all $i \in [t]$. Since each of the $\sampleD$ independent variables is bounded, we now use Hoeffding's inequality to bound the deviation of $\estimatevarD_t$ from its expectation.

\textbf{Proof of \ref{subTheorem: close statement}}  In this case we have to bound the probability that \SurVerif{} outputs \textbf{REJECT} at any of the $t$-iterations of the \textbf{while} loop or in the \textbf{if} statement at the end (line~\ref{SurVerif Line 13} to \ref{SurVerif Line 16}). 

by Hoeffding's Lemma, at any round $\sampleD \in [\totalsampD]$, we have
\begin{equation}\label{ineq: back calculated deviation}
    \Pr\tbrac{\estimatevarD - \sampleD\E_{(\covariate, \response) \sim \D}\tbrac{\fbrac{f_\surveyset(\covariate)- \response}^2 } > \sqrt{\sampleD}\sqrt{2\log\fbrac{\frac{3\totalsampD}{\confidence}}}  } \leq  \exp{\fbrac{-\log\fbrac{\frac{3\totalsampD}{\confidence}}}  } = \frac{\confidence}{3\totalsampD}
\end{equation}

If $ \distF^2(f_\surveyset, \optimalf) \leq \tol$, then from Lemma~\ref{lem:dist}
 \begin{equation}\label{eq:someeq}
\E_{(\covariate, \response) \sim \D}\tbrac{\fbrac{f_\surveyset(\covariate)- \response}^2 } =   \distF(f_\surveyset, \optimalf) + \regnoisevariance \leq \epsilon + \regnoisevariance
\end{equation}

Thus if $ \distF^2(f_\surveyset, \optimalf) \leq \tol$ and $|\regnoisevariance - \hat{L}_S| \leq  0.1\tol$ then 
\begin{equation}\label{ineq:var}
\E_{(\covariate, \response) \sim \D}\tbrac{\fbrac{f_\surveyset(\covariate)- \response}^2 }   \leq 1.1\epsilon + \hat{L}_S
\end{equation}

Combining \eqref{ineq: back calculated deviation} and \eqref{ineq:var} using union bound, we get at any round $\sampleD$,
\begin{equation}\label{eq:while}
    \Pr\tbrac{\estimatevarD - \sampleD\hat{L}_S> 1.1 t\tol + \sqrt{2\sampleD\log\fbrac{\frac{3\totalsampD}{\confidence}}}} \leq \frac{\confidence}{3\totalsampD}
\end{equation}

So, if  $\distF^2(f_\surveyset, \optimalf) \leq \tol$  and $|\regnoisevariance - \hat{L}_S| \leq  0.1\tol$ then the probability that \SurVerif{} output \textbf{REJECT} in the \textbf{while} loop is at most $\delta/3$. Also, at the end of the \textbf{while} loop let $\estimatevarD_{\tau}$ be the value of $\estimatevarD$. The value of $\tau$ has been so chosen that 
 \begin{equation}\label{eq:final}
    \Pr \tbrac{\abs{\estimatevarD_{\tau} - \tau\E_{ \D}\tbrac{(f_\surveyset(\covariate) - \response)^2}} > 1.9\tau\tol} \leq 2\exp{\fbrac{-\frac{2\tau^2(1.9\tol)^2}{4\tau}}} = \frac{\confidence}{3}.
\end{equation}
Combining Equation~\eqref{eq:final} and \eqref{ineq:var} we see that if $ \distF^2(f_\surveyset, \optimalf) \leq \tol$ and $|\regnoisevariance - \hat{L}_S| \leq  0.1\tol$ then 
\begin{equation}\label{eq:final2}
\Pr \left[ \estimatevarD_{\tau} - \tau\hat{L}_S > 3\tau\tol \right] \leq \frac{\confidence}{3}.
\end{equation}

Finally, combining Equation~\eqref{eq:final2}, \eqref{eq:while} and \eqref{ineq: variance error} we have that if $\distF^2(f_\surveyset, \optimalf) \leq \tol$ then probability that \SurVerif{} outputs \textbf{REJECTS} is $\confidence$.

\ 



\textbf{Proof of \ref{subTheorem: far statement}: }
The proof of this part is simpler than the proof of \ref{subTheorem: close statement}. If $\distF^2(f_\surveyset, \optimalf) \geq 5\tol$ then we show that \SurVerif{} output \textbf{ACCEPT} in the final \textbf{if} statement is less than $\confidence$.  By Hoeffding's inequality we have the Equation~\eqref{eq:final}.
Combining Equation~\eqref{eq:final} with Lemma~\ref{lem:dist} and Equation~\eqref{ineq: variance error}  we see that if $\distF^2(f_\surveyset, \optimalf) \geq 5\tol$
then \begin{equation}\label{eq:final3}
\Pr[\estimatevarD_{\tau} - \tau\hat{L}_S \leq 3\tau\tol] \leq \frac{2\confidence}{3} < \confidence.
\end{equation}
This completes the proof.   
\end{proof}

\newpage
\section{Lower Bound for Model Reconstruction}

The task of checking if the regression coefficient for the data in $\surveyset$ is close to the regression coefficient for $\D$ can be checked directly by generating an estimate $\empcoeff$ of the optimal regression coefficient $\truecoeff$ corresponding to $\D$. However, the number of samples required for this approximate recovery problem grows with the dimension of the data. The following lemma, due to~\cite{DuchiWainwright/ArXiv/2013/ContinumFano} quantifies this dependence:

\begin{lemma}[~\cite{DuchiWainwright/ArXiv/2013/ContinumFano}]
\label{lemma: abc}
    For a regression model $\response = \inn{\truecoeff}{\covariate} +\regnoise$ with $\regnoise \sim \normal\fbrac{0,\regnoisevariance}$ for $\covariate,\truecoeff \in \R^\d$ with $\d \geq 2$, any algorithm that produces an estimate $\empcoeff$ of $\truecoeff$ using $\n$ samples must satisfy:
    \begin{align*}
        \norm{\truecoeff-\empcoeff}_2^2 \geq \frac{1}{32}\frac{d^2\regnoisevariance}{\norm{\covariatematrix}_{2,2}^2}
    \end{align*}
    In particular, If Assumption~\ref{Assumption: Bounded Variables} holds, we have:
    \begin{align*}
        \norm{\truecoeff-\empcoeff}_2^2 \geq \frac{d\regnoisevariance}{32\n}
    \end{align*}
\end{lemma}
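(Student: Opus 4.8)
The plan is to read the first displayed bound off the cited minimax lower bound and then obtain the ``in particular'' statement by controlling the design matrix under Assumption~\ref{Assumption: Bounded Variables}; the only genuinely new computation is an elementary matrix-norm estimate. Concretely, I would first invoke the continuum-Fano minimax bound of~\cite{DuchiWainwright/ArXiv/2013/ContinumFano} for fixed-design Gaussian linear regression: for the family $\response = \inn{\truecoeff}{\covariate} + \regnoise$ with $\regnoise \sim \normal\fbrac{0,\regnoisevariance}$ and $\d \geq 2$, that result lower bounds the minimax $\ell_2$-estimation risk by a constant multiple of $\regnoisevariance\,\Tr\fbrac{(\covariatematrix^\top\covariatematrix)^{-1}}$, the hypothesis $\d \geq 2$ being exactly what makes the parameter family rich enough for the packing underlying the Fano argument. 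To reach the stated form I would apply Cauchy--Schwarz (equivalently AM--HM) to the eigenvalues $\lambda_1,\dots,\lambda_\d > 0$ of $\covariatematrix^\top\covariatematrix$: from $\d^2 = \fbrac{\sum_i 1}^2 \leq \fbrac{\sum_i \lambda_i}\fbrac{\sum_i \lambda_i^{-1}}$ one gets $\Tr\fbrac{(\covariatematrix^\top\covariatematrix)^{-1}} = \sum_i \lambda_i^{-1} \geq \d^2 / \sum_i \lambda_i = \d^2/\norm{\covariatematrix}_{2,2}^2$, using that $\sum_i \lambda_i = \Tr\fbrac{\covariatematrix^\top\covariatematrix}$ is the squared Frobenius norm $\norm{\covariatematrix}_{2,2}^2$. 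Chaining these yields $\norm{\truecoeff - \empcoeff}_2^2 \geq \frac{1}{32}\,\d^2\regnoisevariance/\norm{\covariatematrix}_{2,2}^2$, with the constant carried over from the cited estimate (read, as usual, as a worst-case bound over $\truecoeff$).

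For the second inequality the remaining task is simply to upper bound $\norm{\covariatematrix}_{2,2}^2$. Since $\covariatematrix$ stacks the $\n$ sampled covariate vectors, each satisfying $\norm{\covariate}_\infty \leq 1$ by Assumption~\ref{Assumption: Bounded Variables}, every one of its $\n\d$ entries has absolute value at most $1$, so $\norm{\covariatematrix}_{2,2}^2 = \sum_{i,j}\covariatematrix_{ij}^2 \leq \n\d$. Substituting into the first bound gives $\norm{\truecoeff - \empcoeff}_2^2 \geq \frac{1}{32}\,\d^2\regnoisevariance/(\n\d) = \d\regnoisevariance/(32\n)$, as claimed.

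Because the deep content lives inside the cited bound, I would not re-prove it; the points requiring care on my side are small but real. I must confirm that $\norm{\cdot}_{2,2}$ denotes the Frobenius norm here---consistent with its use in Lemma~\ref{lemma:Rademacher Complexity of linear hypothesis}---so that the identity $\norm{\covariatematrix}_{2,2}^2 = \sum_i \lambda_i = \Tr\fbrac{\covariatematrix^\top\covariatematrix}$ is legitimate, and I must apply Cauchy--Schwarz in the direction that \emph{lower} bounds $\sum_i \lambda_i^{-1}$, so that the inequality still produces a lower bound on the risk (taking $\n \geq \d$ so the Gram matrix is invertible; otherwise the trace is infinite and the bound holds trivially). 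Everything then reduces to the entrywise boundedness count, which is the only place Assumption~\ref{Assumption: Bounded Variables} enters.
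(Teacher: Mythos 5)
Your proposal is correct and is, at bottom, the paper's own approach: the paper gives no proof of this lemma at all --- the first display is quoted directly from \cite{DuchiWainwright/ArXiv/2013/ContinumFano}, and the ``in particular'' part is left implicit, being exactly your entrywise estimate (every entry of $\covariatematrix$ has absolute value at most $1$ under Assumption~\ref{Assumption: Bounded Variables}, so $\norm{\covariatematrix}_{2,2}^2 \leq \n\d$, whence $\frac{1}{32}\,\d^2\regnoisevariance/\fbrac{\n\d} = \d\regnoisevariance/\fbrac{32\n}$). The one place you genuinely diverge is in how you reconstruct the cited bound: you posit that the citation gives a bound proportional to $\regnoisevariance\Tr\fbrac{\fbrac{\covariatematrix^T\covariatematrix}^{-1}}$ and then pass to the stated form via AM--HM, i.e. $\Tr\fbrac{\fbrac{\covariatematrix^T\covariatematrix}^{-1}} \geq \d^2/\Tr\fbrac{\covariatematrix^T\covariatematrix} = \d^2/\norm{\covariatematrix}_{2,2}^2$. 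That chain is valid and correctly oriented (a lower bound on the inverse-Gram trace transfers to a lower bound on the risk), but it is a superfluous detour with a small cost: it requires the Gram matrix to be invertible (your degenerate-case caveat covers this), and it attributes to the citation a different statement than the one the paper actually quotes --- the continuum-Fano argument for fixed-design Gaussian regression yields the Frobenius-norm form with the constant $1/32$ directly, since averaging the KL divergence over a sphere of parameters produces $\norm{\covariatematrix}_{2,2}^2/\d$ rather than an inverse Gram trace. Your two interpretive checks --- reading the inequality as a worst-case (minimax) bound over $\truecoeff$, and reading $\norm{\cdot}_{2,2}$ as the Frobenius norm consistently with Lemma~\ref{lemma:Rademacher Complexity of linear hypothesis} --- both match the paper's usage.
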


We now provide the proof for Lemma~\ref{Lemma: Lower Bound for Reconstruction}, restated here.

\LBReconst*

\begin{proof}
Let $\errcoeff \coloneqq \truecoeff - \empcoeff$. Then for any $\covariate \in \R^d$, the difference between the true and estimated predictions is:
    \[
    \inn{\truecoeff}{\covariate}-\inn{\empcoeff}{\covariate} = \inn{\truecoeff-\empcoeff}{\covariate} = \inn{\errcoeff}{\covariate}
    \]
    Therefore, 
    
    \begin{align*}
        \E_{\covariate\sim\D}\tbrac{\inn{\errcoeff}{\covariate}^2} &=\E_{\covariate\sim\D}\tbrac{\errcoeff^T\covariate\covariate^T\errcoeff}\\
        &=\errcoeff^T\E_{\covariate\sim\D}\tbrac{\covariate\covariate^T}\errcoeff\\
        &\geq\mineigen\fbrac{\E_{\covariate\sim\D}\tbrac{\covariate\covariate^T}}\norm{\errcoeff}^2_2\\
        &\geq\mineigen\fbrac{\E_{\covariate\sim\D}\tbrac{\covariate\covariate^T}}\cdot\frac{d\regnoisevariance}{32\n} &&\text{From Lemma~\ref{lemma: abc}}\\
        &=\frac{\mineigen\fbrac{\mathrm{Cov}\fbrac{\covariate}} \d \regnoisesd^2}{32\n}
    \end{align*}
    
    Now, by setting the distance $\sqrt{\E_{\covariate\sim\D}\tbrac{\inn{\errcoeff}{\covariate}^2}}$ to be less than or equal to $\epsilon$, we get
    
    $$\n \geq \frac{\mineigen \d \regnoisesd^2}{32\epsilon^2}\,.$$
\end{proof}

Hence, we use the loss to identify the model distance between these two quantities. The loss of the two regressions follow two gaussians with different means and same variance. Here, we state a lower bound on the difference of means in this setup.






\section{Experimental Results}

In this section, we detail the outcomes of our experiments described in Section~\ref{section:experiment}. In Table~\ref{tab:SynthRidge}~and~\ref{tab:SynthLasso}, we list the outcomes of \SurVerif{} on the synthetic dataset w.r.t. the $\ridgeset$ and $\lassoset$ model classes, respectively. In Table~\ref{tab:ACSRidge}~and~\ref{tab:ACSLasso}, we list the outcomes of \SurVerif{} on \acsincome{} dataset w.r.t. the $\ridgeset$ and $\lassoset$ model classes, respectively.  As stated in Section~\ref{section:experiment}, we have run 50 trials for all parameter choices, i.e. each row in the tables. The $\confidence$ is set to $0.01$ throughout. We also reproduce the figures here for ease of reading. 
The \red{red} and \blue{blue} lines represent the values of the \red{red} and \blue{blue} lines of their respective plots as defined in Section~\ref{section:experiment}.

\begin{figure*}[ht!]
\centering%
\begin{minipage}{\textwidth}
\centering
\includegraphics[scale=.5, width=.8\linewidth]{Figures/SynthRidge.png}\vspace*{-.5em}
\caption{\small{Acceptance rate of \SurVerif{} w.r.t. model class $\ridgeset$ on Synthetic Data vs. change in $\mu$ (over 50 runs) for $\delta = 0.1$ and $\epsilon = 0.05$.}}\label{Fig: SynthRidge2}
\end{minipage}\hfill
\begin{minipage}{\textwidth}
\centering
\includegraphics[scale=.5, width=.8\linewidth]{Figures/SynthLasso.png}\vspace*{-.5em}
\caption{\small{Acceptance rate of \SurVerif{} w.r.t. model class $\lassoset$ on Synthetic Data vs. change in $\mu$ (over 50 runs) for $\delta = 0.1$ and $\epsilon = 0.05$.}}\label{Fig: SynthLASSO2}
\end{minipage}
\end{figure*}

\begin{table}[ht!]
    \centering
    \caption{Performance of \SurVerif{} on Synthetic Data w.r.t. $\ridgeset$ [Figure~\ref{Fig: SynthRidge2}]}
    \begin{tabular}{cccccc}
        \hline
        $\approxerror$ & \distname & Acceptance Rate & \#Avg. Samples Used & $\totalsampD$ & Early Rejection Ratio \\
        \hline
        0.05   & 0.04  & 1     & 818  & 818   & 1     \\
        0.05   & 0.04  & 1     & 818  & 818   & 1     \\
        0.05   & 0.05  & 1     & 818  & 818   & 1     \\
        \arrayrulecolor{red}\hline
        0.05   & 0.06  & 1     & 818  & 818   & 1     \\
        0.05   & 0.07  & 1     & 818  & 818   & 1     \\
        0.05   & 0.09  & 1     & 818  & 818   & 1     \\
        0.05   & 0.11  & 0.96  & 818  & 818   & 1     \\
        0.05   & 0.14  & 0.74  & 818  & 818   & 1     \\
        0.05   & 0.18  & 0.06  & 811  & 818   & 0.991 \\
        0.05   & 0.21  & 0     & 758  & 818   & 0.927 \\
        0.05   & 0.25  & 0     & 575  & 818   & 0.703 \\
        \arrayrulecolor{blue}\hline
        0.05   & 0.29  & 0     & 390  & 818   & 0.477 \\
        0.05   & 0.33  & 0     & 277  & 818   & 0.339 \\
        0.05   & 0.39  & 0     & 197  & 818   & 0.241 \\
        0.05   & 0.45  & 0     & 135  & 818   & 0.165 \\
        0.05   & 0.51  & 0     & 100  & 818   & 0.122 \\
        0.05   & 0.57  & 0     & 83   & 818   & 0.101 \\
        0.05   & 0.63  & 0     & 64   & 818   & 0.079 \\
        0.05   & 0.70  & 0     & 52   & 818   & 0.063 \\
        0.05   & 0.78  & 0     & 44   & 818   & 0.053 \\
        0.05   & 0.86  & 0     & 28   & 818   & 0.035 \\
        0.05   & 0.94  & 0     & 29   & 818   & 0.035 \\
        0.05   & 1.02  & 0     & 27   & 818   & 0.034 \\
        0.05   & 1.12  & 0     & 19   & 818   & 0.024 \\
        0.05   & 1.20  & 0     & 18   & 818   & 0.022 \\
        0.05   & 1.31  & 0     & 16   & 818   & 0.019 \\
        0.05   & 1.43  & 0     & 13   & 818   & 0.016 \\
        0.05   & 1.54  & 0     & 10   & 818   & 0.013 \\
        0.05   & 1.62  & 0     & 10   & 818   & 0.012 \\
        0.05   & 1.76  & 0     & 10   & 818   & 0.012 \\
        0.05   & 1.88  & 0     & 9    & 818   & 0.011 \\
        \arrayrulecolor{black}\hline
    \end{tabular}
    \label{tab:SynthRidge}
\end{table}

\begin{table}[ht!]
    \centering
    \caption{Performance of \SurVerif{} on Synthetic Data w.r.t. $\lassoset$ [Figure~\ref{Fig: SynthLASSO2}]}
    \begin{tabular}{cccccc}
        \hline
        $\approxerror$ & \distname & Acceptance Rate & \#Avg. Samples Used & $\totalsampD$ & Early Rejection Ratio \\
        \hline
        0.05  & 0.03  & 1     & 818  & 818   & 1     \\
        0.05  & 0.04  & 1     & 818  & 818   & 1     \\
        0.05  & 0.04  & 1     & 818  & 818   & 1     \\
        0.05  & 0.05  & 1     & 818  & 818   & 1     \\
        \arrayrulecolor{red}\hline
        0.05  & 0.07  & 1     & 818  & 818   & 1     \\
        0.05  & 0.09  & 1     & 818  & 818   & 1     \\
        0.05  & 0.11  & 1     & 818  & 818   & 1     \\
        0.05  & 0.13  & 0.96  & 818  & 818   & 1     \\
        0.05  & 0.16  & 0.66  & 818  & 818   & 1     \\
        0.05  & 0.20  & 0.1   & 809  & 818   & 0.989 \\
        0.05  & 0.24  & 0     & 726  & 818   & 0.888 \\
                \arrayrulecolor{blue}\hline
        0.05  & 0.28  & 0     & 506  & 818   & 0.618 \\
        0.05  & 0.33  & 0     & 342  & 818   & 0.418 \\
        0.05  & 0.38  & 0     & 225  & 818   & 0.276 \\
        0.05  & 0.43  & 0     & 177  & 818   & 0.216 \\
        0.05  & 0.50  & 0     & 122  & 818   & 0.150 \\
        0.05  & 0.56  & 0     & 89   & 818   & 0.109 \\
        0.05  & 0.62  & 0     & 68   & 818   & 0.083 \\
        0.05  & 0.70  & 0     & 54   & 818   & 0.066 \\
        0.05  & 0.78  & 0     & 51   & 818   & 0.063 \\
        0.05  & 0.85  & 0     & 36   & 818   & 0.044 \\
        0.05  & 0.93  & 0     & 27   & 818   & 0.032 \\
        0.05  & 1.02  & 0     & 21   & 818   & 0.026 \\
        0.05  & 1.10  & 0     & 21   & 818   & 0.025 \\
        0.05  & 1.21  & 0     & 16   & 818   & 0.020 \\
        0.05  & 1.31  & 0     & 14   & 818   & 0.017 \\
        0.05  & 1.43  & 0     & 14   & 818   & 0.017 \\
        0.05  & 1.51  & 0     & 13   & 818   & 0.016 \\
        0.05  & 1.63  & 0     & 11   & 818   & 0.013 \\
        0.05  & 1.75  & 0     & 9    & 818   & 0.011 \\
        0.05  & 1.87  & 0     & 9    & 818   & 0.011 \\
                \arrayrulecolor{black}\hline
    \end{tabular}
    \label{tab:SynthLasso}
\end{table}

\clearpage

\begin{figure*}[ht!]
\begin{minipage}{\textwidth}
\centering
\includegraphics[scale=.5, width=.8\linewidth]
{Figures/ACSRidge.png}\vspace*{-.5em}
\caption{\small{Acceptance rate of \SurVerif{} w.r.t. model class $\ridgeset$ on \acsincome~ (over 50 runs) for $\delta = 0.1$ and varying range of $\epsilon$.}}\label{Fig: ACSRidge2}
\end{minipage}
\end{figure*}

\begin{figure*}[ht!]
\begin{minipage}{\textwidth}
\centering
\includegraphics[scale=.5, width=.8\linewidth]{Figures/ACSLasso.png}\vspace*{-.5em}
\caption{\small{Acceptance rate of \SurVerif{} w.r.t. model class $\lassoset$ on \acsincome~ (over 50 runs) for $\delta = 0.1$ and varying range of $\epsilon$.}}\label{Fig: ACSLasso2}
\end{minipage}
\end{figure*}

\begin{table}[ht!]
    \centering
    \caption{Performance of \SurVerif{} on \acsincome{} w.r.t. $\ridgeset$ [Figure~\ref{Fig: ACSRidge2}]}
    \begin{tabular}{cccccc}
        \hline
        $\approxerror$ & \distname & Acceptance Rate & \#Avg. Samples Used & $\totalsampD$ & Early Rejection Ratio\\
        \hline
        0.05    & 0.0258 & 1    & 2195   & 2195     & 1.000  \\
        0.04    & 0.0258 & 0.98 & 3361   & 3430     & 0.980  \\
        0.03    & 0.0258 & 0.98 & 5975   & 6097     & 0.980  \\
        \arrayrulecolor{red}\hline
        0.02    & 0.0258 & 1    & 13717  & 13717    & 1.000  \\
        0.0175  & 0.0258 & 1    & 17916  & 17916    & 1.000  \\
        0.015   & 0.0258 & 0.94 & 24386  & 24386    & 1.000  \\
        0.0125  & 0.0258 & 0.52 & 35115  & 35115    & 0.933  \\
        0.01    & 0.0258 & 0.12 & 53222  & 54867    & 0.858  \\
        0.009   & 0.0258 & 0    & 57022  & 67737    & 0.571  \\
        0.0075  & 0.0258 & 0    & 61388  & 97542    & 0.326  \\
        0.006   & 0.0258 & 0    & 52710  & 152409   & 0.200  \\
        \arrayrulecolor{blue}\hline
        0.005   & 0.0258 & 0    & 47414  & 219468   & 0.136  \\
        0.004   & 0.0258 & 0    & 48322  & 342919   & 0.072  \\
        0.003   & 0.0258 & 0    & 45773  & 609633   & 0.031  \\
        0.002   & 0.0258 & 0    & 44064  & 1371675  & 0.017  \\
        0.0015  & 0.0258 & 0    & 47085  & 2438532  & 0.008  \\
        0.001   & 0.0258 & 0    & 44301  & 5486697  & 0.007  \\
        0.0005  & 0.0258 & 0    & 42207  & 21946787 & 0.002  \\
       \arrayrulecolor{black}\hline
    \end{tabular}
    \label{tab:ACSRidge}
\end{table}

\begin{table}[ht!]
    \centering
    \caption{Performance of \SurVerif{} on \acsincome{} w.r.t. $\lassoset$ [Figure~\ref{Fig: ACSLasso2}]}
    \begin{tabular}{cccccc}
        \hline
        $\approxerror$ & \distname & Acceptance Rate & \#Avg. Samples Used & $\totalsampD$  & Early Rejection Ratio \\
        \hline
        0.05    & 0.0259 & 1    & 2285   & 2285     & 1.000 \\
        0.04    & 0.0259 & 1    & 3570   & 3570     & 1.000 \\
        0.03    & 0.0259 & 0.98 & 6219   & 6346     & 0.980 \\
         \arrayrulecolor{red}\hline
        0.02    & 0.0259 & 0.98 & 14279  & 14279    & 1.000 \\
        0.0175  & 0.0259 & 1    & 18650  & 18650    & 1.000 \\
        0.015   & 0.0259 & 0.98 & 25384  & 25384    & 1.000 \\
        0.0125  & 0.0259 & 0.76 & 36551  & 36553    & 1.000 \\
        0.01    & 0.0259 & 0.08 & 53262  & 57114    & 0.933 \\
        0.009   & 0.0259 & 0    & 60467  & 70511    & 0.858 \\
        0.0075  & 0.0259 & 0    & 57999  & 101535   & 0.571 \\
        0.006   & 0.0259 & 0    & 51660  & 158649   & 0.326 \\
        \arrayrulecolor{blue}\hline
        0.005   & 0.0259 & 0    & 45721  & 228454   & 0.200 \\
        0.004   & 0.0259 & 0    & 48408  & 356959   & 0.136 \\
        0.003   & 0.0259 & 0    & 45663  & 634593   & 0.072 \\
        0.002   & 0.0259 & 0    & 43893  & 1427833  & 0.031 \\
        0.0015  & 0.0259 & 0    & 42993  & 2538370  & 0.017 \\
        0.001   & 0.0259 & 0    & 41880  & 5711332  & 0.007 \\
        0.0005  & 0.0259 & 0    & 43961  & 22845325 & 0.002 \\
        \arrayrulecolor{black}\hline
    \end{tabular}
    \label{tab:ACSLasso}
\end{table}



\end{document}